\date{}
\renewenvironment{abstract}
  {\noindent\small 
  {\noindent{\large\textbf\abstractname. }
  \thispagestyle{plain}
  }}
  {}
\newcommand{\argmin}{\operatornamewithlimits{argmin}}
\newtheorem{theorem}{Theorem}[section]
\newtheorem{lemma}[theorem]{Lemma}
\newtheorem{corollary}[theorem]{Corollary}
\newcommand*\E[1]{\mathbb{E}\left[#1\right]}
\newcommand*\Ep[2]{\mathbb{E}_{#1}\left[#2\right]}
\renewcommand*\d[0]{\text{d}}
\newcommand\numberthis{\addtocounter{equation}{1}\tag{\theequation}}
\newcommand*\eu[1]{\left\| #1\right\|}
\renewcommand*\d[0]{\delta}
\newcommand*\xt[0]{\tilde{x}}
\newcommand*\vt[0]{\tilde{v}}
\newcommand*\pt[0]{\tilde{p}}
\newcommand*\Phit[0]{\tilde{\Phi}}
\def\lv{\lVert}
\def\rv{\rVert}
\def\ke{\mathcal{E}_K}
\DeclarePairedDelimiter\floor{\lfloor}{\rfloor}
\begin{document}

\title{On the Theory of Variance Reduction for Stochastic Gradient Monte Carlo}

\author{Niladri Chatterji\thanks{niladri.chatterji@berkeley.edu}}
\author{Nicolas Flammarion\thanks{flammarion@berkeley.edu}}
\author{Yi-An Ma\thanks{yianma@berkeley.edu}}
\author{Peter Bartlett\thanks{peter@berkeley.edu}}
\author{Michael Jordan\thanks{jordan@cs.berkeley.edu}}
\affil{University of California, Berkeley}

\maketitle




\begin{abstract}
We provide convergence guarantees in Wasserstein distance for a variety of variance-reduction methods: SAGA Langevin diffusion, SVRG Langevin diffusion and control-variate underdamped Langevin diffusion. We analyze these methods under a uniform set of assumptions on the log-posterior distribution, assuming it to be smooth, strongly convex and Hessian Lipschitz. This is achieved by a new proof technique combining ideas from finite-sum optimization and the analysis of sampling methods. Our sharp theoretical bounds allow us to identify regimes of interest where each method performs better than the others. Our theory is verified with experiments on real-world and synthetic datasets.

\end{abstract}

\section{Introduction}
One of the major themes in machine learning is the use of stochasticity
to obtain procedures that are computationally efficient and statistically
calibrated.  There are two very different ways in which this theme has
played out---one frequentist and one Bayesian.  On the frequentist side,
gradient-based optimization procedures are widely used to obtain point estimates and
point predictions, and stochasticity is used to bring down the computational
cost by replacing expensive full-gradient computations with unbiased
stochastic-gradient computations.  On the Bayesian side, posterior
distributions provide information about uncertainty in estimates and
predictions, and stochasticity is used to represent those distributions
in the form of Monte Carlo (MC) samples.  Despite the different conceptual
frameworks, there are overlapping methodological issues.  In particular,
Monte Carlo sampling must move from an out-of-equilibrium configuration
towards the posterior distribution and must do so quickly, and thus
optimization ideas are relevant.  Frequentist inference often involves
sampling and resampling, so that efficient approaches to Monte Carlo
sampling are relevant.

Variance control has been a particularly interesting point of contact
between the two frameworks.  In particular, there is a subtlety in the
use of stochastic gradients for optimization: Although the per-iteration
cost is significantly lower by using stochastic gradients; extra variance
is introduced into the sampling procedure at every step so that the total
number of iterations is required to be larger.  A natural question is
whether there is a theoretically-sound way to manage this tradeoff.
This question has been answered affirmatively in a seminal line of
research~\citep{schmidt2017minimizing, ShaZha13,zhanjon13} on
variance-controlled stochastic optimization. Theoretically these
methods enjoy the best of the gradient and stochastic gradient
worlds---they converge at the fast rate of full gradient methods
while making use of cheaply-computed stochastic gradients.

A parallel line of research has ensued on the Bayesian side in a
Monte Carlo sampling framework.  In particular, stochastic-gradient
Markov chain Monte Carlo (SG-MCMC) algorithms have been proposed in
which approximations to Langevin diffusions make use of stochastic
gradients instead of full gradients~\cite{SGLD}.  There have been
a number of theoretical results that establish mixing time bounds for
such Langevin-based sampling methods when the posterior distribution is well
behaved \citep{Dal14,durmus2017,cheng2017convergence,dalalyan2017user}.
Such results have set the stage for the investigation of variance
control within the SG-MCMC framework~\cite{dubey2016variance,Dur16,bierkens2016zig,
baker2017control,nagapetyan2017true,chen2017convergence}.  Currently,
however, the results of these investigations are inconclusive.
\cite{dubey2016variance} obtain mixing time guarantees for
SAGA Langevin diffusion and SVRG Langevin diffusion (two particular
variance-reduced sampling methods) under the strong assumption that
the log-posterior has the norm of its gradients  uniformly
bounded by a constant.  Another approach that has been explored
involves calculating the mode of the log posterior to construct a
control variate for the gradient estimate~\citep{baker2017control,
nagapetyan2017true}, an approach that makes rather different assumptions.
Indeed, the experimental results from these two lines of work are
contradictory, reflecting the differences in assumptions.

In this work we aim to provide a unified perspective on variance
control for SG-MCMC.  Critically, we identify two regimes: we show
that when the target accuracy is small, variance-reduction methods
are effective, but when the target accuracy is not small (a low-fidelity
estimate of the posterior suffices), stochastic gradient Langevin
diffusion (SGLD) performs better.  These results are obtained
via new theoretical techniques for studying stochastic gradient
MC algorithms with variance reduction.  We improve upon the
techniques used to analyze Langevin Diffusion (LD) and SGLD
\cite{Dal14,dalalyan2017user,durmus2017} to establish
non-asymptotic rates of convergence (in Wasserstein distance) for
variance-reduced methods. We also apply control-variate techniques
to underdamped Langevin MCMC \citep{cheng2017underdamped}, a second-order
diffusion process (CV-ULD). Inspired by proof techniques for
variance-reduction methods for stochastic optimization, we design a Lyapunov
function to track the progress of convergence and we thereby obtain
better bounds on the convergence rate. We make the relatively weak
assumption that the log posteriors are Lipschitz smooth, strongly
convex and Hessian Lipschitz---a relaxation of the strong assumption
that the gradient of the log posteriors are globally bounded.

As an example of our results, we are able to show that when using a
variance-reduction method $\tilde{\mathcal{O}}(N+ \sqrt{d}/\epsilon)$
steps are required to obtain an accuracy of $\epsilon$, versus the
$\tilde{\mathcal{O}}(d/\epsilon^2)$ iterations required for SGLD, where
$d$ is the dimension of the data and $N$ is the total number of samples.
As we will argue, results of this kind support our convention that
when the target accuracy $\epsilon$ is \emph{small}, variance-reduction
methods outperform SGLD.
\subsection*{Main Contributions}
We provide sharp convergence guarantees
for a variety of variance-reduction methods---SAGA-LD, SVRG-LD, and
CV-ULD under the \emph{same set} of \emph{realistic} assumptions (see
Sec.~\ref{sec:convergenceresults}). This is achieved by a new
proof technique that yiels bounds on Wasserstein distance.
Our bounds allow us to identify windows of interest where each
method performs better than the others (see Fig.~\ref{fig:regime}).
The theory is verified with experiments on real-world datasets.
We also test the effects of breaking the central limit theorem
using synthetic data, and find that in this regime variance-reduced
methods fare far better than SGLD (see Sec.~\ref{sec:experiments}).

\section{Preliminaries}
Throughout the paper we aim to make inference on a vector of parameters $\theta \in \mathbb{R}^d$. The resulting posterior density is then $p(\theta \lvert \mathbf{z}) \propto p(\theta) \prod_{i=1}^N p(z_i \lvert \theta)$. For brevity we write $f_i(\theta) = -\log(p(z_i \lvert \theta))$, for $i\in \{1,\ldots,N\}$, $f_0(x) = -\log(p(\theta))$ and $f(\theta) = -\log(p(\theta\lvert \mathbf{z}))$. Moving forward we state all results in terms of general sum-decomposable functions $f$ (see Assumption \ref{ass:decom}), however it is useful to keep the above example in mind as the main motivating example. We let $\lVert v \rVert_2$ denote the Euclidean norm, for a vector $v \in \mathbb{R}^d$.  For a matrix $A$ we let $\lVert A \rVert$ denote its spectral norm and let $\lv A \rv_{F}$ denote its Frobenius norm.

\textbf{Assumptions on $f$}: We make the following assumptions about the potential function $f : \mathbb{R}^d \mapsto \mathbb{R}$. \vspace{-10pt}
\begin{enumerate}[label=(A{\arabic*})]
    \item \label{ass:decom} \textbf{Sum-decomposable:} The function $f$ is decomposable, $f(x) = \sum_{i=1}^{N} f_i(x)$.
    \item \label{ass:smoothness} \textbf{Smoothness:} The functions $f_i$ are twice continuously-differentiable on $\mathbb{R}^d$ and have Lipschitz-continuous gradients; that is, there exist positive constants $\tilde{M} >0$ such that for all $x,y \in \mathbb{R}^d$ and for all $i \in \{1,\ldots,N\}$ we have, $ \lVert \nabla f_i(x) - \nabla f_i(y) \rVert_2 \le \tilde{M} \lVert x - y \rVert_2. $ We accordingly characterize the smoothness of $f$ with the parameter $M: = N \tilde{M}$.
   \item \label{ass:strongconvexity}\textbf{Strong Convexity:} $f$ is $m$-strongly convex; that is, there exists a constant $m>0$ such that for all $x,y \in \mathbb{R}^d$, $
   f(y) \ge f(x) + \langle \nabla f(x),y-x \rangle + \frac{m}{2}\lVert x-y \rVert_2^2.
   $
    We also define the condition number $\kappa := M/m$.
   \item \label{ass:HessianLipschitz}\textbf{Hessian Lipschitz:} We assume that the function $f$ is Hessian Lipschitz; that is, there exists a constant $L>0$ such that, $\lVert \nabla^2 f(x) - \nabla^2 f(y) \rVert \le L \lVert x - y \rVert_2$ for every $x,y \in \mathbb{R}^d$.
\end{enumerate}
\vspace{-7pt}
It is worth noting that $M,m$ and $L$ can all scale with $N$.

\textbf{Wasserstein Distance:} We define the Wasserstein distance between a pair of probability measures ($\mu,\nu$) as follows:
\vspace{-6pt}
\begin{align*}
W^2_2(\mu,\nu) :=\inf_{\zeta\in\Gamma(\mu,\nu)} \int \lVert x-y\rVert_2^2 d\zeta(x,y),
\vspace{-6pt}
\end{align*}
where $\Gamma(\mu,\nu)$ denotes the set of joint distributions such that the first set of coordinates has marginal $\mu$ and the second set has marginal $\nu$. (See Appendix \ref{app:wass} for a more formal definition of $W_2$).

\textbf{Langevin Diffusion:} The classical overdamped Langevin diffusion is based on the following It\^{o} Stochastic Differential Equation (SDE):
\vspace{-6pt}
\begin{align}\label{eq:contlangevindynamics}
d x_t = -\nabla f(x_t) dt + \sqrt{2}dB_t,
\vspace{-6pt}
\end{align}
where $x_t \in \mathbb{R}^d$ and $B_t$ represents standard Brownian motion \citep[see, e.g.,][]{morters2010brownian}. It can be shown that under mild conditions like $\exp(-f(x)) \in L^{1}$ (absolutely integrable) the invariant distribution of Eq.~\eqref{eq:contlangevindynamics} is given by $p^*(x) \propto \exp(-f(x))$. This fact motivates the Langevin MCMC algorithm where given access to full gradients it is possible to efficiently simulate the discretization,
\vspace{-6pt}
\begin{align} \label{eq:disclangevin1}
d\tilde{x}_t = -\nabla f(x_k)dt + \sqrt{2}dB_t,
\vspace{-6pt}
\end{align}
where the gradient is evaluated at a fixed point $x_k$ (the previous iterate in the chain) and the SDE \eqref{eq:disclangevin1} is integrated up to time $\delta$ (the step size) to obtain $$x_{k+1} = x_k - \delta\nabla f(x_k) + \sqrt{2\delta}\xi_k,$$ with $\xi_k \sim N(0,I_{d\times d})$. \cite{SGLD} proposed an alternative algorithm---Stochastic Gradient Langevin Diffusion (SGLD)---for sampling from sum-decomposable function where the chain is updated by integrating the SDE:
\vspace{-6pt}
\begin{align}\label{eq:SGLD}
d\tilde{x}_t = - g_k dt + \sqrt{2} dB_t,
\vspace{-6pt}
\end{align}
and where $g_k =\frac{N}{n} \sum_{i \in S} \nabla f_i(x_k)$ is an unbiased estimate of the gradient at $x_k$. The attractive property of this algorithm is that it is computationally tractable for large datasets (when $N$ is large). At a high level the variance reduction schemes that we study in this paper replace the simple gradient estimate in Eq.~\eqref{eq:SGLD} (and other variants of Langevin MCMC) with more sophisticated unbiased estimates that have lower variance.

\section{Variance Reduction Techniques}
In the seminal work of \cite{schmidt2017minimizing} and \cite{zhanjon13}, it was observed that the variance of Stochastic Gradient Descent (SGD) when applied to optimizing sum-decomposable strongly convex functions decreases to zero only if the step-size also decays at a suitable rate. This prevents the algorithm from converging at a linear rate, as opposed to methods like batch gradient descent that use the entire gradient at each step. They introduced and analyzed different gradient estimates with lower variance. Subsequently these methods were also adapted to Monte Carlo sampling by \cite{dubey2016variance,nagapetyan2017true,baker2017control}. These methods use information from previous iterates and are no longer Markovian. In this section we describe several variants of these methods. 
\subsection{SAGA Langevin MC}
We present a sampling algorithm based on SAGA of \cite{DefBacLac14} which was developed as a modification of SAG  by \cite{schmidt2017minimizing}. In SAGA, which is presented as Algorithm \ref{alg:saga}, an approximation of the gradient of each function $f_i$ is stored as $\{g_k^i\}_{i=1}^N$ and is iteratively updated in order to build an estimate with reduced variance. At each step of the algorithm, if the function $f_i$ is selected in the mini-batch $S$, then the value of the gradient approximation is updated by setting $g^i_{k+1}=\nabla f_i(x_k)$. Otherwise the gradient of $f_i$ is approximated by the previous value $g_k^i$. Overall we obtain the following unbiased estimate of the gradient:
\vspace{-5pt}
\begin{equation}\label{eq:sagagradientupdate}
g_k=\sum_{i=1}^n g_k^i+\frac{N}{n}\sum_{i\in S} (\nabla f_i(x_k)-g_k^i).
\vspace{-5pt}
\end{equation}
In Algorithm \ref{alg:saga} we form this gradient estimate and plug it into the classic Langevin MCMC method driven by the SDE \eqref{eq:SGLD}. 
Computationally this algorithm is efficient; essentially it enjoys the oracle query complexity (number of calls to the gradient oracle per iteration) of methods like SGLD but due to the reduced variance of the gradient estimator it converges almost as quickly (in terms of number of iterations) to the posterior distribution as methods such as Langevin MCMC that use the complete gradient at every step. We prove a novel non-asymptotic convergence result in Wasserstein distance for Algorithm \ref{alg:saga} in the next section that formalizes this intuition.

The principal downside of this method is its memory requirement. It is necessary to store the gradient estimator for each individual $f_i$, which essentially means that in the worst case the memory complexity scales as $\mathcal{O}(Nd)$. However in many interesting applications, including some of those considered in the experiments in Sec.~\ref{sec:exp}, the memory costs scale only as $\mathcal{O}(N)$ since each function $f_i$ depends on a linear function in $x$ and therefore the gradient $\nabla f_i$ is just a re-weighting of the single data point $z_i$.
\label{sec:saga}
\setlength{\textfloatsep}{10pt}
\begin{algorithm}[t]
\caption{SAGA Langevin MCMC \label{alg:saga}}
\begin{algorithmic}
   \STATE {\bfseries Input:} Gradient oracles $\{\nabla f_i(\cdot)\}_{i=0}^N$, step size $\delta$, batch size $n$, initial point $x_0 \in \mathbb{R}^d$.
   \STATE Initialize $\{g_0^i=\nabla f_i(x_0)\}_{i=1}^N$.
   \FOR {$k=1,\ldots,T$}
   \STATE Draw $S \subset \{0,\ldots,N\}: \lvert S \rvert = n$ uniformly with replacement
   \STATE Sample $\xi_k \sim N(0, I_{d\times d})$
   \STATE Update $g_k$ using \eqref{eq:sagagradientupdate}
      \STATE Update $x_{k+1}\leftarrow x_k - \delta g_k + \sqrt{2\delta}\xi_k$.
   \STATE Update  $\{g_k^i\}_{i=1}^N$:   for $i\in S$ set  $g^i_{k+1}=\nabla f_i(x_k)$, for $i \in S^{c}$, set  $g^i_{k+1}=g^i_{k}$
   \ENDFOR
   \STATE{\bfseries Output:} Iterates $\{x_k\}_{k=1}^T$.
\end{algorithmic}
\end{algorithm}

\subsection{SVRG Langevin MC}
The next algorithm we explore is based on the SVRG method of \cite{zhanjon13} which takes its roots in work of \cite{greensmith2004variance}. The main idea behind SVRG is to build an auxiliary sequence $\tilde x$ at which the full gradient is calculated and used as a reference in building a gradient estimate: $\nabla f_i(x)-\nabla f_i(\tilde x) + \nabla f(\tilde x)$. Again this estimate is unbiased under the uniform choice of $i$. While using this gradient estimate to optimize sum-decomposable functions, the variance will be small when $x$ and $\tilde x$ are close to the optimum as $\nabla f(\tilde x)$ is \emph{small} and $\lVert \nabla f_i(x)-\nabla f_i(\tilde x)\rVert$ is of the order $\Vert x-\tilde x\Vert_2$. We also expect a similar behavior in the case of Monte Carlo sampling and we thus use this gradient estimate in Algorithm \ref{alg:svrg}. Observe that crucially---unlike SAGA-based algorithms---this method does not require an estimate of all of the individual $f_i$, so the memory cost of this algorithm scales in the worst case as $\mathcal{O}(d)$. In Algorithm \ref{alg:svrg} we use the unbiased gradient estimate
\begin{align}   \vspace{-5pt}
\label{eq:svrggradientupdate}
g_k = \tilde{g} + \frac{N}{n} \sum_{i \in S} \left[\nabla f_i (x_k) - \nabla f_i(\tilde{x})\right],
\vspace{-5pt}
\end{align}
which uses a mini-batch of size $n$.
\label{sec:svrg}
\setlength{\textfloatsep}{10pt}
\begin{algorithm}[t]
\caption{SVRG Langevin MCMC \label{alg:svrg}}
\begin{algorithmic}
   \STATE {\bfseries Input:} Gradient oracles $\{\nabla f_i(\cdot)\}_{i=0}^N$, step size $\delta$, epoch length $\tau$, batch size $n$, initial point $x_0 \in \mathbb{R}^d$.
   \STATE Initialize $\tilde{x} \leftarrow x_0$, $\tilde{g} \leftarrow \sum_{i=1}^N \nabla f_i(x_0)$
   \FOR {$k=1,\ldots,T$}
   \IF {$k$ mod $\tau = 0$}
   \STATE \textbf{Option I:} Sample $\ell \sim unif(0,1,\ldots,\tau-1)$ and Update $\tilde{x} \leftarrow x_{k-\ell}$
   \STATE Update $x_k \leftarrow \tilde{x}$
   	\STATE \textbf{Option II:} Update $\tilde{x} \leftarrow x_{k}$
	\STATE  $\tilde{g} \leftarrow \sum_{i=1}^N \nabla f_i (x_{k})$
   \ENDIF
   \STATE Draw $S \subset \{0,\ldots,N\}: \lvert S \rvert = n$ uniformly with replacement
   \STATE Sample $\xi_k \sim N(0, I_{d\times d})$
   \STATE Update $g_k$ using \eqref{eq:svrggradientupdate}
   \STATE Update $x_{k+1}\leftarrow x_k - \delta g + \sqrt{2\delta}\xi_k$.
   \ENDFOR
   \STATE{\bfseries Output:} Iterates $\{x_k\}_{k=1}^T$.
\end{algorithmic}
\end{algorithm}
The downside of this algorithm compared to SAGA however is that every few steps (an epoch) the full gradient, $\nabla f(\tilde{x})$, needs to be calculated at $\tilde{x}$. This results in the query complexity of each epoch being $\mathcal{O}(N)$. Also SVRG has an extra parameter that needs to be set---its hyperparameters are the epoch length ($\tau$), the step size ($\delta$) and the batch size ($n$), as opposed to just the step size and batch size for Algorithm \ref{alg:saga} which makes it harder to tune. It also turns out that in practice, SVRG seems to be consistently outperformed by SAGA and control-variate techniques for sampling which is observed both in previous work and in our experiments. 

\subsection{Control Variates with Underdamped Langevin MC}
Another approach is to use control variates \citep{ripley2009stochastic} to reduce the variance of stochastic gradients. This technique has also been previously explored both theoretically and experimentally by \cite{baker2017control} and \cite{nagapetyan2017true}. Similarly to SAGA and SVRG the idea is to build an unbiased estimate of the gradient $g(x)$ at a point $x$:
\begin{align*}
g(x) = \nabla f(\hat{x}) + \sum_{i\in S} \left[\nabla f_i(x)-\nabla f_i(\hat{x})\right],
\vspace{-5pt}
\end{align*}
where the set $S$ is the mini-batch and $\hat{x}$ is a fixed point that is called the \emph{centering value}. Observe that taking an expectation over the choice of the set $S$ yields $\nabla f(x)$. A good centering value $\hat{x}$ would ensure that this estimate also has low variance; a natural choice in this regard is the \emph{global minima} of $f$, $x^*$. A motivating example is the case of a Gaussian random variable where the mean of the distribution and $x^*$ coincide.

A conclusion of previous work that applies control variate techniques to stochastic gradient Langevin MCMC is the following---the variance of the gradient estimates can be lowered to be of the order of the discretization error. Motivated by this, we apply these techniques to \emph{underdamped} Langevin MCMC where the underlying continuous time diffusion process is given by the following second-order SDE:
\vspace{-5pt}
\begin{align*}
\numberthis \label{eq:langevindiffusionmaintext}dv_t &= -\gamma v_t dt - u \nabla f(x_t) dt + \sqrt{2}dB_t, \\
dx_t & = v_t dt,
\vspace{-5pt}
\end{align*}
where $(x_t,v_t) \in \mathbb{R}^d$, $B_t$ represents the standard Brownian motion and $\gamma$ and $u$ are constants. At a high level the advantage of using a second-order MCMC method like underdamped Langevin MCMC \cite{cheng2017underdamped}, or related methods like Hamiltonian Monte Carlo \citep[see, e.g, ][]{neal2011mcmc,rhmc}, is that the discretization error is lower compared to overdamped Langevin MCMC.
However when stochastic gradients are used \citep[see][for implementation]{SGHMC,completesample}, this advantage can be lost as the variance of the gradient estimates dominates the total error. We thus apply control variate techniques to this second-order method. This reduces the variance of the gradient estimates to be of the order of the discretization error and enables us to recover faster rates of convergence.
\begin{algorithm}[t]
\caption{CV Underdamped Langevin MCMC \label{alg:cvulmcmc}}
\begin{algorithmic}
   \STATE {\bfseries Input:} Gradient oracles $\{\nabla f_i(\cdot)\}_{i=0}^N$, step size $\delta$, smoothness $M$, batch size $n$.
   \STATE Set $x^* \in \argmin_{x \in \mathbb{R}^d} f(x)$.
   \STATE Set $(x_0,v_0) \leftarrow (x^*,0)$
   \FOR {$k=1,\ldots,T$}
   \STATE Draw a set $S \subset \{0,\ldots N\}$ of size $n$ u.a.r.
   \STATE Update $\nabla \tilde{f}(x_k)$ using \eqref{eq:cvgradupdate}
 	\STATE Sample $(x_{k+1},v_{k+1}) \sim  Z^{k+1}(x_k,v_k)$  defined in \eqref{eq:defofZnormal}
   \ENDFOR
   \STATE{\bfseries Output:} Iterates $\{x_k \}_{k=1}^{T}$.
\end{algorithmic}
\end{algorithm}
The discretization of SDE \eqref{eq:langevindiffusionmaintext} (which we can simulate efficiently) is
\vspace{-5pt}
\begin{align*}
\numberthis \label{eq:langevindiscremaintext}d\tilde{v}_t &= -\gamma \tilde{v}_t dt - u \nabla \tilde{f}(x_k) dt + \sqrt{2}dB_t, \\
d\tilde{x}_t & = \tilde{v}_t dt,
\vspace{-5pt}
\end{align*}
with initial conditions $x_k,v_k$ (the previous iterate of the Markov Chain) and $\nabla \tilde{f}(x_k)$ is the estimate of the gradient at $x_k$, defined in \eqref{eq:cvgradupdate}. We integrate \eqref{eq:langevindiscremaintext} for time $\delta$ (the step size) to get our next iterate of the chain---$x_{k+1},v_{k+1}$ for some $k\in \{1,\ldots,T \}$. This MCMC procedure was introduced and analyzed by \cite{cheng2017underdamped} where they obtain that given access to full gradient oracles the chain converges in $T = \tilde{\mathcal{O}}(\sqrt{d}/\epsilon)$ steps  (without Assumption \ref{ass:HessianLipschitz}) as opposed to standard Langevin diffusion which takes $T = \tilde{\mathcal{O}}(d/\epsilon)$ steps (with Assumption \ref{ass:HessianLipschitz}). With noisy gradients (variance $\sigma^2 d$), however, the mixing time of underdamped Langevin MCMC again degrades to $\tilde{\mathcal{O}}(\sigma^2 d/\epsilon^2)$.

In Algorithm \ref{alg:cvulmcmc} we use control variates to reduce variance and are able to provably recover the fast mixing time guarantee ($T = \tilde{\mathcal{O}}(\sqrt{d}/\epsilon)$) in Theorem \ref{thm:controlvariatethm}. Algorithm \ref{alg:cvulmcmc} requires a pre-processing step of calculating the (approximate) minimum of $f$ as opposed to Algorithm \ref{alg:saga},\ref{alg:svrg}; however since $f$ is strongly convex this pre-processing cost (using say SAGA for optimizing $f$ with stochastic gradients) is small compared to the computational cost of the other steps.
\label{sec:controlvariate}

In Algorithm \ref{alg:cvulmcmc} the updates of the gradients are dictated by,
\begin{align} \label{eq:cvgradupdate}
\nabla \tilde{f}(x_k) = \nabla f(x^*) + \frac{N}{n} \sum_{i \in S}\left [\nabla f_i(x_k) - \nabla f_i(x^*)\right].
\end{align}
The random vector that we draw, $Z^{k}(x_k,v_k) \in \mathbb{R}^{2d}$, conditioned on $x_k,v_k$, is a Gaussian vector with conditional mean and variance that can be \emph{explicitly} calculated in closed form expression in terms of the algorithm parameters $\delta$ and $M$. Its expression is presented in Appendix \ref{app:controlvariates}. Note that $Z^{k}$ is a Gaussian vector and can be sampled in $\mathcal{O}(d)$ time.

\section{Convergence results}
\label{sec:convergenceresults}In this section we provide convergence results of the algorithms presented above, which improve upon the convergence guarantees for SGLD. \cite{dalalyan2017user} show that for SGLD run for $T$ iterations:
\begin{align} \label{eq:wassdala}
W_2(p^{(T)},p^*)  \le \exp \left( -\delta m T \right) W_2(p^{(0)},p^*)
+ \frac{\delta Ld}{2m}+ \frac{11 \delta M^{3/2}\sqrt{d}}{5 m}+\frac{\sigma \sqrt{\delta d} }{2\sqrt{m}},
\end{align}
under assumptions \ref{ass:smoothness}-\ref{ass:HessianLipschitz} with access to stochastic gradients with bounded variance -- $\sigma^2 d$. The term involving the variance -- $\sigma \sqrt{\delta d} /{2\sqrt{m}}$ dominates the others in many interesting regimes. For sum-decomposable functions that we are studying in this paper this is also the case as the variance of the gradient estimate usually scales linearly with $N^2$. Therefore the performance of SGLD sees a deterioration when compared to the convergence guarantees of Langevin Diffusion where $\sigma=0$. To prove our convergence results we follow the general framework established by \cite{dalalyan2017user}, with the noteworthy difference of working with more sophisticated Lyapunov functions (for Theorems \ref{thm:mainsaga} and \ref{thm:svrgtheorem}) inspired by proof techniques in optimization theory. This contributes to strengthening the connection between optimization and sampling methods raised in previous work and may potentially be applied to other sampling algorithms (we elaborate on these connections in more detail in Appendix \ref{app:svrgsaga}). This comprehensive proof technique also allows us to sharpen the convergence guarantees obtained by \cite{dubey2016variance} on variance reduction methods like SAGA and SVRG  by allowing us to present bounds in $W_2$ and to drop the assumption on requiring uniformly bounded gradients. We now present convergence guarantees for Algorithm \ref{alg:saga}.
\begin{theorem} \label{thm:mainsaga}
Let assumptions \ref{ass:decom}-\ref{ass:HessianLipschitz} hold. Let $p^{(T)}$ be the distribution of the iterate of Algorithm \ref{alg:saga} after $T$ steps.  If we set the step size to be $\delta<\frac{n}{8M N}$
and the batch size $n \geq 9$ then we have the guarantee:

\begin{align} \label{eq:wassersteinsaga}
W_2(p^{(T)},p^\ast)\leq 5	\exp\left(-\frac{m\delta}{4}T\right)W_2(p^{(0)},p^*)  + \frac{2\delta      Ld}{m}+\frac{2\delta M^{3/2} \sqrt{d}}{m} +\frac{24\delta M \sqrt{ d N} }{\sqrt{m} n}.
\end{align}
\end{theorem}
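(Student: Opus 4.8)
The plan is to follow the synchronous-coupling framework of \cite{dalalyan2017user}: run the continuous Langevin diffusion \eqref{eq:contlangevindynamics} started at stationarity $p^*$ alongside the discrete SAGA iterate $x_k$, driven by the same Brownian motion, and control $W_2^2(p^{(k)},p^*)$ through the evolution of $\mathbb{E}\lVert x_k - y_k\rVert_2^2$ where $y_k$ is the coupled stationary process. One step of the discrete chain contributes three sources of error relative to the diffusion: (i) the contraction from strong convexity, giving a factor like $(1-m\delta)$; (ii) the discretization bias of freezing the drift over a step of length $\delta$, which under Assumptions \ref{ass:smoothness} and \ref{ass:HessianLipschitz} is the source of the $\delta L d/m$ and $\delta M^{3/2}\sqrt d/m$ terms exactly as in \eqref{eq:wassdala}; and (iii) the extra variance $\mathbb{E}\lVert g_k - \nabla f(x_k)\rVert_2^2$ of the SAGA gradient estimator \eqref{eq:sagagradientupdate}. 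The novelty, and the reason a naive application of \eqref{eq:wassdala} with $\sigma^2 d$ equal to the SAGA variance is too weak, is that the SAGA variance is not uniformly bounded — it depends on how far the stored points $g_k^i = \nabla f_i(x_{\phi(i)})$ are from the current iterate — so it must be tracked dynamically.

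The key device is therefore a Lyapunov function of the form
\begin{align*}
V_k = \mathbb{E}\lVert x_k - y_k \rVert_2^2 + c\,\delta^2 \frac{1}{N}\sum_{i=1}^N \mathbb{E}\lVert x_{\phi_k(i)} - x_k\rVert_2^2,
\end{align*}
where $\phi_k(i)$ is the iteration at which $f_i$ was last sampled and $c$ is a constant to be tuned, borrowing the ``stored-point drift'' idea from the optimization analysis of SAGA. The steps are: first, bound the SAGA variance by $\tilde M^2 \cdot \frac{N^2}{n}\cdot\frac1N\sum_i \mathbb{E}\lVert x_{\phi_k(i)} - x_k\rVert_2^2$ using Assumption \ref{ass:smoothness} together with the sampling-with-replacement structure of $S$; this is where the $N/n$ and $\tilde M = M/N$ factors that produce the final $\delta M\sqrt{dN}/(\sqrt m\, n)$ term enter. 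Second, write the one-step recursion for $\mathbb{E}\lVert x_{k+1}-y_{k+1}\rVert_2^2$, expanding the square, using strong convexity on the inner product term, smoothness to absorb $\mathbb{E}\lVert \nabla f(x_k)-\nabla f(y_k)\rVert_2^2$, and the It\^o isometry / Hessian-Lipschitz estimate for the discretization term. Third, write the one-step recursion for the auxiliary term $\frac1N\sum_i\mathbb{E}\lVert x_{\phi_{k+1}(i)}-x_{k+1}\rVert_2^2$: each index is ``refreshed'' to distance zero with probability $\approx n/N$ and otherwise its distance grows by at most one step's displacement $\mathbb{E}\lVert x_{k+1}-x_k\rVert_2^2 \lesssim \delta^2(\lVert\nabla f(x_k)\rVert^2 + \text{variance}) + \delta d$. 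Fourth, add the two recursions with the weight $c$ chosen so that the cross terms cancel and the combined quantity contracts: the condition $\delta < n/(8MN) = 1/(8\tilde M n)\cdot (n^2/N^2)$... more precisely $\delta \tilde M N \lesssim n$ and $n\ge 9$ are exactly what is needed to make the refresh probability dominate the growth and keep the contraction factor at $1 - m\delta/4$. Fifth, unroll the resulting geometric recursion $V_{k+1}\le(1-m\delta/4)V_k + (\text{bias}^2 + \text{residual variance})$ over $T$ steps, sum the geometric series $\sum (1-m\delta/4)^j \le 4/(m\delta)$, and take square roots (using $\sqrt{a+b}\le\sqrt a+\sqrt b$ and $V_0 = W_2^2(p^{(0)},p^*)$ since the stored points are initialized at $x_0$) to land the stated bound, with the constants $5, 2, 2, 24$ coming out of the tuning.

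The main obstacle I anticipate is step four — choosing $c$ and verifying the contraction. The auxiliary ``drift'' term is fed by $\mathbb{E}\lVert x_{k+1}-x_k\rVert_2^2$, which itself contains the SAGA variance, i.e. the very quantity the auxiliary term is meant to control; so the recursion is genuinely coupled and one must show the $2\times 2$ linear map on $(\mathbb{E}\lVert x_k-y_k\rVert_2^2,\ \text{auxiliary}_k)$ has spectral radius $\le 1-m\delta/4$. This forces the precise smallness condition on $\delta$ relative to $n/(MN)$ and the lower bound $n\ge 9$; getting the book-keeping tight enough that the contraction rate is a clean $m\delta/4$ (rather than something degraded by $\kappa$) while the residual variance term is only $O(\delta M\sqrt{dN}/(\sqrt m n))$ — i.e. $O(\delta)$, not $O(\sqrt\delta)$, which is the whole point of variance reduction — is the delicate part. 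A secondary technical point is handling the with-replacement mini-batch: one needs the identity for the variance of $\frac Nn\sum_{i\in S}Y_i$ in terms of $\frac1N\sum_i \lVert Y_i\rVert^2$ and $\lVert\frac1N\sum_i Y_i\rVert^2$, and care that $S$ ranges over $\{0,\dots,N\}$ (including the prior term $f_0$), which I would absorb into constants.
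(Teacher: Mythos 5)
Your high-level architecture (synchronous coupling with a stationary copy of the exact diffusion, Dalalyan-style discretization bias giving the $\delta Ld/m$ and $\delta M^{3/2}\sqrt d/m$ terms, plus an optimization-style Lyapunov function to track the SAGA variance) is the same as the paper's, but your choice of auxiliary variable is genuinely different, and it is exactly at that choice that the plan has a soft spot. The paper does \emph{not} track the staleness of the algorithm's own memory. It introduces phantom stored gradients $h_k^i$ defined along the coupled stationary chain $\{y_k\}$ (updated at the same random times as $g_k^i$) and uses $T_k=c\sum_{i}\lVert g_k^i-h_k^i\rVert_2^2+\lVert x_k-y_k\rVert_2^2$ with $c\asymp\delta^2N^2/n^2$. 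The point of this arrangement is that only \emph{difference} quantities between the two coupled chains enter the recursion: the refresh of the memory injects $\sum_i\lVert\nabla f_i(x_k)-\nabla f_i(y_k)\rVert_2^2\le\tilde M\langle\nabla f(x_k)-\nabla f(y_k),x_k-y_k\rangle$, which is absorbed into the strong-convexity inner product, so the contraction rate stays $\min\{n/(3N),m\delta/2\}$ with no degradation by $\kappa$. The absolute-size quantity — how far a stored gradient lags behind the current gradient — is evaluated on the \emph{stationary} chain, $\sum_i\lVert\nabla f_i(y_k)-h_k^i\rVert_2^2$, and bounded outside the recursion by the age-conditioning trick of Dubey et al.\ together with $\mathbb{E}_{p^*}\lVert\nabla f(y)\rVert_2^2\le Md$, yielding the $\delta dNM^2/n^2$ variance level and hence the $\delta M\sqrt{dN}/(\sqrt m\,n)$ term.

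In your scheme the staleness $\frac1N\sum_i\mathbb{E}\lVert x_{\phi_k(i)}-x_k\rVert_2^2$ of the live chain sits inside the Lyapunov function, and two issues arise. First, the step ``otherwise its distance grows by at most one step's displacement'' cannot be taken literally for squared norms: if you handle the cross term by a plain Young inequality with the factor $(1+\Theta(n/N))$ needed to preserve the $1-n/N$ refresh contraction, the Brownian part of the displacement gets multiplied by $N/n$, the steady-state staleness becomes $\Theta((N/n)^2\delta d)$ instead of $\Theta((N/n)\delta d)$, and the final term degrades to order $\delta MN\sqrt d/(\sqrt m\,n^{3/2})$ — it no longer matches \eqref{eq:wassersteinsaga}. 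To recover the right order you must exploit that the Gaussian increment is independent of the past (its cross term vanishes exactly) and apply Young only to the drift part with parameter $\asymp n/(N\delta)$; this is essentially re-deriving the age argument the paper applies to the stationary chain. Second, that drift cross term forces $\mathbb{E}\lVert\nabla f(x_k)\rVert_2^2$ into the staleness recursion, which you can only relate back to the coupling distance via $2M^2\mathbb{E}\lVert x_k-y_k\rVert_2^2+2Md$; with the forced choice $c\gtrsim NM^2/n^2$, closing the $2\times2$ contraction at rate $m\delta/4$ then requires roughly $\delta^3\lesssim mn^3/(N^2M^4)$, which under only $\delta<n/(8MN)$ and $n\ge9$ amounts to an extra condition of the form $\kappa\lesssim N$ that the theorem does not assume. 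So the route is viable and instructive, but as written it does not yield the stated bound; the paper's device of mirroring the memory on the stationary chain is what removes both obstacles at once.
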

For the sake of clarity, only results for small  step-size $\delta$ are presented however, it is worth noting that convergence guarantees hold for any $\delta\leq\frac{1}{8M}$ (see details in Appendix \ref{app:sagaproof}).
 If we consider the regime where $\sigma,M,L$ and $m$ all scale linearly with the number of samples $N$, then for SGLD the dominating term is $\mathcal{O}(\sigma\sqrt{\delta d/m})$. If the target accuracy is $\epsilon$, SGLD would require the step size to scale as $\mathcal{O}(\epsilon^2/d)$ while for SAGA a step size of $\delta = \mathcal{O}(\epsilon/d)$ is sufficient. The mixing time $T$ for both methods is roughly proportional to the inverse step-size; thus SAGA provably takes fewer iterations while having almost the same computational complexity per step as SGLD. Similar to the optimization setting, theoretically SAGA Langevin diffusion recovers the \emph{fast rate} of Langevin diffusion while just using cheap gradient updates. Next we present our guarantees for Algorithm \ref{alg:svrg}.
%
\begin{theorem}\label{thm:svrgtheorem}
Let assumptions \ref{ass:decom}-\ref{ass:HessianLipschitz} hold. Let $p^{(T)}$ be the distribution of the iterate of Algorithm \ref{alg:svrg} after $T$ steps.

If we set $\delta<\frac{1}{8M}$, $n \ge 2$, $\tau \geq \frac{8}{m\delta}$ and run $\mathsf{Option\ I}$ then for all $T$ mod $\tau = 0$ we have
\begin{align} \label{eq:wassersteinsvrg}
W_2(p^{(T)},p^*)  \le \exp\left( -\frac{\delta m T }{56}\right) \frac{\sqrt{M}}{\sqrt{m}} W_2(p^{(0)},p^*)+ \frac{2\delta Ld}{m}+ \frac{2\delta M^{3/2}\sqrt{ d}}{m}+ \frac{64M^{3/2}\sqrt{\delta d}}{m\sqrt{n}}.
\end{align}
If we set $\delta\!<\!\frac{\sqrt{n}}{4\tau M}$ and run $\mathsf{Option\ II}$ for $T$ iterations then,
\begin{align} \label{eq:option2statement}
W_2(p^{(T)},p^*) \leq \exp\left(-\frac{\delta m T }{4}\right)  W_2(p^{(0)},p^*)
+\frac{\sqrt{2}\delta Ld}{m}+\frac{5\delta M^{3/2}\sqrt{d}}{m}+\frac{9\delta M\tau \sqrt{d}}{\sqrt{m n}}.
\end{align}
\end{theorem}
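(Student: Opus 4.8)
The plan is to follow the coupling framework of \cite{dalalyan2017user}: introduce a continuous-time overdamped Langevin diffusion $\{y_t\}$ driven by the same Brownian motion as the discrete chain, so that the law of $y_t$ equals $p^*$ for all $t$ and $W_2(p^{(k)},p^*)^2 \le \mathbb{E}\|x_k - y_{k\delta}\|_2^2$ under an optimal initial coupling. For one step of the chain I would decompose $x_{k+1} - y_{(k+1)\delta}$ into three pieces: a contraction term from the gradient drift, which by strong convexity (A3) and smoothness (A2) contributes a factor $(1-m\delta)$ on $\mathbb{E}\|x_k - y_{k\delta}\|_2^2$ up to higher order in $\delta$; a Euler discretization bias, controlled via the Hessian-Lipschitz assumption (A4) exactly as in \cite{dalalyan2017user,durmus2017}, which produces the $\delta Ld/m$ and $\delta M^{3/2}\sqrt d/m$ terms; and the mean-zero noise $g_k - \nabla f(x_k)$ coming from the SVRG estimate \eqref{eq:svrggradientupdate}. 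The crucial difference from SGLD is that, by smoothness of the individual $f_i$ and a with-replacement variance bound, the conditional variance of this noise is at most (a constant times) $\tfrac{M^2}{n}\mathbb{E}\|x_k - \tilde x\|_2^2$ rather than a fixed quantity $\sigma^2 d$.

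Because this variance is proportional to $\|x_k - \tilde x\|_2^2$, a recursion on $\mathbb{E}\|x_k - y_{k\delta}\|_2^2$ alone does not close, so the plan is to carry a Lyapunov function that also tracks the distance to the reference point $\tilde x$. Concretely I would track a quantity of the form $\Phi_k = \mathbb{E}\|x_k - y_{k\delta}\|_2^2 + c\,\mathbb{E}\|x_k - \tilde x\|_2^2$, with $c$ of order one for Option~II and of order $\kappa = M/m$ for Option~I; bounding $\|x_k - \tilde x\|_2^2 \lesssim \|x_k - y_{k\delta}\|_2^2 + \|y_{k\delta} - x^*\|_2^2 + \|\tilde x - x^*\|_2^2$ and using that the relevant second moments of the diffusion and of $x^*$-centered iterates are $\mathcal{O}(d/m)$, one shows a within-epoch recursion $\Phi_{k+1} \le (1 - m\delta/c')\Phi_k + B$, where $B$ collects the squared bias terms.

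The two options differ only at the epoch boundary $k \bmod \tau = 0$. For Option~II, resetting $\tilde x \leftarrow x_k$ kills the $\|x_k-\tilde x\|_2^2$ term instantaneously, but over the next $\tau$ steps this term accumulates, by Cauchy--Schwarz on the increments $x_{j+1}-x_j = -\delta g_j + \sqrt{2\delta}\xi_j$, to roughly $\delta^2\tau^2\,(\text{gradient size})^2 + \delta\tau d$; feeding this back through the recursion forces the constraint $\delta < \sqrt n/(4\tau M)$ and yields the $\delta M\tau\sqrt d/\sqrt{mn}$ term. For Option~I the key point is that $\tilde x$ is a \emph{uniformly random} iterate of the just-completed epoch and the chain is reset to it; taking expectation over $\ell \sim \mathrm{unif}\{0,\dots,\tau-1\}$ replaces $\mathbb{E}\|x_k-\tilde x\|_2^2$ by the epoch-average $\tfrac1\tau\sum_j \mathbb{E}\|x_k - x_{k-j}\|_2^2$, which telescopes against the per-step contraction and is what collapses the variance; this is paid for by the $\sqrt{M/m}$ prefactor and the requirement $\tau \ge 8/(m\delta)$ (so that the chain reaches quasi-equilibrium within an epoch). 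Summing the geometric recursion over epochs (Option~I) or over all steps (Option~II) and taking square roots gives \eqref{eq:wassersteinsvrg} and \eqref{eq:option2statement}.

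The main obstacle I expect is making the Lyapunov construction and the epoch-averaging interlock with workable constants: one must choose the coefficient $c$ and the contraction rate so that the extra $\|x_k - \tilde x\|_2^2$ term is simultaneously large enough to dominate the SVRG variance and small enough to be reabsorbed by the contraction, and one must verify that the reset step — random restart in Option~I, plain continuation in Option~II — does not undo the accumulated progress (this is where the $1/56$ rate and the $\sqrt{M/m}$ factor in Option~I originate). By contrast, the one-step contraction from strong convexity, the Hessian-Lipschitz discretization estimate, and the second-moment bounds on $y_t$ and on the iterates are by now standard and can be imported essentially verbatim from the cited analyses.
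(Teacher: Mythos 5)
Your Option~II sketch is essentially the paper's argument: the variance is bounded by $\tfrac{M^2}{n}\mathbb{E}\lVert x_k-\tilde x\rVert_2^2$, the reset $\tilde x\leftarrow x_k$ zeroes this at each epoch boundary, the within-epoch accumulation is bounded through the increments $x_{j+1}-x_j=-\delta g_j+\sqrt{2\delta}\xi_j$, and because $g_j$ itself depends on $\lVert x_j-\tilde x\rVert_2$ the resulting self-referential inequality is closed by a discrete Gr\"onwall argument, which is exactly what forces $\delta<\sqrt n/(4\tau M)$ and produces the $\delta M\tau\sqrt d/\sqrt{mn}$ floor. So that half is fine (your increment estimate is if anything slightly sharper than the paper's).

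For Option~I, however, there is a genuine gap. First, the quantitative route you propose does not close under the stated hypotheses: bounding $\lVert x_k-\tilde x\rVert_2^2$ by $\lVert x_k-y_k\rVert_2^2$ plus $\mathcal{O}(d/m)$ second moments turns the noise term into $\delta^2\tfrac{M^2}{n}\bigl(\mathbb{E}\lVert\Delta_k\rVert_2^2+\mathbb{E}\lVert\tilde\Delta\rVert_2^2+d/m\bigr)$, and absorbing the $\tfrac{M^2}{n}$-weighted pieces into the contraction $1-m\delta$ requires $\delta\lesssim mn/M^2$, a condition a factor of $\kappa$ stronger than the stated $\delta<1/(8M)$ with $n\ge 2$; nothing in your Lyapunov choice ($c$ of order $\kappa$) removes this. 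Second, the mechanism you attribute to the random restart is misidentified: in Option~I the chain is reset to $\tilde x$, so $\lVert x_k-\tilde x\rVert_2$ is the displacement accumulated \emph{within} the current epoch, and with $\tau\ge 8/(m\delta)$ this is of order the stationary fluctuation $\sqrt{d/m}$ rather than something that "telescopes against the per-step contraction"; moreover an Option~II-style accumulation bound is unusable here because $\delta\lesssim\sqrt n/(\tau M)$ is incompatible with $\tau\ge 8/(m\delta)$ unless $n\gtrsim\kappa^2$. The paper's proof avoids both problems by introducing a \emph{stationary shadow reference point} $\tilde y\sim p^*$ optimally coupled to $\tilde x$ and splitting the SVRG noise into (i) terms proportional to $M\langle\nabla f(x_k)-\nabla f(y_k),x_k-y_k\rangle$ and $\tfrac{M}{n}\mathcal{D}_f(\tilde x,\tilde y)$ (co-coercivity/Bregman form, so the factor is $M$, not $M^2$, and it is absorbed by the negative inner-product term whenever $\delta M\le 1/8$), and (ii) the drift of the stationary chain over one epoch, $\mathbb{E}\lVert\nabla f_i(\tilde y)-\nabla f_i(y_k)\rVert_2^2$, controlled via $\mathbb{E}_{p^*}\lVert\nabla f\rVert_2^2\le Md$, which is in fact the source of the $M^{3/2}\sqrt{\delta d}/(m\sqrt n)$ term. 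Summing the per-step inequality over an epoch leaves $\tfrac1\tau\sum_k\mathcal{D}_f(x_k,y_k)$ on the left, and the uniform choice of the new reference pair identifies this with $\mathbb{E}\,\mathcal{D}_f(\tilde x^{u+1},\tilde y^{u+1})$, giving an epoch-level recursion in the Bregman divergence; $\tau\ge 8/(m\delta)$ makes its contraction factor $1/2$, and converting Bregman divergence back to squared distance at the end is what produces the $\sqrt{M/m}$ prefactor. Your proposal contains no analogue of the coupled reference point $\tilde y$ or of the Bregman-divergence bookkeeping, and without them the Option~I bound as stated (step size only $\delta<1/(8M)$, arbitrary $n\ge2$) is out of reach.
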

For Option I, if we study the same regime as before where $M,m$ and $L$ are scaling linearly with $N$ we find that the discretization error is dominated by the term which is of order $\mathcal{O}(\sqrt{\delta N d/n})$. To achieve target accuracy of $\epsilon$ we would need $\delta = \mathcal{O}(\epsilon^2 n/Nd)$. This is less impressive than the guarantees of SAGA and essentially we only gain a constant factor as compared to the guarantees for SGLD.  This behavior may be explained as follows: at each epoch, a constant decrease of the objective is needed in the classical proof of SVRG when applied to optimization. When the step-size is small, the epoch length is required to be large that washes away the advantages of variance reduction.

For Option II, similar convergence guarantees as SAGA are obtained, but worse by a factor of $\sqrt{n}$. In contrast to SAGA, this result holds only for small step-size, with the constants in Eq.~\eqref{eq:option2statement} blowing up exponentially quickly for larger step sizes (for more details see proof in Appendix \ref{app:svrgappendix}).
We also find that experimentally SAGA routinely outperforms SVRG both in terms of run-time and iteration complexity to achieve a desired target accuracy. However, it is not clear whether it is an artifact of our proof techniques that we could not recover matching bounds as SAGA or if SVRG is less suited to work with sampling methods. We now state our results for the convergence guarantees of Algorithm \ref{alg:cvulmcmc}.
\begin{theorem} \label{thm:controlvariatethm} Let assumptions \ref{ass:decom}-\ref{ass:strongconvexity} hold. Let $p^{(T)}$ be the distribution of the iterate of Algorithm \ref{alg:cvulmcmc} after $T$ steps starting with the initial distribution $p^{(0)}(x,v) = 1_{x=x^*}\cdot 1_{v=0}$. If we set the step size to be $\delta < 1/M$ and run Algorithm \ref{alg:cvulmcmc} then we have the guarantee that
\begin{align}
W_2(p^{(T)},p^*) &\le 4\exp\left(-\frac{m\delta T}{2} \right)W_2(p^{(0)},p^*) +\frac{ 164 \delta M^2  \sqrt{d}}{m^{3/2}}+  \frac{83M \sqrt{d}}{m^{3/2}\sqrt{ n}}.\label{eq:wassersteincontractionalgcv} 
\end{align}
\end{theorem}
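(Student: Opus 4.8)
The argument adapts the analysis of \cite{cheng2017underdamped} for underdamped Langevin MCMC, the novelty being the treatment of the stochastic control-variate gradient $\nabla\tilde f(x_k)$ of \eqref{eq:cvgradupdate}. By construction the update $Z^{k+1}(x_k,v_k)$ in \eqref{eq:defofZnormal} is the exact law of the linear SDE \eqref{eq:langevindiscremaintext} run for time $\delta$ with the drift frozen at $(x_k,v_k)$ and at the estimated gradient $\nabla\tilde f(x_k)$; hence, relative to the true diffusion \eqref{eq:langevindiffusionmaintext}, the only errors are (i) freezing $\nabla f$ at $x_k$ over the step and (ii) replacing $\nabla f(x_k)$ by $\nabla\tilde f(x_k)$ — there is no extra Euler error. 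The plan is to run a synchronous coupling of the chain $(x_k,v_k)$ with a stationary copy $(x_t^\star,v_t^\star)$ of \eqref{eq:langevindiffusionmaintext} (same Brownian motion, $(x_0^\star,v_0^\star)\sim p^*$), so that $W_2^2(p^{(T)},p^*)\le\mathbb E\|(x_T,v_T)-(x_T^\star,v_T^\star)\|_2^2$, and to propagate a one-step bound on this quantity. The first ingredient is the contraction of the continuous diffusion: with the friction/mass parameters used in Algorithm \ref{alg:cvulmcmc}, \cite{cheng2017underdamped} show that \eqref{eq:langevindiffusionmaintext} contracts in a twisted quadratic metric equivalent to $\|\cdot\|_2$ up to an absolute constant, with contraction factor $e^{-m\delta/2}$ over an interval of length $\delta$; the metric-equivalence constant is what yields the prefactor $4$ in \eqref{eq:wassersteincontractionalgcv}. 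This reduces everything to the one-step error.

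\emph{Discretization error} (i) is handled exactly as in \cite{cheng2017underdamped}, using only smoothness (Assumption \ref{ass:smoothness}) together with second-moment control of the velocity; in contrast to Theorems \ref{thm:mainsaga} and \ref{thm:svrgtheorem}, the Hessian-Lipschitz assumption \ref{ass:HessianLipschitz} is never invoked, which is why \eqref{eq:wassersteincontractionalgcv} carries no $\delta L d/m$ term, and summing these per-step contributions against the geometric factor gives the term of order $\delta M^2\sqrt d/m^{3/2}$. The key lemma concerns \emph{error} (ii): conditioning on $x_k$, using that $S$ is sampled with replacement, that each $\nabla f_i$ is $\tilde M$-Lipschitz, and that $M=N\tilde M$, one gets
\[
\mathbb E\!\left[\|\nabla\tilde f(x_k)-\nabla f(x_k)\|_2^2 \mid x_k\right]\ \le\ \frac{M^2}{n}\,\|x_k-x^*\|_2^2 ,
\]
so the gradient error is tied to the squared distance to the centering value $x^*$. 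It then remains to bound $\mathbb E\|x_k-x^*\|_2^2$ uniformly in $k$, which is done by a separate drift argument giving $\sup_k\mathbb E\|x_k-x^*\|_2^2=\tilde{\mathcal O}(d/m)$ — valid precisely because $\delta<1/M$ and because Algorithm \ref{alg:cvulmcmc} is initialized at $(x^*,0)$ (so the step-$0$ gradient error vanishes exactly and $\mathbb E\|x_k-x^*\|_2^2$ cannot build up uncontrollably); the stationary part uses $\mathbb E_{p^*}\|x-x^*\|_2^2\le d/m$, from strong convexity (Assumption \ref{ass:strongconvexity}) and $x^*=\argmin f$. Feeding this uniform variance bound into the $v$-equation, where the gradient enters with the small weight $u$, and summing against the geometric factor produces the term $M\sqrt d/(m^{3/2}\sqrt n)$ while leaving no residual feedback onto the running coupling error — which is why the contraction rate remains $m\delta/2$ with no lower bound on $n$.

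\emph{Assembling}, Steps above yield a one-step recursion of the form $E_{k+1}\le e^{-m\delta/2}E_k + c_1\delta^2 M^2\sqrt d/\sqrt m + c_2\delta M\sqrt d/(\sqrt m\sqrt n)$ for the root-mean-square coupling error $E_k$; unrolling the geometric series and converting back to $W_2$ via the metric equivalence gives \eqref{eq:wassersteincontractionalgcv} after tracking constants. The main obstacle is that the gradient error is \emph{state-dependent} — its magnitude scales with $\|x_k-x^*\|$ rather than being a fixed $\sigma$ — so one cannot simply quote the noisy-gradient underdamped analysis; the crux is establishing the uniform-in-$k$ second-moment bound on $\|x_k-x^*\|$, carrying it alongside the $W_2$ recursion, and verifying that the state-dependent noise it feeds back does not erode the continuous-time contraction. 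Matching the constants so that this holds for every $\delta<1/M$ and every $n$ is the delicate part.
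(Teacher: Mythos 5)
Your proposal follows essentially the same route as the paper's proof: contraction of the continuous underdamped diffusion in the twisted metric $(x,x+v)$ (whose equivalence with the Euclidean metric gives the prefactor $4$), a one-step synchronous-coupling discretization bound using only smoothness plus a uniform kinetic-energy bound, the control-variate noise lemma $\mathbb{E}\lVert\nabla\tilde f(x_k)-\nabla f(x_k)\rVert_2^2\le \frac{M^2}{n}\mathbb{E}\lVert x_k-x^*\rVert_2^2$, a uniform-in-$k$ bound $\mathbb{E}\lVert x_k-x^*\rVert_2^2=\mathcal{O}(d/m)$ (which the paper obtains by induction from the non-expansion of $W_2$ along iterates together with $\mathbb{E}_{p^*}\lVert x-x^*\rVert_2^2\le d/m$, rather than a separate drift argument), and unrolling the resulting geometric recursion with $1-e^{-m\delta/2}\ge m\delta/4$. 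The decomposition, key lemmas, and assembly all match the paper's argument, so the proposal is correct in approach.
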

We initialize the chain in Algorithm \ref{alg:cvulmcmc} with $x^*$, the global minimizer of $f$ as we already need to calculate it to build the gradient estimate.
Observe that Theorem \ref{thm:controlvariatethm} does not guarantee the error drops to $0$ when $\delta \to 0$ but is proportional to the standard deviation of our gradient estimate. This is in contrast to SAGA and SVRG based algorithms where a more involved gradient estimate is used. The advantage however of using this second order method is that we get to a desired error level $\epsilon$ at a faster rate as the step size can be chosen proportional to $\epsilon/\sqrt{d}$, which is $\sqrt{d}$ better than the corresponding results of Theorem \ref{thm:mainsaga} and \ref{thm:svrgtheorem} and without Assumption \ref{ass:HessianLipschitz} (Hessian Lipschitzness).

Note that by Lemma \ref{lem:initialdistancebound} we have the guarantee that $W_2(p^{(0)},p^*) \le 2d/m$; this motivates the choice of $\delta =\mathcal{O}\left(\frac{m \epsilon\sqrt{m}}{M^2\sqrt{d}}\right)$ and, $n =\mathcal{O}\left( \frac{ M^2 d}{m^3\epsilon^2}\right)$ with $T =\tilde{\mathcal{O}}\left( 1/(m\delta))\right)$. It is easy to check that under this choice of $\delta,n$ and $T$, Theorem \ref{thm:controlvariatethm} guarantees that $W_2(p^{(T)},p^*) \le \epsilon$. We note that no attempt has been made to optimize the constants. To interpret these results more carefully let us think of the case when $M,m$ both scale linearly with the number of samples $N$. Here the number of steps $T = \tilde{\mathcal{O}}(\sqrt{d/(\epsilon^2 N)})$ and the batch size is $n = \mathcal{O}(d/(N\epsilon^2))$. If we compare it to previous results on control variate variance reduction techniques applied to \emph{overdamped} Langevin MCMC by \cite{baker2017control}, the corresponding rates are $T=\tilde{\mathcal{O}}(d/(\epsilon^2N))$ and $n = \mathcal{O}(d/(N\epsilon^2))$, essentially it is possible to get a quadratic improvement by using a second order method even in the presence of noisy gradients. Note however that these methods are not viable when the target accuracy $\epsilon$ is small as the batch size $n$ needs to grow as $\mathcal{O}(1/\epsilon^2)$.


\label{sec:comparisonsection}
\begin{table}[t]
\caption{Mixing time and computational complexity comparison of Langevin sampling algorithms. All the entries in the table are in Big-O notation which hides constants and poly-logarithmic factors. Note that the guarantees presented for ULD, SGULD, CV-LD and CV-ULD are without the Hessian Lipschitz assumption, \ref{ass:HessianLipschitz}.}
\label{tab:comparisontable}
\vskip 0.15in
\begin{center}
\begin{footnotesize}
\begin{sc}
\begin{tabular}{lcccr}
\toprule
Algorithm & Mixing Time & Computation \\
\midrule
LD &  $\kappa^2\sqrt{d}/(\sqrt{N}\epsilon)$ & $\kappa^2\sqrt{dN}/\epsilon$\\
ULD & $\kappa^{\frac{5}{2}}\sqrt{d}/(\sqrt{N}\epsilon)$ & $\kappa^{\frac{5}{2}}\sqrt{d N}/\epsilon$\\
\midrule[0.2pt]
SGLD    & $\kappa^2d/(n\epsilon^2)$& $\kappa^2 d/\epsilon^2$ \\
SGULD & $\kappa^2d/(n\epsilon^2)$& $\kappa^2 d/\epsilon^2$  \\
\midrule[0.2pt]
SAGA-LD    & $\kappa^{\frac{3}{2}}\sqrt{d}/(n\epsilon)$& $N\!+\!\kappa^{\frac{3}{2}}\sqrt{d}/\epsilon$\\
SVRG-LD (I)    &$\kappa^3d/(n\epsilon^2)$& $N\!+\! \kappa^3d/\epsilon^2$   \\
SVRG-LD (II)  & $\kappa^{\frac{11}{6}}\sqrt{d}/(N^{\frac{2}{3}}\epsilon)$& $N\!+\!\kappa^{\frac{5}{3}}N^{\frac{1}{6}}\sqrt{d}/\epsilon$ \\
\midrule[0.2pt]
CV-LD    & $\kappa^3d/(N\epsilon^2)$& $N\!+\!\kappa^6 d^2/(N^2\epsilon^4)$\\
CV-ULD     & $\kappa^{\frac{5}{2}}\sqrt{d}/(\sqrt{N}\epsilon)$ &  $N\!+\!\kappa^{\frac{11}{2}}d^{\frac{3}{2}}/(N^{\frac{3}{2}}\epsilon^3)$\\
\bottomrule
\end{tabular}
\end{sc}
\end{footnotesize}
\end{center}
\vskip -0.1in
\end{table}
\paragraph{Comparison of Methods.} Here we compare the theoretical guarantees of Langevin MCMC \citep[LD, ][]{durmus2016sampling}, Underdamped Langevin MCMC \citep[ULD, ][]{cheng2017underdamped}, SGLD \citep{dalalyan2017user}, stochastic gradient underdamped Langevin diffusion  \citep[SGULD,][]{cheng2017underdamped}, SAGA-LD (Algorithm \ref{alg:saga}), SVRG-LD (Algorithm \ref{alg:svrg} with Option I and II), Control Variate Langevin diffusion \citep[CV-LD,][]{baker2017control} and Control Variate underdamped Langevin diffusion (CV-ULD, Algorithm \ref{alg:cvulmcmc}). We always consider the scenario where $M,m$ and $L$ are scaling linearly with $N$ and where $N \gg d$ (\emph{tall-data} regime). We note that the memory cost of all these algorithms except SAGA-LD is $\mathcal{O}(nd)$; for SAGA-LD the worst-case memory cost scales as $\mathcal{O}(Nd)$. Next we compare the mixing time ($T$), that is, the number of steps needed to provably have error less than $\epsilon$ measured in $W_2$ and the computational complexity, which is the mixing time $T$ times the query complexity per iteration. In the comparison below we focus on the dependence of the mixing time and computational complexity on the dimension $d$, number of samples $N$, condition number $\kappa$, and the target accuracy $\epsilon$. The mini-batch size has no effect on the computational complexity of SGLD, SGULD and SAGA-LD; while for SVRG-LD, CV-LD and CV-ULD the mini-batch size is chosen to optimize the upper bound.

\begin{figure}[t]
\centering
 \def\svgwidth{3.1in}
\input{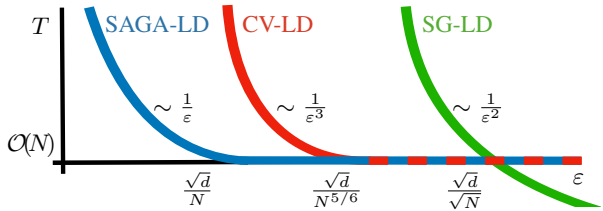}
\caption[Different regimes]{Different Regimes: The $x$-axis represents the target accuracy $\epsilon$ and the $y$-axis represents the predicted run-time $T$ (number of queries to the gradient oracle) of different algorithms.}
\vspace{-5pt}
\label{fig:regime}
\end{figure}

As illustrated in Fig.~\ref{fig:regime} we see a qualitative difference in behavior of variance reduced algorithms compared to methods like SGLD. In applications like calculating higher order statistics or computing confidence intervals to quantify uncertainty it is imperative to calculate the posterior with very high accuracy. In this regime when the target accuracy $\epsilon < \mathcal{O}(\sqrt{d/N})$, the computational complexity of SGLD starts to grow larger than $\mathcal{O}(N)$ at rate $\mathcal{O}(\sqrt{d}/\epsilon^2)$ whereas the computational cost of variance reduced methods is lower. For SAGA-LD the computational cost is $\mathcal{O}(N)$ up until when $\epsilon = \mathcal{O}(\sqrt{d}/N)$ after which it grows at a rate $\mathcal{O}(\sqrt{d}/\epsilon)$. CV-ULD also has a computational cost of $\mathcal{O}(N)$ up until the point where $\epsilon = \mathcal{O}(\sqrt{d}/N^{5/6})$ after which it starts to grow as $\mathcal{O}(d^{3/2}/(N^{3/2}\epsilon^3))$. When $\mathcal{O}(\sqrt{d}/N^{5/6})\le \epsilon<\mathcal{O}(\sqrt{d}/\sqrt{N})$ our bounds predict both SAGA-LD and CV-ULD to have comparative performance ($\mathcal{O}(N)$) and in some scenarios one might outperform the other. For higher accuracy our results predict SAGA-LD performs better than CV-ULD. Note that Option II of SVRG performs also well in this regime of small $\epsilon$ but not as well as SAGA-LD or CV-ULD.

At the other end of the spectrum for most classical statistical problems accuracy of $\epsilon = \mathcal{O}(\sqrt{d/N})$ is sufficient and less than a single pass over the data is enough. In this regime when $\epsilon > \mathcal{O}(\sqrt{d/N})$ and we are looking to find a crude solution quickly, our bounds predict that SGLD is the fastest method. Other variance reduction methods need at least a single pass over the data to initialize.

Our sharp theoretical bounds allow us to classify and accurately identify regimes where the different variance reduction algorithms are efficient; bridging the gap between experimentally observed phenomenon and theoretical guarantees of previous works. Also noteworthy is that here we compare the algorithms only in the tall-data regime which grossly simplifies our results in Sec.~\ref{sec:convergenceresults}, many other interesting regimes could be considered, for example the \emph{fat-data} regime where $d \approx N$, but we omit this discussion here.

\section{Experiments}
\label{sec:exp}
\label{sec:experiments}

\begin{figure*}[h!]
\centering
\includegraphics[width=0.4\textheight]{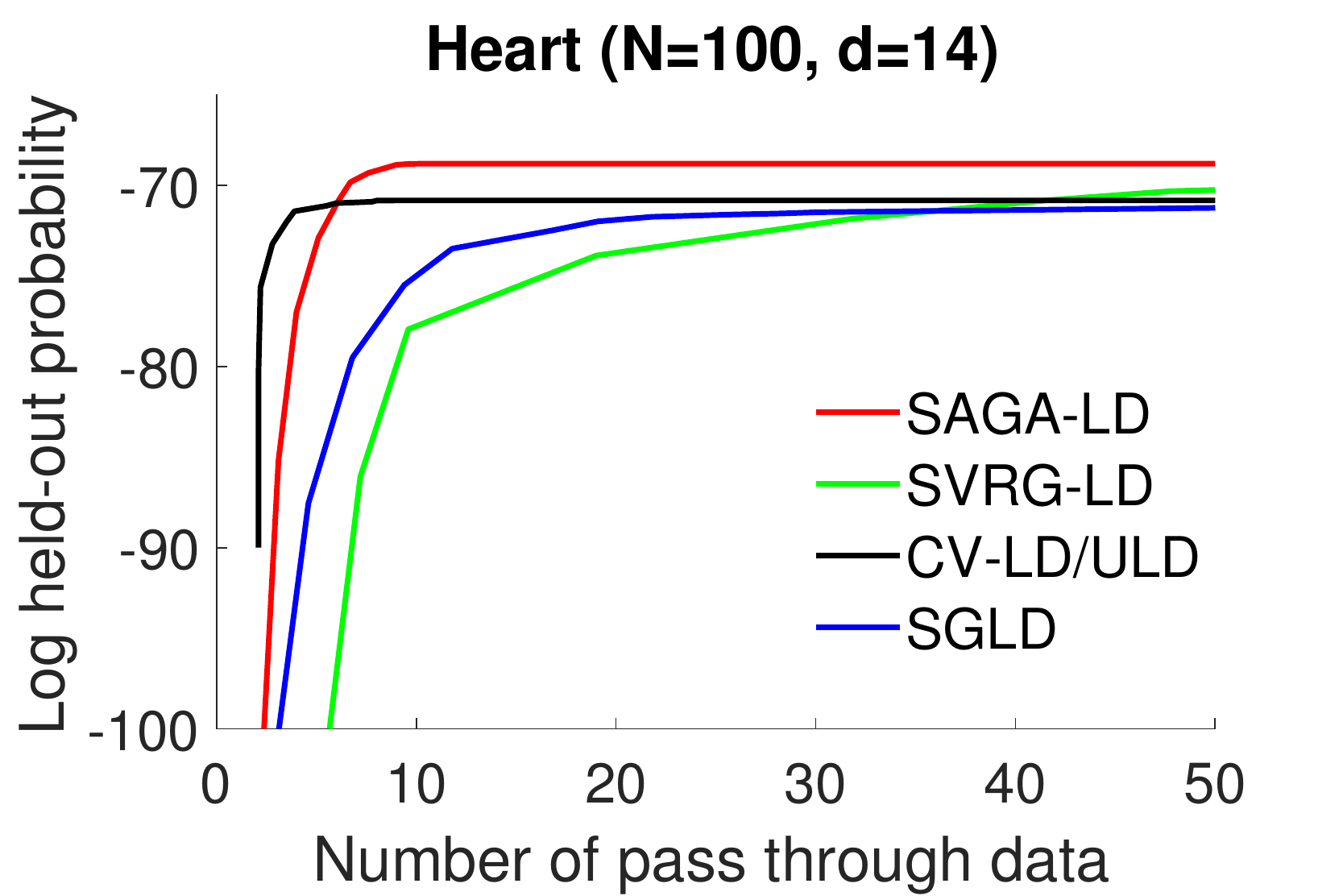} \hspace{-8pt}
\includegraphics[width=0.4\textheight]{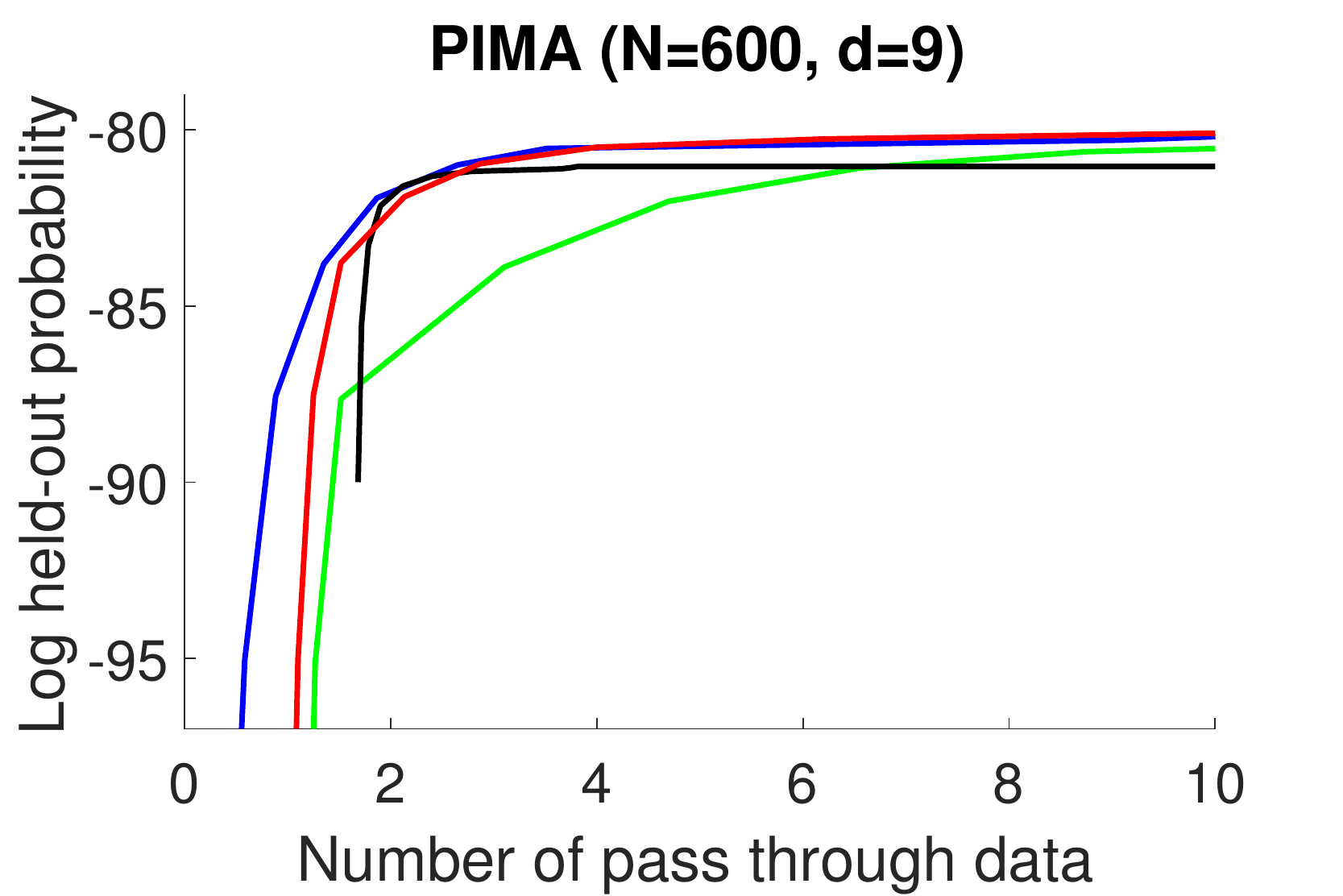} \hspace{-8pt}
\includegraphics[width=0.4\textheight]{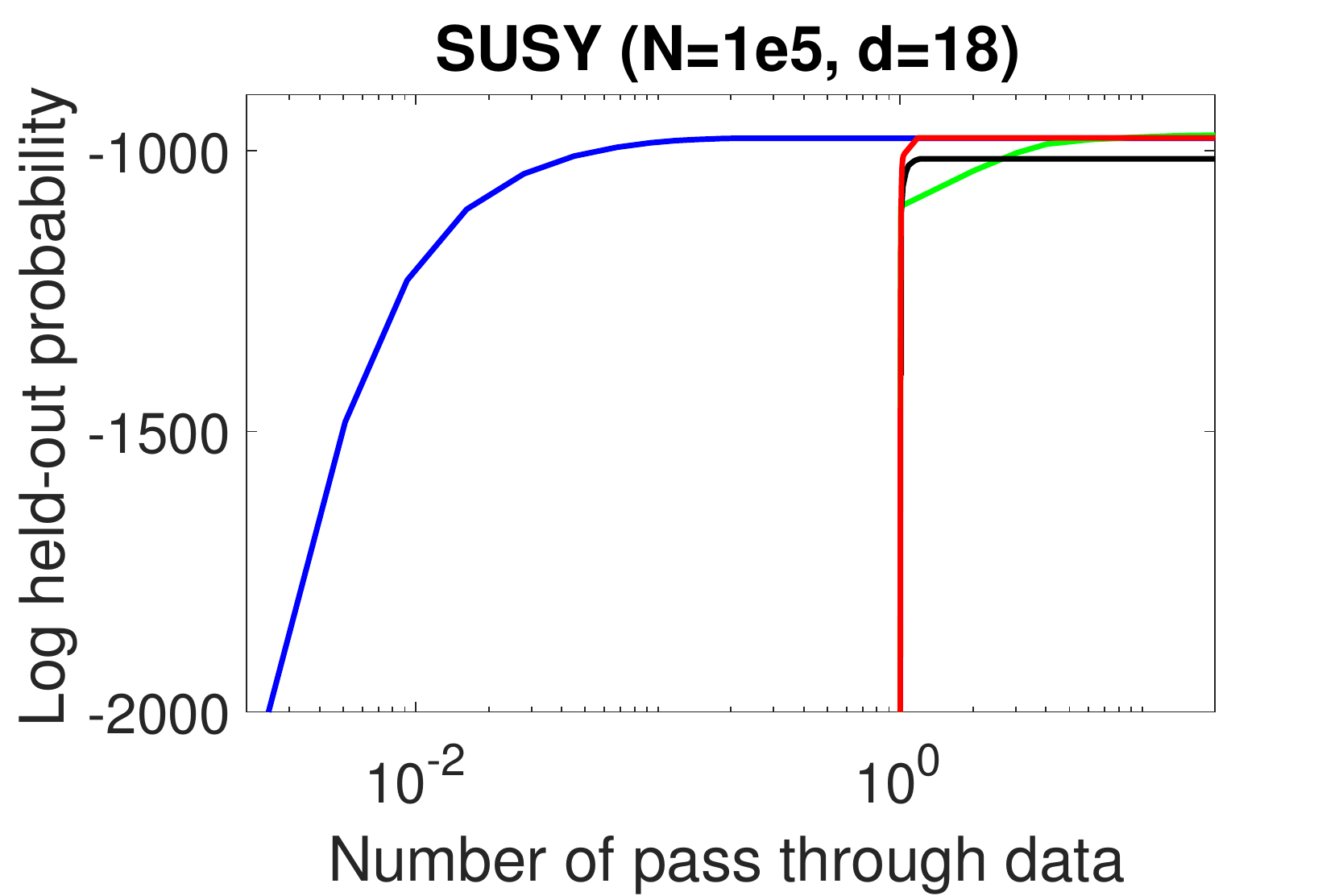}
\caption{
Number of passes through the datasets versus log held-out probability on test datasets.
}
\label{fig:logit}
\end{figure*}

In this section we explore the performance of SG-MCMC with variance reduction via experiments.
We compare SAGA-LD, SVRG-LD (with option II), CV-LD, CV-ULD and use SGLD as the baseline method.

\subsection{Bayesian Logistic Regression}
\label{sec:BayesLogisticRegression}
We demonstrate results from sampling a Bayesian logistic regression model.
We consider an $N \times d$ design matrix ${\bf X}$ comprised of $N$ samples each with $d$ covariates and a binary response variable ${\bf y} \in\{0,1\}^{N}$ \cite{Gelman}.
If we denote the logistic link function by $s(\cdot)$, a Bayesian logistic regression model of the binary response with likelihood $P({\bf y}_i = 1) = s(\beta^T {\bf X}_i)$ is obtained by introducing regression coefficients $\beta\in\mathbb{R}^d$ with a Gaussian prior $\beta\sim\mathcal{N}(0, \alpha I)$, where $\alpha=1$ in the experiments.

We make use of three datasets available at the UCI machine learning repository.
The first two datasets describe the connections between heart disease and diabetes with various patient-specific covariates.
The third dataset captures the generation of supersymmetric particles and its relationship with the kinematic properties of the underlying process.
We use part of the datasets to obtain a mean estimate of the parameters and hold out the rest to test their likelihood under the estimated models.
Sizes of the datasets being used in Bayesian estimation are $100$, $600$, and $1e5$, respectively.

Performance is measured by the log probability of the held-out dataset under the trained model.
We first find the optimal log held-out probability attainable by all the currently methods being tested.
We then target to obtain levels of log held-out probability increasingly closer to the optimal one with each methods.
We record number of passes through data that are required for each method to achieve the desired log held-out probability (averaged over $30$ trials) for comparison in Fig.~\ref{fig:logit}.
We fix the batch size $n=10$ as constant, to explore whether the overall computational cost for SG-MCMC methods can grow sub-linearly (or even be constant) with the overall size of the dataset $N$.
A grid search is performed for the optimal hyperparameters in each algorithm, including an optimal scheduling plan of decreasing stepsizes.
For CV-LD, we first use a stochastic gradient descent with SAGA variance reduction method to find the approximate mode $x^*$.
We then calculate the full data gradient at $x^*$ and initialize the sampling algorithm at $x^*$.

From the experiments, we recover the three regimes displayed in Fig.~\ref{fig:regime} with different data size $N$ and accuracy level with error $\epsilon$.
When $N$ is large, SGLD performs best for big $\epsilon$.
When $N$ is small, CV-LD/ULD is the fastest for relatively big $\epsilon$.
When $N$ and $\epsilon$ are both small so that many passes through data are required, SAGA-LD is the most efficient method.
It is also clear from Fig.~\ref{fig:logit} that although CV-LD/ULD methods initially converges fast, there is a non-decreasing error (with the constant mini-batch size) even after the algorithm converges (corresponding to the last term in Eq.~\eqref{eq:wassersteincontractionalgcv}).
Because CV-LD and CV-ULD both converge fast and have the same non-decreasing error, their performance overlap with each other.
Convergence of SVRG-LD is slower than SAGA-LD, because the control variable for the stochastic gradient is only updated every epoch.
This attribute combined with the need to compute the full gradient periodically makes it less efficient and costlier than SAGA-LD.
We also see that number of passes through the dataset required for SG-MCMC methods (with and without variance reduction) is decreasing with the dataset size $N$. Close observation shows that although the overall computational cost is not constant with growing $N$, it is sublinear.


\subsection{Breaking CLT: Synthetic Log Normal Data}
Many works using SG-MCMC assume that the data in the mini-batches follow the central limit theorem (CLT) such that the stochastic gradient noise is Gaussian.
But as explained by \cite{remi}, if the dataset follows a long-tailed distribution, size of the mini-batch needed for CLT to take effect may exceed that of the entire dataset.
We study the effects of breaking this CLT assumption on the behaviors of SGLD and its variance reduction variants.

We use synthetic data generated from a log normal distribution: $f_X(x) = \dfrac{1}{x}\cdot\dfrac{1}{\sigma\sqrt{2\pi}} \exp\left(-\dfrac{(\ln x - \mu)^2}{2\sigma^2}\right)$ and sample the parameters $\mu$ and $\sigma$ according to the likelihood $p({\bf x}|\mu,\sigma) = \prod_{i=1}^N f_X(x_i)$.
It is worth noting that this target distribution not only breaks the CLT for a wide range of mini-batch sizes, but also violates assumptions \ref{ass:smoothness}-\ref{ass:HessianLipschitz}.

To see whether each method can perform well when CLT assumption is greatly violated, we still let mini-batch size to be $10$ and grid search for the optimal hyperparameters for each method.
We use mean squared error (MSE) as the convergence criteria and take LD as the baseline method to compare and verify convergence.

From the experimental results, we see that SGLD does not converge to the target distribution.
This is because most of the mini-batches only contain data close to the mode of the log normal distribution.
Information about the tail is hard to capture with stochastic gradient.
It can be seen that SAGA-LD and SVRG-LD are performing well because history information is recorded in the gradient so that data in the tail distribution is accounted for.
As in the previous experiments, CV-LD converges fastest at first, but retains a finite error.
For LD, it converges to the same accuracy as SAGA-LD and SVRG-LD after $10^4$ number of passes through data.
The variance reduction methods which uses long term memory may be especially suited to this scenario, where data in the mini-batches violates the CLT assumption.

It is also worth noting that the computation complexity for this problem is higher than our previous experiments.
Number of passes through the entire dataset is on the order of $10^2\sim10^3$ to reach convergence even for SAGA-LD and SVRG-LD.
It would be interesting to see whether non-uniform subsampling of the dataset \cite{non_uniform} can accelerate the convergence of SG-MCMC even more.

\begin{figure}[h!]
\centering
\includegraphics[width=0.4\textheight]{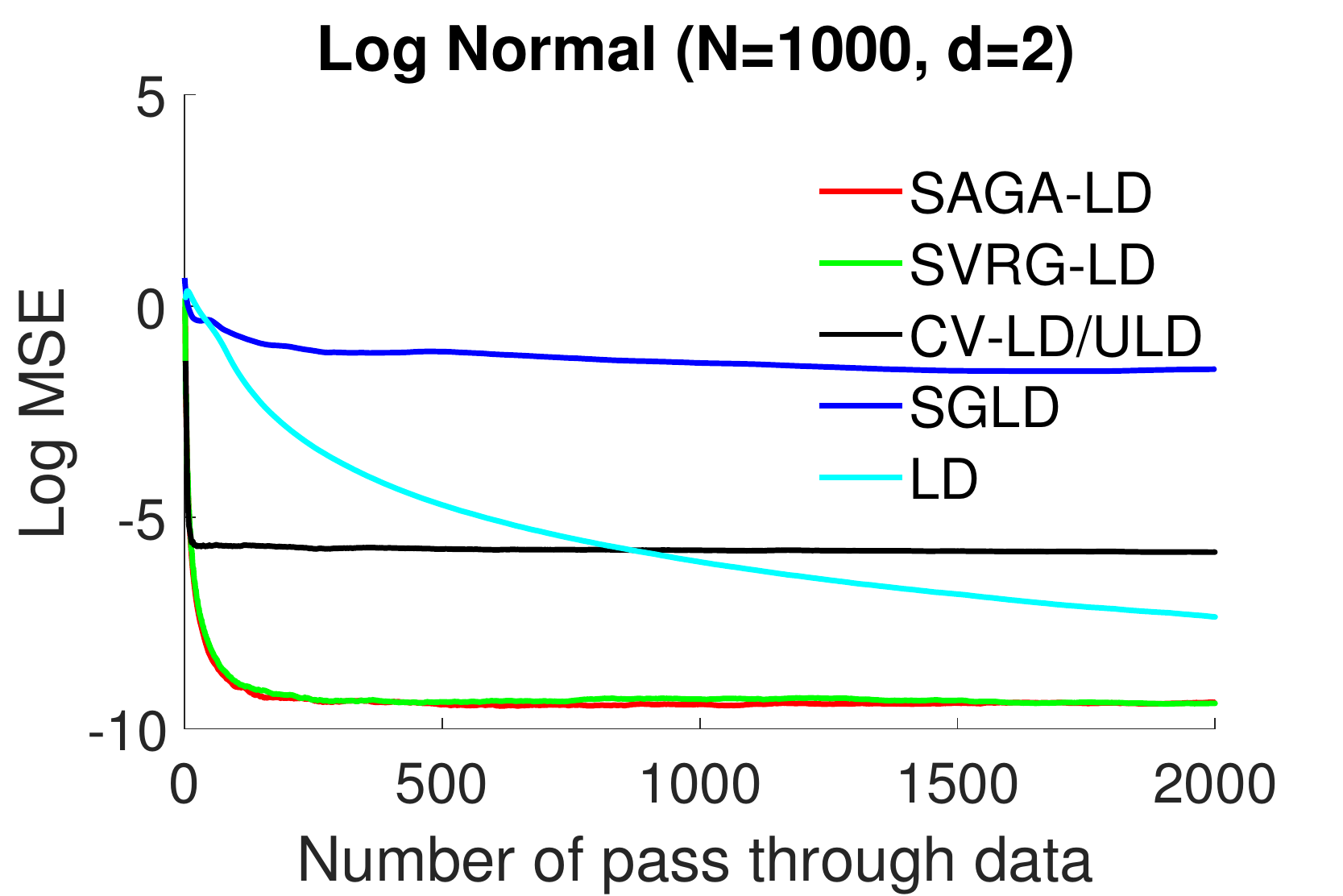}
\vspace{-5pt}
\caption{
Number of passes through the datasets versus log mean square error (MSE).
}
\label{fig:lognormal}
\end{figure}

\section{Conclusions}
In this paper, we derived new theoretical results for variance-reduced stochastic gradient MC.  Our theory allows us to accurately classify two major regimes. When a low-accuracy solution is desired and less than one pass on the data is sufficient, SGLD should be preferred. When high accuracy is needed, variance-reduced methods are much more powerful. There are a number of further directions worth pursuing.  It would be 
of interest to connect sampling with advances in finite-sum optimization. specifically advances in accelerated gradient \citep{LinMaiHar15} or single-pass methods~\citep{LeiJor16}. Finally the development of a theory of lower bounds for sampling 
will be an essential counterpart to this work.
\nocite{*}
\bibliography{ref}

\begin{thebibliography}{36}
\providecommand{\natexlab}[1]{#1}
\providecommand{\url}[1]{\texttt{#1}}
\expandafter\ifx\csname urlstyle\endcsname\relax
  \providecommand{\doi}[1]{doi: #1}\else
  \providecommand{\doi}{doi: \begingroup \urlstyle{rm}\Url}\fi

\bibitem[Baker et~al.(2017)Baker, Fearnhead, Fox, and Nemeth]{baker2017control}
J.~Baker, P.~Fearnhead, E.~B. Fox, and C.~Nemeth.
\newblock Control variates for stochastic gradient {MCMC}.
\newblock \emph{arXiv preprint arXiv:1706.05439}, 2017.

\bibitem[Bardenet et~al.(2017)Bardenet, Doucet, and Holmes]{remi}
R.~Bardenet, A.~Doucet, and C.~C. Holmes.
\newblock {On Markov chain Monte Carlo methods for tall data}.
\newblock \emph{Journal of Machine Learning Research}, 18\penalty0
  (47):\penalty0 1--43, 2017.

\bibitem[Bierkens et~al.(2016)Bierkens, Fearnhead, and
  Roberts]{bierkens2016zig}
J.~Bierkens, P.~Fearnhead, and G.~Roberts.
\newblock The {Z}ig-{Z}ag process and super-efficient sampling for {B}ayesian
  analysis of big data.
\newblock \emph{arXiv preprint arXiv:1607.03188}, 2016.

\bibitem[Chen et~al.(2017)Chen, Wang, Zhang, Su, and
  Carin]{chen2017convergence}
C.~Chen, W.~Wang, Y.~Zhang, Q.~Su, and L.~Carin.
\newblock {A convergence analysis for a class of practical variance-reduction
  stochastic gradient MCMC}.
\newblock \emph{arXiv preprint arXiv:1709.01180}, 2017.

\bibitem[Chen et~al.(2014)Chen, Fox, and Guestrin]{SGHMC}
T.~Chen, E.~B. Fox, and C.~Guestrin.
\newblock Stochastic gradient {H}amiltonian {M}onte {C}arlo.
\newblock In \emph{Proceeding of 31st International Conference on Machine
  Learning (ICML'14)}, 2014.

\bibitem[Cheng and Bartlett(2017)]{cheng2017convergence}
X.~Cheng and P.~L. Bartlett.
\newblock {Convergence of Langevin MCMC in KL-divergence}.
\newblock \emph{arXiv preprint arXiv:1705.09048}, 2017.

\bibitem[Cheng et~al.(2017)Cheng, Chatterji, Bartlett, and
  Jordan]{cheng2017underdamped}
X.~Cheng, N.~S. Chatterji, P.~L. Bartlett, and M.~I. Jordan.
\newblock {Underdamped Langevin MCMC:} a non-asymptotic analysis.
\newblock \emph{arXiv preprint arXiv:1707.03663}, 2017.

\bibitem[Clark(1987)]{clark1987short}
D.~S. Clark.
\newblock Short proof of a discrete {Gronwall} inequality.
\newblock \emph{Discrete applied mathematics}, 16\penalty0 (3):\penalty0
  279--281, 1987.

\bibitem[Dalalyan(2017{\natexlab{a}})]{Dal14}
A.~Dalalyan.
\newblock Theoretical guarantees for approximate sampling from a smooth and
  log-concave density.
\newblock \emph{J. R. Stat. Soc. B}, 79:\penalty0 651--676, 2017{\natexlab{a}}.

\bibitem[Dalalyan(2017{\natexlab{b}})]{dalalyan2017further}
A.~Dalalyan.
\newblock Further and stronger analogy between sampling and optimization:
  {Langevin Monte Carlo} and gradient descent.
\newblock In \emph{Proceedings of the 2017 Conference on Learning Theory},
  volume~65 of \emph{Proceedings of Machine Learning Research}, pages 678--689.
  PMLR, 07--10 Jul 2017{\natexlab{b}}.

\bibitem[Dalalyan and Karagulyan(2017)]{dalalyan2017user}
A.~Dalalyan and A.~Karagulyan.
\newblock User-friendly guarantees for the {Langevin Monte Carlo with
  inaccurate gradient}.
\newblock \emph{arXiv preprint arXiv:1710.00095}, 2017.

\bibitem[Defazio(2016)]{defazio2016simple}
A.~Defazio.
\newblock A simple practical accelerated method for finite sums.
\newblock In \emph{Advances in Neural Information Processing Systems 29}, pages
  676--684, 2016.

\bibitem[Defazio et~al.(2014)Defazio, Bach, and Lacoste{-}Julien]{DefBacLac14}
A.~Defazio, F.~Bach, and S.~Lacoste{-}Julien.
\newblock {SAGA:} {a} fast incremental gradient method with support for
  non-strongly convex composite objectives.
\newblock In \emph{Advances in Neural Information Processing Systems 27}, pages
  1646--1654, 2014.

\bibitem[Dubey et~al.(2016)Dubey, Reddi, Williamson, Poczos, Smola, and
  Xing]{dubey2016variance}
K.~A. Dubey, S.~J. Reddi, S.~A. Williamson, B.~Poczos, A.~J. Smola, and E.~P.
  Xing.
\newblock {Variance reduction in stochastic gradient Langevin dynamics}.
\newblock In \emph{Advances in Neural Information Processing Systems 29}, pages
  1154--1162, 2016.

\bibitem[Durmus and Moulines(2016)]{durmus2016sampling}
A.~Durmus and E.~Moulines.
\newblock High-dimensional {Bayesian inference via the unadjusted Langevin
  algorithm}.
\newblock \emph{arXiv preprint arXiv:1605.01559}, 2016.

\bibitem[Durmus and Moulines(2017)]{durmus2017}
A.~Durmus and E.~Moulines.
\newblock {Nonasymptotic convergence analysis for the unadjusted {L}angevin
  algorithm}.
\newblock \emph{Ann. Appl. Probab.}, 27\penalty0 (3):\penalty0 1551--1587, 06
  2017.

\bibitem[Durmus et~al.(2016)Durmus, Simsekli, Moulines, Badeau, and
  Richard]{Dur16}
A.~Durmus, U.~Simsekli, E.~Moulines, R.~Badeau, and G.~Richard.
\newblock Stochastic gradient {Richardson-Romberg Markov} chain {Monte Carlo}.
\newblock In \emph{Advances in Neural Information Processing Systems 29}, pages
  2047--2055, 2016.

\bibitem[Gelman et~al.(2004)Gelman, Carhn, Stern, and Rubin]{Gelman}
A.~Gelman, J.~B. Carhn, H.~S. Stern, and D.~B. Rubin.
\newblock \emph{Bayesian Data Analysis}.
\newblock Chapman and Hall, 2004.

\bibitem[Girolami and Calderhead(2011)]{rhmc}
M.~Girolami and B.~Calderhead.
\newblock Riemann manifold {L}angevin and {H}amiltonian {M}onte {C}arlo
  methods.
\newblock \emph{J. R. Stat. Soc. Ser. B Stat. Methodol.}, 73\penalty0
  (2):\penalty0 123--214, 2011.

\bibitem[Greensmith et~al.(2004)Greensmith, Bartlett, and
  Baxter]{greensmith2004variance}
E.~Greensmith, P.~L. Bartlett, and J.~Baxter.
\newblock Variance reduction techniques for gradient estimates in reinforcement
  learning.
\newblock \emph{Journal of Machine Learning Research}, 5\penalty0
  (Nov):\penalty0 1471--1530, 2004.

\bibitem[Hofmann et~al.(2015)Hofmann, Lucchi, Lacoste-Julien, and
  McWilliams]{hofmann2015variance}
T.~Hofmann, A.~Lucchi, S.~Lacoste-Julien, and B.~McWilliams.
\newblock Variance reduced stochastic gradient descent with neighbors.
\newblock In \emph{Advances in Neural Information Processing Systems 28}, pages
  2305--2313, 2015.

\bibitem[Johnson and Zhang(2013)]{zhanjon13}
R.~Johnson and T.~Zhang.
\newblock Accelerating stochastic gradient descent using predictive variance
  reduction.
\newblock In \emph{Advances in Neural Information Processing Systems 26}, pages
  315--323, 2013.

\bibitem[Lei and Jordan(2017)]{LeiJor16}
L.~Lei and M.~I. Jordan.
\newblock Less than a single pass: stochastically controlled stochastic
  gradient.
\newblock In \emph{Proceedings of the 20th International Conference on
  Artificial Intelligence and Statistics}, volume~54 of \emph{Proceedings of
  Machine Learning Research}, pages 148--156. PMLR, 20--22 Apr 2017.

\bibitem[Lin et~al.(2015)Lin, Mairal, and Harchaoui]{LinMaiHar15}
H.~Lin, J.~Mairal, and Z.~Harchaoui.
\newblock A universal catalyst for first-order optimization.
\newblock In \emph{Advances in Neural Information Processing Systems 28}, pages
  3384--3392. 2015.

\bibitem[Ma et~al.(2015)Ma, Chen, and Fox]{completesample}
Y.-A. Ma, T.~Chen, and E.~B. Fox.
\newblock A complete recipe for stochastic gradient {MCMC}.
\newblock In \emph{Advances in Neural Information Processing Systems 28}, pages
  2899--2907. 2015.

\bibitem[M{\"o}rters and Peres(2010)]{morters2010brownian}
P.~M{\"o}rters and Y.~Peres.
\newblock \emph{Brownian Motion}, volume~30.
\newblock Cambridge University Press, 2010.

\bibitem[Nagapetyan et~al.(2017)Nagapetyan, Duncan, Hasenclever, Vollmer,
  Szpruch, and Zygalakis]{nagapetyan2017true}
T.~Nagapetyan, A.~B. Duncan, L.~Hasenclever, S.~J. Vollmer, L.~Szpruch, and
  K.~Zygalakis.
\newblock {The true cost of stochastic gradient Langevin dynamics}.
\newblock \emph{arXiv preprint arXiv:1706.02692}, 2017.

\bibitem[Neal et~al.(2011)]{neal2011mcmc}
R.~M. Neal et~al.
\newblock {MCMC using Hamiltonian dynamics}.
\newblock \emph{Handbook of Markov Chain Monte Carlo}, 2\penalty0 (11), 2011.

\bibitem[Pavliotis(2016)]{pav}
G.~A. Pavliotis.
\newblock \emph{{Stochastic Processes and Applications}}.
\newblock Springer, 2016.

\bibitem[Ripley(2009)]{ripley2009stochastic}
B.~D. Ripley.
\newblock \emph{Stochastic Simulation}, volume 316.
\newblock John Wiley \& Sons, 2009.

\bibitem[Robbins and Monro(1951)]{SGD}
H.~Robbins and S.~Monro.
\newblock A stochastic approximation method.
\newblock \emph{The Annals of Mathematical Statistics}, 22\penalty0
  (3):\penalty0 400--407, 09 1951.

\bibitem[Schmidt et~al.(2015)Schmidt, Babanezhad, Ahmed, Defazio, Clifton, and
  Sarkar]{non_uniform}
M.~Schmidt, R.~Babanezhad, M.~O. Ahmed, A.~Defazio, A.~Clifton, and A.~Sarkar.
\newblock Non-uniform stochastic average gradient method for training
  conditional random fields.
\newblock In \emph{18th International Conference on Artificial Intelligence and
  Statistics}, 2015.

\bibitem[Schmidt et~al.(2017)Schmidt, Le~Roux, and Bach]{schmidt2017minimizing}
M.~Schmidt, N.~Le~Roux, and F.~Bach.
\newblock Minimizing finite sums with the stochastic average gradient.
\newblock \emph{Mathematical Programming}, 162\penalty0 (1-2):\penalty0
  83--112, 2017.

\bibitem[Shalev-Shwartz and Zhang(2013)]{ShaZha13}
S.~Shalev-Shwartz and T.~Zhang.
\newblock Stochastic dual coordinate ascent methods for regularized loss
  minimization.
\newblock \emph{J. Mach. Learn. Res.}, 14:\penalty0 567--599, 2013.

\bibitem[Villani(2008)]{villani}
C.~Villani.
\newblock \emph{{Optimal Transport: Old and New}}.
\newblock Springer Science and Business Media, 2008.

\bibitem[Welling and Teh(2011)]{SGLD}
M.~Welling and Y.~W. Teh.
\newblock Bayesian learning via stochastic gradient {L}angevin dynamics.
\newblock In \emph{Proceedings of the 28th International Conference on Machine
  Learning}, pages 681--688, June 2011.

\end{thebibliography}

\newpage
\onecolumn
\appendix
\flushleft{\textbf{\Large Appendix}}
\subsubsection*{Organization of the Appendix}
In Appendix \ref{app:wass} we formally define the Wasserstein distance. In Appendix \ref{app:svrgsaga} we introduce the notations required to prove Theorems \ref{thm:mainsaga} and \ref{thm:svrgtheorem}. In Appendix \ref{app:svrgappendix} we prove Theorem \ref{thm:svrgtheorem} and then in Appendix \ref{app:sagaproof} we prove Theorem \ref{thm:mainsaga}. Finally in Appendix \ref{app:controlvariates} we prove Theorem \ref{thm:controlvariatethm}.
\section{Wasserstein Distance} \label{app:wass}
We formally define the Wasserstein distance in this section. Denote by $\mathcal{B}(\mathbb{R}^d)$ the Borel $\sigma$-field of $\mathbb{R}^d$. Given probability measures $\mu$ and $\nu$ on $(\mathbb{R}^d,\mathcal{B}(\mathbb{R}^d))$, we define a \emph{transference plan} $\zeta$ between $\mu$ and $\nu$ as a probability measure on $(\mathbb{R}^d \times \mathbb{R}^d,\mathcal{B}(\mathbb{R}^d\times \mathbb{R}^d))$ such that for all sets $A \in \mathbb{R}^d$, $\zeta(A\times \mathbb{R}^d) = \mu(A)$ and $\zeta( \mathbb{R}^d \times A) = \nu(A)$. We denote $\Gamma(\mu,\nu)$ as the set of all transference plans. A pair of random variables $(X,Y)$ is called a coupling if there exists a $\zeta \in \Gamma(\mu,\nu)$ such that $(X,Y)$ are distributed according to $\zeta$. With some abuse of notation, we will also refer to $\zeta$ as the coupling. 

We define the Wasserstein distance of order two between a pair of probability measures as follows:
\begin{align*}
W_2(\mu,\nu) :=\left(\inf_{\zeta\in\Gamma(\mu,\nu)} \int \lVert x-y\rVert_2^2 d\zeta(x,y) \right)^{1/2}.
\end{align*}
Finally we denote by $\Gamma_{opt}(\mu,\nu)$ the set of transference plans that achieve the infimum in the definition of the Wasserstein distance between $\mu$ and $\nu$ \citep[see, e.g.,][for more properties of $W_2(\cdot,\cdot)$]{villani}.

\section{SVRG and SAGA: Proofs and Discussion}
\label{app:svrgsaga}In this section we will prove Theorem \ref{thm:mainsaga} and Theorem \ref{thm:svrgtheorem} and include details about Algorithms \ref{alg:saga} and \ref{alg:svrg} that were omitted in our discussion in the main paper. Throughout this section we assume that assumptions \ref{ass:decom}-\ref{ass:HessianLipschitz} holds. First we define the continuous time (overdamped) Langevin diffusion process defined by the It\^{o} SDE:
\begin{align}
\label{eq:contilangevin}
dx_t = - \nabla f(x_t) dt + \sqrt{2}dB_t,
\end{align}
here $x_t \in \mathbb{R}^d$, $B_t$ is a standard Brownian motion process and $f(\cdot)$ is a drift added to the process, with the initial condition that $x_0 \sim p_0$. Under fairly mild assumptions, for example if $\exp(-f(x)) \in L^{1}$ (absolutely integrable), then the unique invariant distribution of the process \eqref{eq:contilangevin} is $p^*(x) \propto \exp(-f(x))$. The Euler-Mayurama discretization of this process can be denoted by the It\^{o} SDE:
\begin{align} \label{eq:disclangevin}
d\tilde{x}_t = - \nabla f(\tilde{x}_0) dt + \sqrt{2}dB_t.
\end{align}
Note that in contrast to the process \eqref{eq:contilangevin}, the process \eqref{eq:disclangevin} is driven by the drift evaluated at a fixed initial point $\tilde{x}_0$. In our case we don't have access to the gradient of function $f(\cdot)$, but only to an unbiased estimate of the function, $g_k$ (as defined in Eq.~\eqref{eq:sagagradientupdate} and Eq.~\eqref{eq:svrggradientupdate} for $k \in \{1,\ldots, T\}$). This gives rise to an It\^{o} SDE:
\begin{align}\label{eq:SVRGlangevin}
d\hat{x}_t = -g_k dt + \sqrt{2}dB_t.
\end{align}
Throughout this section we will denote by $\{x_k\}_{k=1}^{T}$ the iterates of Algorithm \ref{alg:saga} or Algorithm \ref{alg:svrg}. Also we will define the distribution of the $k^{th}$ iterate of  Algorithm \ref{alg:saga} or Algorithm \ref{alg:svrg} by $p^{(k)}$. With this notation in place we are now ready to present the proofs of Theorem \ref{thm:svrgtheorem} and Theorem \ref{thm:mainsaga}.
\subsubsection*{Proof Overview and Techniques}
In both the proof of Theorem \ref{thm:mainsaga} and Theorem \ref{thm:svrgtheorem} we draw from and sharpen techniques established in the literature of analyzing Langevin MCMC methods and variance reduction techniques in optimization. In both the proofs we use Lyapunov functions that are standard in the optimization literature for analyzing these methods; we use them to define Wasserstein distances and adapt methods from analysis of sampling algorithms to proceed. 
\subsection{Stochastic Variance Reduced Gradient Langevin Monte Carlo}
\label{app:svrgappendix}In the proof of SVRG for Langevin diffusion, it is common to consider the Lyapunov function to the standard 2-norm. We define a Wasserstein distance with respect to distance -- $\lv x_k - y_k \rv_2^2$.
\begin{proof}[Proof of Theorem \ref{thm:svrgtheorem}]
For any $k \in \{u\tau,\ldots,(u+1)\tau\}$ for some integer $u\in \{0,\ldots,\floor*{T/\tau} \}$, let $y_k$ be a random vector drawn from $p^*$ such that it is optimally coupled to $x_k$, that is, $W_2^2(p^{(k)},p^*) = \mathbb{E}\left[\lv y_k - x_k \rv_2^2\right]$. We also define $\tilde{x}^{u}$ and $\tilde{y}^{u}$ (corresponding to the $u^{th}$ instance of $\tilde{x}$ being updated) analogously, such that $\tilde{y}^{u}\sim p^*$ and $\tilde{x}^{u}$ and $\tilde{y}^{u}$ are optimally coupled. We drop the $u$ in superscript in the proof to simplify notation. We also assume that the random selection of the set of indices to be updated $S$ ($S$ depends on the iteration $k$, but we drop this dependence in the proof to simplify notation) in Algorithm \ref{alg:svrg} is independent of $y_k$. We evolve the random variable $y_k$ under the continuous process described by \eqref{eq:contilangevin},
\begin{align*}
dy_t = -\nabla f(y_t) dt + \sqrt{2}dB_t,
\end{align*}
where the Brownian motion is independent of $(x_k,y_k,S)$. Thus integrating the above SDE we get upto time $\delta$ (the step-size),
\begin{align}\label{eq:ydeltadef}
y_{k+1} = y_k - \int_{0}^{\delta} \nabla f(y_s) ds + \sqrt{2\delta}\xi_k, \qquad \forall \delta>0,
\end{align}
where $\xi_k \sim N(0,I_{d\times d})$. Note that since $y_k \sim p^*$, we have that $y_{k+1} \sim p^*$. Similarly we also have that the next iterate $x_{k+1}$ is given by
\begin{align}\label{eq:xdeltadef}
x_{k+1} = x_{k} -\delta g_k + \sqrt{2\delta}\xi_k,
\end{align}
where $\xi_k$ is the same normally distributed random variable as in \eqref{eq:ydeltadef}. Let us define $\Delta_k := y_k - x_k$ and $\Delta_{k+1} := y_{k+1} - x_{k+1}$.  
Also define
\begin{align*}
V_k : = \int_{k\delta}^{(k+1)\delta} \left(\nabla f(y_s) - \nabla f(y_k) - \sqrt{2}\int_{k\delta}^{s}\nabla^2 f(y_r)dB_r \right)ds.
\end{align*}
Now that we have the notation setup, we will prove the first part of this Theorem. We procede in 5 steps. In Step 1 we will express $\lv \Delta_{k+1} \rv_2^2$ in terms of $\lv \Delta_{k+1} + V_k \rv_2^2$ and $\lv V_k \rv_2^2$, in Step 2 we will control the expected value of $\lv V_k \rv_2^2$. In Step 3 we will express $\Delta_{k+1}$ in terms of $\Delta_k$ and other terms, while in Step 4 we will use the characterization of $\Delta_{k+1}$ in terms of $\Delta_k$ combined with the techniques established by \cite{dubey2016variance} to bound the expected value of $\lv \Delta_{k+1} + V_k \rv_2^2$. Finally in Step 5 we will put this all together and establish our result. First we prove the result for Algorithm \ref{alg:svrg} run with Option I.

\textbf{Step 1:} By Young's inequality we have that $\forall a>0$,
\begin{align} \label{eq:mastersvrgcontrol1}
\lv \Delta_{k+1} \rv_2^2 & = \lv \Delta_{k+1} +V_k - V_k\rv_2^2 \le \left(1+ a\right)\lv \Delta_{k+1} + V_k \rv_2^2 + \left(1+\frac{1}{a} \right) \lv V_k \rv_2^2.
\end{align}
We will choose $a$ at a later stage in the proof to minimize the bound on the right hand side.

\textbf{Step 2:} By Lemma 6 of \cite{dalalyan2017user} we have the bound,
\begin{align}
\mathbb{E}\left[\lv V_k \rv_2^2 \right] \le \left(\frac{\delta^2 L d}{2}+ \frac{\delta^2 M^{3/2}\sqrt{d}}{2} \right)^2 \le \frac{\delta^4}{2}\left(L^2d^2 + M^3 d \right).\label{eq:Vbound}
\end{align}
\textbf{Step 3:} Next we will bound the other term in \eqref{eq:mastersvrgcontrol1}, $\lv \Delta_{k+1} + V_k \rv_2^2$. First we express $\Delta_{k+1}$ in terms of $\Delta_k$,
\begin{align*}
\Delta_{k+1} &= \Delta_{k} + (y_{k+1}- y_k) - (x_{k+1}-x_k) \\
& = \Delta_k + \left(-\int_{k\delta}^{(k+1)\delta}\nabla f(y_s) ds + \sqrt{2\delta}\xi_k \right) - \left(-\int_{k \delta}^{(k+1)\delta} g_k ds + \sqrt{2\delta}\xi_k \right) \\
& = \Delta_k -\int_{k\delta}^{(k+1)\delta}\left(\nabla f(y_s) - g_k \right)ds \\
& = \Delta_k - \int_{k\delta}^{(k+1)\delta}\left(\nabla f(y_s) - \nabla f(y_k) + \nabla (y_k)-\nabla f(x_k) + \nabla f(x_k) -g_k\right)ds \\
& = \Delta_k - \delta(\underbrace{\nabla f(y_k) - \nabla f(x_k)}_{=: U_k}) - \int_{k\delta}^{(k+1)\delta}\left(\nabla f(y_s)-\nabla f(y_k)\right) ds+ \delta\underbrace{\left(-\nabla f(x_k) + g_k \right)}_{=:\zeta_k} \\
& = \Delta_k -\delta U_k + \delta \zeta_k -V_k -\underbrace{\sqrt{2}\int_{k\delta}^{(k+1)\delta} \int_{k\delta}^s \nabla^2 f(y_r)dB_r ds}_{=: \delta \Psi_k} = \Delta_k -V_k -\delta(U_k+\Psi_k +\zeta_k). \numberthis \label{eq:deltak1deltakrelation}
\end{align*}
\textbf{Step 4:} Using the above characterization of $\Delta_{k+1}$ in terms of $\Delta_k$ established above, we get
\begin{align*}
\lv \Delta_{k+1} + V_k \rv_2^2 &= \lv \Delta_k -\delta(U_k+\Psi_k+\zeta_k)\rv_2^2 \\
& = \lv \Delta_k \rv_2^2 -2\delta\langle \Delta_k, U_k + \Psi_k + \zeta_k \rangle + \delta^2 \lv U_k + \Psi_k +\zeta_k \rv_2^2.
\end{align*}
Now we take expectation with respect to all sources of randomness (Brownian motion and the randomness in the choice of $S$) conditioned on $x_k,y_k,\tilde{x}$ and $\tilde{y}$ (thus $\Delta_k$ is fixed). Recall that conditioned on $\Delta_k, \tilde{x}$ and $\tilde{y}$, $\Psi_k$ and $\zeta_k$ are zero mean, thus we get
\begin{align*}
\mathbb{E}_{k}\left[\lv\Delta_{k+1} + V_{k}\rv_2^2\right] & = \lv \Delta_k \rv_2^2 -2\delta \underbrace{\langle \nabla f(y_k) - \nabla f(x_k), y_k - x_k \rangle}_{=:\Omega_1} + \delta^2 \underbrace{\mathbb{E}_k \left[\lv U_k+\Psi_k+\zeta_k \rv_2^2 \right]}_{=:\Omega_2},
\end{align*}
where $\mathbb{E}_k\left[\cdot\right]$ denotes conditioning on $x_k$ and $y_k$. First we bound $\Omega_2$
\begin{align*}
\Omega_2 & = \mathbb{E}_k \left[\lv U_k+\Psi_k+\zeta_k \rv_2^2 \right] \\
& = \mathbb{E}_k \left[\lv \nabla f(y_k) - g_k +\Psi_k \rv_2^2 \right] \\
& = \mathbb{E}_k \left[\lv \nabla f(y_k) -  \nabla f(\tilde{x}) - \frac{N}{n}\sum_{i \in S} \left[\nabla f_i(x_k) - \nabla f_i(\tilde{x}) \right] +\Psi_k \rv_2^2 \right]  \\
& = \mathbb{E}_k \Big[\lv \nabla f(y_k) -  \nabla f(x_k) + \nabla f(x_k)-\nabla f(\tilde{y}) - \frac{N}{n}\sum_{i \in S} \left[\nabla f_i(x_k) - \nabla f_i(\tilde{y}) \right]\\ & \qquad \qquad \qquad \qquad \qquad  \qquad + \nabla f(\tilde{y})-\nabla f(\tilde{x})
 - \frac{N}{n}\sum_{i \in S} \left[ \nabla f_i(\tilde{y}) -\nabla f_i(\tilde{x})\right] +\Psi_k \rv_2^2 \Big],
\end{align*}
where in the second equality we used the definition of $U_k$ and $\zeta_k$, while in the third equality we used the definition of $g_k$. By Young's inequality we now have,
\begin{align*}
\Omega_2 & \le \underbrace{4\mathbb{E}_k \left[\lv \nabla f(y_k) -  \nabla f(x_k) \rv_2^2\right]}_{=:\omega_1} + \underbrace{4\mathbb{E}_k \left[\lv\nabla f(x_k)-\nabla f(\tilde{y}) - \frac{N}{n}\sum_{i \in S} \left[\nabla f_i(x_k) - \nabla f_i(\tilde{y}) \right] \rv_2^2\right]}_{=:\omega_2} \\ & \qquad \qquad \qquad \qquad + \underbrace{4\mathbb{E}_k \left[\lv\nabla f(\tilde{y})-\nabla f(\tilde{x})
 - \frac{N}{n}\sum_{i \in S} \left[ \nabla f_i(\tilde{y}) -\nabla f_i(\tilde{x})\right] \rv_2^2\right]}_{=:\omega_3} +\underbrace{4\mathbb{E}_k\left[\lv \Psi_k \rv_2^2 \right]}_{=:\omega_4}. \numberthis \label{eq:omega2masterequation}
\end{align*}
Now we bound each of the 4 terms. First for $\omega_1$ by $M$-smoothness of $f$ we get,
\begin{align*}
\omega_1 = 4\mathbb{E}_k \left[\lv \nabla f(y_k) -  \nabla f(x_k) \rv_2^2\right] \le 4 M \Omega_1.
\end{align*}
Next we upper bound $\omega_4$,
\begin{align}\label{eq:omega4upperbound}
\omega_4 = 4\mathbb{E}_k\left[\lv \Psi_k \rv_2^2 \right] = \frac{8}{\delta^2}\left\lv \int_0^{\delta}(\delta - r) \nabla^2 f(y_r)dB_r \right\rv_2^2 & = \frac{8}{\delta^2}\int_0^{\delta} (\delta - r)^2 \mathbb{E}\left[\lv \nabla^2 f(y_r)\rv_{F}^2 \right]dr \nonumber\\ &\le \frac{8M^2\delta d}{3}. 
\end{align}
Next we will control $\omega_3$. Let us define the random variable $\beta^{(i)} := n(\nabla f(\tilde{y}) - \nabla f(\tilde{x}))/N - \nabla f_i(\tilde{x}) + \nabla f_i(\tilde{y})$. Observe that $\beta$ is zero mean (taking expectation over choice of $i$). Thus we have,
\begin{align*}
\omega_3 & = 4\mathbb{E}_k \left[\lv\nabla f(\tilde{y})-\nabla f(\tilde{x})
 - \frac{N}{n}\sum_{i \in S} \left[ \nabla f_i(\tilde{y}) -\nabla f_i(\tilde{x})\right] \rv_2^2\right]  = 4 \mathbb{E}_k \left[\lv \frac{N}{n} \sum_{i=1}^n \beta^{(i)} \rv_2^2\right] \\
 & \overset{(i)}{\le} \frac{4N^2}{n^2} \sum_{i=1}^n \mathbb{E}_k\left[ \lv \beta^{(i)} \rv_2^2 \right]  = \frac{4 N^2}{n} \mathbb{E}_k \left[\lv \beta \rv_2^2 \right] \overset{(ii)}{\le} \frac{4N}{n} \left[\sum_{i=1}^N \mathbb{E}_k\left[\lv \nabla  f_i(\tilde{x}) -  \nabla f_i(\tilde{y}) \rv_2^2 \right]\right] \\
 & \overset{(iii)}{\le} \frac{8M}{n}\mathcal{D}_f(\tilde{x},\tilde{y}), \numberthis \label{eq:omega3independence}
\end{align*}
where $(i)$ follows as $\beta^{(i)}$ are zero-mean and independent random variables, $(ii)$ follows by the fact that for any random variable $R$, $\mathbb{E}\left[\lv R- \mathbb{E} R \rv_2^2\right] \le \mathbb{E}\left[ \lv R \rv_2^2\right]$ and by the fact that $\mathbb{E}\left[\nabla f_i(\tilde{y}) - \nabla f_i(\tilde{x})\right] = \sum_{i=1}^N \nabla f_i(\tilde{y}-\nabla f_i(\tilde{x}))/N$. Finally $(iii)$ follows by using the $\tilde{M}$ ($M/N$) smoothness of each $f_i$. Finally we bound $\omega_2$
\begin{align*}
\omega_2 & = 4\mathbb{E}_k \left[\lv\nabla f(x_k)-\nabla f(\tilde{y}) - \frac{N}{n}\sum_{i \in S} \left[\nabla f_i(x_k) - \nabla f_i(\tilde{y}) \right] \rv_2^2\right] \\
& \overset{(i)}{\le}  8\mathbb{E}_k \left[\lv\nabla f(x_k)-\nabla f(y_k) - \frac{N}{n}\sum_{i \in S} \left[\nabla f_i(x_k) - \nabla f_i(y_k) \right] \rv_2^2\right] \\
& + 8\mathbb{E}_k \left[\lv\nabla f(y_k)-\nabla f(\tilde{y}) - \frac{N}{n}\sum_{i \in S} \left[\nabla f_i(y_k) - \nabla f_i(\tilde{y}) \right] \rv_2^2\right] \\
& \overset{(ii)}{\le} \frac{8M}{n}\Omega_1 + \frac{8N}{n} \left[\sum_{i=1}^N \underbrace{\mathbb{E}_k\left[\lv \nabla  f_i(\tilde{y}) -  \nabla f_i(y_k) \rv_2^2\right]}_{=:\tilde{\omega}_{2,i}}\right], \numberthis \label{eq:omega2masterbound}
\end{align*}
where $(i)$ follows by Young's inequality and $(ii)$ is by the same techniques used to control $\omega_3$ applied to the two terms. We will now control $\tilde{\omega}_{2,i}$ using a techniques introduced by \cite{dubey2016variance},
\begin{align*}
\tilde{\omega}_{2,i} & = \mathbb{E}_k\left[\lv \nabla  f_i(\tilde{y}) -  \nabla f_i(y_k) \rv_2^2 \right]  \overset{(i)}{\le} \frac{M^2}{N^2}\mathbb{E}_k\left[ \lv \tilde{y} - y_k \rv_2^2 \right] \\ &\overset{(ii)}{=} \frac{M^2}{N^2} \mathbb{E}_k\left[\lv -\int_{u\delta}^{k\delta} \nabla f(y_s) ds +\sqrt{2}\int_{u\delta}^{k\delta} dB_r \rv_2^2 \right]\\
& \overset{(iii)}{\le} \frac{2M^2}{N^2} \mathbb{E}_k\left[\lv \int_{u\delta}^{k\delta} \nabla f(y_s) ds \rv_2^2 \right] + \frac{4M^2}{N^2} \mathbb{E}_k\left[\lv \int_{u\delta}^{k\delta} dB_r \rv_2^2 \right]\\ & \overset{(iv)}{\le} \frac{2M^2(k-u)\delta}{N^2} \int_{u\delta}^{k\delta} \mathbb{E}_{y_s \sim p^*}\left[\lv \nabla f(y_s) \rv_2^2\right] ds + \frac{4M^2\delta d (k-u)}{N^2} \\
& \overset{(v)}{\le} \frac{2M^3 (k-u)^2 \delta^2 d}{N^2} + \frac{4M^2 \delta d (k-u)}{N^2}\overset{(vi)}{\le} \frac{2M^3 \tau^2 \delta^2 d}{N^2} + \frac{4M^2 \delta d \tau}{N^2},
\end{align*}
where $(i)$ follows by $M/N$-smoothness of $f_i$, $(ii)$ follows as $\tilde{y} = y_{u\delta}$, $(iii)$ follows by Young's inequality, $(iv)$ follows by Jensen's inequality, $(v)$ follows by Lemma 3 in \citep{dalalyan2017further} to bound $\mathbb{E}_{y \sim p^*}\left[\lv \nabla f(y)\rv_2^2 \right]  \le Md$ and finally $(vi)$ is by the upper bound $k-u \le \tau$ (the epoch length). Plugging this bound into \eqref{eq:omega2masterbound} we get
\begin{align*}
\omega_2 & \le  \frac{8M}{n}\Omega_1 +\frac{16\delta^2 M^3 d \tau^2 }{n } + \frac{32d \delta M^2 \tau }{n}.
\end{align*}
By the bounds we have established on $\omega_1,\omega_2,\omega_3$ and $\omega_4$ we get that $\Omega_2$ is upper bounded by,
\begin{align*}
\Omega_2 & \le 4M\left( 1+\frac{2}{n}\right) \Omega_1+ \frac{8M}{n}\mathcal{D}_f(\tilde{x},\tilde{y}) + \frac{8M^2\delta d}{3}+\frac{16\delta^2 M^3 d \tau^2 }{n } + \frac{32d \delta M^2 \tau }{n}.
\end{align*}
Next we control $\Omega_1$ by using the convexity of $f$,
\begin{align} \label{eq:Omega1control}
\Omega_1 = \langle \nabla f(y_k) - \nabla f(x_k), y_k - x_k \rangle & \ge D_f(x_k,y_k) + \frac{m}{2}\lv \Delta_k \rv_2^2,
\end{align}
where $\mathcal{D}_f(a,b):= f(a) - f(b) - \langle \nabla f(b),a-b \rangle$ for any $a,b \in \mathbb{R}^d$, is the Bregman divergence of $f$.
Coupled with the bound on $\Omega_2$ we now get that,
\begin{align}
\mathbb{E}_{k}\left[\lv\Delta_{k+1} + V_{k}\rv_2^2\right] & \le \left( 1-\delta m \left(1-2\delta M\left(1+\frac{2}{n}\right)\right)  \right)\lv \Delta_k \rv_2^2 - 2 \delta \left(1-2\delta M\left(1+\frac{2}{n}\right)\right)  \mathcal{D}_f(x_k,y_k)\\& + \delta^2\left(\frac{8M}{n}\mathcal{D}_f(\tilde{x},\tilde{y}) + \frac{8M^2\delta d}{3}+\frac{16\delta^2 M^3 d \tau^2 }{n } + \frac{32d \delta M^2 \tau }{n}\right). \label{eq:masterbounddeltak1}
\end{align}
\textbf{Step 5:} Substituting the bound on $\mathbb{E}_{k}\left[\lv\Delta_{k+1} + V_{k}\rv_2^2\right] $ and $\mathbb{E}\left[ \lv V_k \rv_2^2\right]$ into \eqref{eq:mastersvrgcontrol1} we get
\begin{align*}
\mathbb{E}_k\left[\lv \Delta_{k+1} \rv_2^2 \right] & \le (1+a) \Bigg( \left( 1-\delta m \left(1-2\delta M\left(1+\frac{2}{n}\right)\right)  \right)\lv \Delta_k \rv_2^2 - 2 \delta \left(1-2\delta M\left(1+\frac{2}{n}\right)\right)  \mathcal{D}_f(x_k,y_k)\\& + \delta^2\left(\frac{8M}{n}\mathcal{D}_f(\tilde{x},\tilde{y}) + \frac{8M^2\delta d}{3}+\frac{16\delta^2 M^3 d \tau^2 }{n} + \frac{32d \delta M^2 \tau }{n}\right) \Bigg) \\&  + \left(1+ \frac{1}{a} \right)\frac{\delta^4}{2}\left(L^2d^2 + M^3 d \right) \\
& = (1+a) \left( 1-\delta m \left(1-2\delta M\left(1+\frac{2}{n}\right)\right)  \right) \lv \Delta_k \rv_2^2 \\
& -2(1+a)\delta\left(1-2\delta M\left(1+\frac{2}{n}\right)\right)\mathcal{D}_f(x_k,y_k)\\ 
& + 8(1+a) \frac{M\delta^2}{n}\mathcal{D}_f(\tilde{x},\tilde{y})+ (1+a) \delta^3\underbrace{\left(\frac{8M^2d}{3}+ \frac{32d  M^2 \tau }{n}+\frac{8 M^3 d \tau^2  \delta }{n}\right)}_{=:\circ} \\
&+ \frac{\delta^4}{2}\left(1+ \frac{1}{a} \right)\underbrace{\left(L^2d^2 + M^3d \right)}_{=:\square} .
\end{align*}
Define the contraction rate to be $\alpha: = \delta m \left(1-2\delta M\left(1+\frac{2}{n}\right)\right) $, then we have
\begin{align*}
{E}_k\left[\lv \Delta_{k+1} \rv_2^2 \right] \leq & (1+a)(1-\alpha)\lv \Delta_k \rv_2^2 -2\frac{\alpha(1+a) }{m} \mathcal{D}_f(x_k,y_k)+ 8(1+a) \frac{M\delta^2}{n}\mathcal{D}_f(\tilde{x},\tilde{y}) \\
& + (1+a) \delta^3\circ + \frac{\delta^4}{2}\left(1+ \frac{1}{a} \right)\square,
\end{align*}
and with $a =\alpha/(1- \alpha)$ then we have
\begin{align*}
\mathbb{E}_k\left[\lv \Delta_{k+1} \rv_2^2 \right] & \le \lv \Delta_k \rv_2^2 -\frac{2\alpha}{m(1-\alpha)}\mathcal{D}_f(x_k,y_k) + \frac{8M\delta^2}{(1-\alpha)n}\mathcal{D}_f(\tilde{x},\tilde{y})+ \frac{\delta^3\circ}{1-\alpha}  + \frac{\delta^4}{2}\frac{\square}{\alpha}.
\end{align*}
We now sum this inequality from $k=u\tau$ to $(u+1)\tau-1$ we get,
\begin{align}
\mathbb{E}\left[ \lv \Delta_{(u+1)\tau} \rv_2^2 \right] \le \lv \Delta_{u\tau} \rv_2^2 -  \frac{2\alpha}{m(1-\alpha)} \sum_{k=1}^{\tau} \mathcal{D}_f(x_k,y_k) + \frac{8M\delta^2\tau}{(1-\alpha)n}\mathcal{D}_f(\tilde{x},\tilde{y}) + \frac{\delta^3\tau \circ}{1-\alpha } + \frac{\delta^4\tau}{2}\frac{\square}{\alpha}. \label{eq:finalcountdown}
\end{align}
Using the strong convexity and smoothness of $f$ we have,
\begin{align*}
\frac{m}{2}\lv x-y\rv_2^2 \le \mathcal{D}_f(x,y) \le \frac{M}{2}\lv x-y\rv_2^2, \qquad \forall x,y\in \mathbb{R}^d.
\end{align*}
Using this in \eqref{eq:finalcountdown} and rearranging terms we get,
\begin{align*}
\frac{1}{\tau}\sum_{k=u\tau}^{(u+1)\tau-1}\mathcal{D}_f(x_k,y_k)  &\le \left( \frac{1-\alpha}{\alpha \tau }+ \frac{4mM\delta^2}{\alpha n} \right) \mathcal{D}_f(\tilde{x},\tilde{y}) + \frac{\delta^3 m\circ}{2\alpha} + \frac{(1-\alpha)m\delta^4\square}{4\alpha^2},
\end{align*}
where we used the fact that $x_{u\delta} = \tilde{x}$ and $y_{u\delta} = \tilde{y}$. Now the $\tilde{x}^{u+1}$ and $\tilde{y}^{u+1}$ are chosen uniformly at random from $x_{u\tau},\ldots,x_{(u+1)\tau-1}$ and $y_{u\tau},\ldots,y_{(u+1)\tau-1}$.
\begin{align*}
\E{\mathcal{D}_f(\tilde{x}^{u+1},\tilde{y}^{u+1})} & \le \left( \frac{1-\alpha}{\alpha \tau }+ \frac{4mM\delta^2}{\alpha n} \right)\E{\mathcal{D}_f(\tilde{x}^{u},\tilde{y}^{u})}+ \frac{\delta^3 m\circ}{2\alpha} + \frac{(1-\alpha)m\delta^4\square}{4\alpha^2},\end{align*}
for any $u \in \{0,\ldots,\floor*{T/\tau}\}$.  Unrolling the equation above for $\floor*{T/\tau}$ steps we get,
\begin{align*}
\E{\mathcal{D}_f(\tilde{x}^{\floor*{T/\tau}\tau},\tilde{y}^{\floor*{T/\tau}\tau})} & \le (1-\rho)^{\floor*{T/\tau}}\E{\mathcal{D}_f(\tilde{x}^{0},\tilde{y}^{0})} +\frac{\delta^3 m\circ}{2\alpha\rho} + \frac{(1-\alpha)m\delta^4\square}{4\alpha^2 \rho},
\end{align*}
where we denote by $\rho=1-\frac{1-\alpha}{\alpha \tau }- \frac{4mM\delta^2}{\alpha n} $.
Finally we use again the strong convexity and smoothness of $f$ to obtain
\begin{align*}
 \mathbb{E}\left[ \lv {\tilde \Delta}^{\floor*{T/\tau}\tau} \lv_2^2\right]& \le (1-\rho)^{\floor*{T/\tau}}\frac{M}{m}
 \E{\lv {\tilde \Delta}^0 \lv_2^2} +\frac{\delta^3 \circ}{\alpha\rho} + \frac{(1-\alpha)\delta^4\square}{2\alpha^2 \rho}.
\end{align*}
 Using the fact that $\tilde{x}^u$ ($\tilde{x}$) and $\tilde{y}^u$ ($\tilde{y}$) are optimally coupled by taking expectations we get that,
\begin{align}
W^2_2(p^{\floor*{T/\tau}\tau},p^*) & \le (1-\rho)^{\floor*{T/\tau}}\frac{M}{m} W^2_2(p^{(0)},p^*) + {\frac{\delta^3 \circ}{\alpha\rho} + \frac{(1-\alpha)\delta^4\square}{2\alpha^2\rho}}. \label{eq:masterwassersteinsvrgbound}
\end{align}
Now for $n>2$, and $\delta<\frac{1}{8M}$ we have $\alpha\geq \delta m/2$, and $ \frac{4mM\delta^2}{\alpha n}\leq \frac{8M\delta}{n}\leq 1/n\leq 1/4$. Then consider $\tau=4/\alpha$, in order to have $\rho=1/2$.
\begin{align*}
\frac{\delta^3 \circ}{\alpha\rho}
\leq 32 \delta^2 M^2d \left(\frac{1}{3}+ \frac{4 \tau }{n}+\frac{ M  \tau^2  \delta }{n}\right) \leq 32 \delta^2 M^2d \left(\frac{1}{3}+ \frac{32 }{n \delta m}+\frac{ 64 M     }{ \delta m^2 n}\right) \leq 4096 \frac{\delta d M^3}{m^2 n},
\end{align*}
and
\begin{align*}
\frac{\delta^4\square}{2\alpha^2\rho}\leq \frac{4\delta^2}{m^2}\left(L^2d^2 + M^3d \right).
\end{align*}
Therefore
\[
W^2_2(p^{\floor*{T/\tau}\tau},p^*)  \le \frac{1}{2}^{\floor*{T/\tau}}\frac{M}{m} W^2_2(p^{(0)},p^*) + 4096 \frac{\delta d M^3}{m^2 n}+ \frac{4\delta^2 L^2d^2}{m^2} +\frac{4\delta^2M^3d}{m^2}.
\]
Finally our choice of $n>2$, and $\delta<\frac{1}{8M}$ and $ \tau=4/\alpha$  ensures that
\begin{align*}
W^2_2(p^T,p^*)  \le \exp\left( -\frac{\delta m t }{28}\right) \frac{M}{m} W^2_2(p^{(0)},p^*) + 4096 \frac{\delta d M^3}{m^2 n}+ \frac{4\delta^2 L^2d^2}{m^2} +\frac{4\delta^2M^3d}{m^2},
\end{align*}
for all $T>0$ such that $T$ mod $\tau=0$, which completes the proof of part 1.

\textbf{Proof for Option 2:} To prove part 2 of the theorem Steps 1-3 are same as above. The technique to control $\Omega_2$ is going to differ which leads to a different bound.

\textbf{Step 4:} As before we have
\begin{align*}
\lv \Delta_{k+1} + V_k \rv_2^2 &= \lv \Delta_k -\delta(U_k+\Psi_k+\zeta_k)\rv_2^2 \\
& = \lv \Delta_k \rv_2^2 -2\delta\langle \Delta_k, U_k + \Psi_k + \zeta_k \rangle + \delta^2 \lv U_k + \Psi_k +\zeta_k \rv_2^2.
\end{align*}
Now we take expectation with respect to all sources of randomness (Brownian motion and the randomness in the choice of $S$) conditioned on $x_k,y_k,\tilde{x}$ and $\tilde{y}$ (thus $\Delta_k$ is fixed). Recall that conditioned on $\Delta_k, \tilde{x}$ and $\tilde{y}$, $\Psi_k$ and $\zeta_k$ are zero mean, thus we get
\begin{align*}
\mathbb{E}_{k}\left[\lv\Delta_{k+1} + V_{k}\rv_2^2\right] & = \lv \Delta_k \rv_2^2 -2\delta \underbrace{\langle \nabla f(y_k) - \nabla f(x_k), y_k - x_k \rangle}_{=:\Omega_1} + \delta^2 \underbrace{\mathbb{E}_k \left[\lv U_k+\Psi_k+\zeta_k \rv_2^2 \right]}_{=:\Omega_2},
\end{align*}
where $\mathbb{E}_k\left[\cdot\right]$ denotes conditioning on $x_k$ and $y_k$. First we bound $\Omega_2$
\begin{align*}
\Omega_2 & = \mathbb{E}_k \left[\lv U_k+\Psi_k+\zeta_k \rv_2^2 \right] \\
& = \mathbb{E}_k \left[\lv \nabla f(y_k) - g_k +\Psi_k \rv_2^2 \right] \\
& = \mathbb{E}_k \left[\lv \nabla f(y_k) -  \nabla f(\tilde{x}) - \frac{N}{n}\sum_{i \in S} \left[\nabla f_i(x_k) - \nabla f_i(\tilde{x}) \right] +\Psi_k \rv_2^2 \right] \\
& = \mathbb{E}_k \left[\lv \nabla f(y_k) -  \nabla f(x_k)+\nabla f(x_k)-\nabla f(\tilde{x}) - \frac{N}{n}\sum_{i \in S} \left[\nabla f_i(x_k) - \nabla f_i(\tilde{x}) \right] +\Psi_k \rv_2^2 \right] \\
& \le \underbrace{3\mathbb{E}_k \left[\lv \nabla f(y_k) -  \nabla f(x_k) \rv_2^2 \right]}_{=:\omega_1} + \underbrace{3\mathbb{E}_k \left[\lv \nabla f(x_k)-\nabla f(\tilde{x}) - \frac{N}{n}\sum_{i \in S} \left[\nabla f_i(x_k) - \nabla f_i(\tilde{x}) \right]\rv_2^2 \right]}_{=:\omega_2} \\ &\qquad \qquad \qquad \qquad \qquad \qquad \qquad \qquad \qquad \qquad \qquad \qquad \qquad \qquad +\underbrace{3\mathbb{E}_k \left[\lv \Psi_k \rv_2^2 \right]}_{=:\omega_3},
\end{align*}
where the last step is by Young's inequality. First we claim a bound on $\omega_3$,
\begin{align*}
\omega_3 = 3\mathbb{E}_k \left[\lv \Psi_k \rv_2^2 \right] \le 2M^2\delta d,
\end{align*}
by the same argument as used in \eqref{eq:omega4upperbound}. Next we control $\omega_1$ by
\begin{align*}
\omega_1 = 3\mathbb{E}_k \left[\lv \nabla f(y_k) -  \nabla f(x_k) \rv_2^2 \right] & \le 3M\Omega_1,
\end{align*}
by the $M$-smoothness of $f$. Finally we control $\omega_2$,
\begin{align*}
\omega_2 = 3\mathbb{E}_k \left[\lv \nabla f(x_k)-\nabla f(\tilde{x}) - \frac{N}{n}\sum_{i \in S} \left[\nabla f_i(x_k) - \nabla f_i(\tilde{x}) \right]\rv_2^2 \right] & \le \frac{M^2}{n} \mathbb{E}\left[ \lv x_k - \tilde{x}\rv_2^2\right],
\end{align*}
where the last inequality follows by arguments similar to \eqref{eq:omega3independence}. By the definition of $\tilde{x}$ we get
\begin{align*}
\mathbb{E}\left[\Vert x_k-\tilde x\Vert_2^2\right]
\leq
\mathbb{E}\left[ \Vert \sum_{j=\tau s}^{k-1} (x_{j+1}-x_j) \Vert_2^2\right] &\leq (k-s\tau) \mathbb{E}\left[\sum_{j=\tau s}^{k-1} \Vert  x_{j+1}-x_j \Vert_2^2\right]\\ &\leq \tau  \sum_{j=\tau s}^{k-1}\mathbb{E}\left[ \Vert  x_{j+1}-x_j \Vert_2^2 \right],
\end{align*}
where the first inequality follows by Jensen's inequality. Further we have,
\begin{align*}
 \mathbb{E}\left[\Vert  x_{j+1}-x_j \Vert^2  \right]
&= \mathbb{E}\left[ \Vert \delta g_j -\sqrt{2\delta}\xi_j\Vert_2^2\right] \\
 & =\mathbb{E}\left[\Vert \sqrt{2\delta}\xi_j+\delta\left(\nabla f(\tilde{x}) - \frac{N}{n}\sum_{i \in S} \left[\nabla f_i(x_k) - \nabla f_i(\tilde{x}) \right]\right)\Vert_2^2\right] \\
&\overset{(i)}{\leq} 8\delta d  +4 \delta^2 \mathbb{E}\left[\Vert \nabla f (y_k) \Vert_2^2\right] +4\delta^2\mathbb{E}\left[\Vert \nabla f (y_k) -\nabla f(x_k)\Vert_2^2\right] \\
& \qquad \qquad +4\delta^2 \mathbb{E}\left[\Vert  -\nabla f(x_k)+ \nabla f(\tilde{x}) - \frac{N}{n}\sum_{i \in S} \left[\nabla f_i(x_k) - \nabla f_i(\tilde{x}) \right]\Vert^2\right] \\
&\overset{(ii)}{\le} 8 \delta d   + 4M\delta^2 d +4\delta^2M\Omega_1+ 4\frac{M^2\delta^2}{n} \mathbb{E}\left[\Vert x_j-\tilde x\Vert^2\right],
\end{align*}
where $(i)$ follows by Young's inequality and $(ii)$ follows by the bound of $\mathbb{E}\left[\Vert \nabla f (y_k) \Vert_2^2\right] \le Md$, the $M$-smoothness of $f$. Let us define $\spadesuit : = \tau^2(8\delta d + 4M\delta^2d+ 4\delta^2M \Omega_1)$ and $\rho: = 4\tau M^2\delta^2/n$. Coupled with the bound above we get that,
\begin{align*}
\mathbb{E}\left[\lv x_k - \tilde{x}\rv_2^2  \right] & \le \spadesuit + \rho \sum_{j=\tau s}^{k-1} \mathbb{E}\left[\lv x_{j+1} - x_j \rv_2^2\right],
\end{align*}
by using discrete Gr\"{o}nwall lemma \citep[see, e.g.,][]{clark1987short} we get,
\begin{align*}
\mathbb{E}\left[\lv x_k - \tilde{x}\rv_2^2  \right] &\le \spadesuit \exp(\tau \rho).
\end{align*}
Combined with the bound on $\omega_1$ and $\omega_3$ this yields a bound on $\Omega_2$ which is
\begin{align*}
\Omega_2 & \le 3M\Omega_1 + 2M^2\delta d +\frac{ \tau^2 M^2}{n}(8\delta d + 4M\delta^2d+ 4\delta^2 M \Omega_1)\exp\left(\frac{4\tau^2 M^2\delta^2}{n}\right)\\
&\le 3M\Omega_1\underbrace{\left[ 1+\frac{4\tau^2M^2\delta^2}{3n} \exp\left(\frac{4\tau^2 M^2\delta^2}{n}\right)\right]}_{=:\square}+2\delta\underbrace{\left[ M^2d +\frac{2d \tau^2M^2}{n}(2+ M\delta )\exp\left(\frac{4\tau^2 M^2\delta^2}{n}\right) \right]}_{=:\triangle}.
\end{align*}
As before, using strong convexity of $f$, we get that $\Omega_1$ is bounded by
\begin{align*}
\Omega_1 = \langle \nabla f(y_k) - \nabla f(x_k), y_k - x_k \rangle & \ge {m}\lv \Delta_k \rv_2^2.
\end{align*}
 Having established these bounds on $\Omega_1$ and $\Omega_2$ we get,
\begin{align*}
\mathbb{E}_{k}\left[\lv\Delta_{k+1} + V_{k}\rv_2^2\right] & \le \lv \Delta_k \rv_2^2 -2\Omega_1 \delta\left(1-\frac{3M\square\delta}{2}\right) + 2\delta^3\triangle\\
& \le  \left(1-2\delta m  \left(1-\frac{3M\square\delta}{2}\right)\right) \lv \Delta_k \rv_2^2 +2\delta^3\triangle .
\end{align*}
\textbf{Step 5:}  Substituting the bound on $\mathbb{E}_{k}\left[\lv\Delta_{k+1} + V_{k}\rv_2^2\right] $ and $\mathbb{E}\left[ \lv V_k \rv_2^2\right]$ into \eqref{eq:mastersvrgcontrol} we get for $\circ:= L^2d^2 +M^3d$
\begin{align*}
\mathbb{E}_k\left[\lv \Delta_{k+1} \rv_2^2 \right]
 & \le (1+a) \left( \left(1-2\delta m  \left(1-\frac{3M\square\delta}{2}\right)\right) \lv \Delta_k \rv_2^2 +2\delta^3\triangle\right) + \left(1+ \frac{1}{a} \right)\frac{\delta^4 \circ}{2} \\
& \le  (1+a) \left(1-2\delta m  \left(1-\frac{3M\square\delta}{2}\right)\right) \lv \Delta_k \rv_2^2
+2(1+a)\delta^3\triangle + \left(1+ \frac{1}{a} \right)\frac{\delta^4 \circ}{2} .
\end{align*}
Define the contraction rate $\alpha : =\delta m  \left(1-\frac{3M\square\delta}{2}\right)$. Further we choose $a = \alpha/(1-2\alpha)$, then we get,
\begin{align*}
\mathbb{E}_k\left[\lv \Delta_{k+1} \rv_2^2 \right] & \le \left(1-\alpha \right)\lv \Delta_k \rv_2^2 +\frac{2(1-\alpha)\delta^3\triangle}{1-2\alpha}+\frac{(1-\alpha)\delta^4\circ}{2\alpha}.
\end{align*}
Taking the global expectation, we obtain by a direct expansion:
\begin{align*}
\mathbb{E}\left[\lv \Delta_{k} \rv_2^2 \right] & \le \left(1-\alpha \right)^k \E{\lv \Delta_0 \rv_2^2} +\frac{2(1-\alpha)\delta^3\triangle}{(1-2\alpha)\alpha}+\frac{(1-\alpha)\delta^4\circ}{2\alpha^2}.
\end{align*}
Let us use the fact that $\tau < \sqrt{n}/(2\tau M)$ then we have $\exp(4\tau^2 M^2\delta^2/n)<3$, $\square \leq4/3$ and $\alpha\geq \delta m(1-2M\delta)$. We assume also that $\delta\leq 1/(4M)$ then $\alpha\geq \delta m/2$ and
\begin{align*}
\mathbb{E}\left[\lv \Delta_{k} \rv_2^2 \right] & \le \left(1-\frac{\delta m }{2} \right)^k \E{\lv \Delta_0 \rv_2^2} +\frac{8\delta^2\triangle}{m}+\frac{2\delta^2\circ}{m^2}.
\end{align*}
Using that $\triangle \leq M^2d\left(1 +\frac{9\tau^2}{n}\right)$ and $\circ\leq L^2d^2+M^3d $ we finally obtain
\begin{align*}
\mathbb{E}\left[\lv \Delta_{k} \rv_2^2 \right] & \le \left(1-\frac{\delta m }{2} \right)^k \E{\lv \Delta_0 \rv_2^2} +\frac{8\delta^2 M^2d}{m}\left(1+\frac{M}{4m} +\frac{9\tau^2}{n}\right)+\frac{2\delta^2 L^2d^2}{m^2}.
\end{align*}
The result follows.
\end{proof}

\subsection{SAGA Proof}
\label{app:sagaproof}The proof of SAGA for Langevin diffusion largely mirrors the proof of Theorem \ref{thm:svrgtheorem} with two key differences. One is we use a different Lyapunov function, specifically we use the Lyapunov function well studied in the optimization literature to analyze SAGA for variance reduction in optimization introduced by \cite{hofmann2015variance} and subsequently simplified by \cite{defazio2016simple}. Secondly, we in Algorithm \ref{alg:saga} we do not have access to the full gradient at every step, this makes it difficult to analyze some terms in the proof (specifically the term analogous to $\tilde{\omega}_{2,i}$ in the proof of Theorem \ref{thm:svrgtheorem}); we handle this difficultly by borrowing a neat trick developed by \cite{dubey2016variance}. Another way to control this term that leads to a much simpler proof is the method developed above in the second part (Step 4) of the proof of Theorem \ref{thm:svrgtheorem}. However the constants obtained by using the simpler techniques are much worse.
\begin{proof}[Proof of Theorem \ref{thm:mainsaga}]
We will proceed as in the proof of Theorem \ref{thm:svrgtheorem} and borrow notation established in the proof of Theorem \ref{thm:svrgtheorem}. For $k \in \{1,\ldots,T \}$,  we denote by $\{h_k^i\}_{i=0}^N$ analogously to $\{g_k^i\}_{i=0}^N$ (updated at the same point in the sequence $\{y_k \}$). We consider the Lyapunov function $T_k= c\sum_{i}^N\Vert g_{k}^i-h_{k}^i\Vert_2^2 +\Vert x_k-y_k\Vert_2^2$ for some constant $c>0$ to that will be chosen later. The first 4 steps of the proof are exactly the same as the proof above, in Step 5 we shall control the norm of the other part of the Lyapunov function involving $g_k^i$ and $h_k^i$. In the rest of the steps we gather these bounds on the different parts of the Lyapunov function and establish convergence.

\textbf{Step 1:} By Young's inequality we have that $\forall a>0$,
\begin{align} \label{eq:mastersvrgcontrol}
\lv \Delta_{k+1} \rv_2^2 & = \lv \Delta_{k+1} +V_k - V_k\rv_2^2 \le \left(1+ a\right)\lv \Delta_{k+1} + V_k \rv_2^2 + \left(1+\frac{1}{a} \right) \lv V_k \rv_2^2.
\end{align}
We will choose $a$ at a later stage in the proof to minimize the bound on the right hand side.

\textbf{Step 2:} By Lemma 6 in \citep{dalalyan2017user} we have the bound,
\begin{align}
\mathbb{E}\left[\lv V_k \rv_2^2 \right] \le \left(\frac{\delta^2 L d}{2}+ \frac{\delta^2 M^{3/2}\sqrt{d}}{2} \right)^2 \le \frac{\delta^4}{2}\left(L^2d^2 + M^3 d \right).\label{eq:Vbound1}
\end{align}
\textbf{Step 3:} Next we will bound the other term in \eqref{eq:mastersvrgcontrol}. First we express $\Delta_{k+1}$ in terms of $\Delta_k$,
\begin{align*}
\Delta_{k+1} &= \Delta_{k} + (y_{k+1}- y_k) + (x_{k+1}-x_k) \\
& = \Delta_k + \left(-\int_{\delta k}^{\delta(k+1)}\nabla f(y_s) ds + \sqrt{2\delta}\xi_k \right) + \left(-\int_{\delta k}^{\delta(k+1)} g(x_k)ds + \sqrt{2\delta}\xi_k \right) \\
& = \Delta_k -\int_{\delta k}^{\delta(k+1)}\left(\nabla f(y_s) - g(x_k) \right)ds \\
& = \Delta_k - \int_{\delta k}^{\delta(k+1)}\left(\nabla f(y_s) - \nabla f(y_k) + \nabla (y_k)-\nabla f(x_k) + \nabla f(x_k) -g(x_k)\right)ds \\
& = \Delta_k - \delta(\underbrace{\nabla f(y_k) - \nabla f(x_k)}_{=: U_k}) - \int_{\delta k}^{\delta(k+1)}\left(\nabla f(y_s)-\nabla f(y_k)\right) ds+ \delta\underbrace{\left(-\nabla f(x_k) + g(x_k) \right)}_{=:\zeta_k} \\
& = \Delta_k -\delta U_k + \delta \zeta_k -V_k -\sqrt{2}\underbrace{\int_{k\delta}^{(k+1)\delta} \int_{k\delta}^s \nabla^2 f(y_r)dB_r ds}_{=:\delta \Psi_k} = \Delta_k -V_k -\delta(U_k+\Psi_k+\zeta_k).
\end{align*}
\textbf{Step 4:} Using the above characterization of $\Delta_{k+1}$ in terms of $\Delta_k$, we now get
\begin{align*}
\lv \Delta_{k+1} + V_k \rv_2^2 &= \lv \Delta_k -\delta(U_k+\Psi_k+\zeta_k)\rv_2^2 \\
& = \lv \Delta_k \rv_2^2 -2\delta\langle \Delta_k, U_k + \Psi_k + \zeta_k \rangle + \delta^2 \lv U_k + \Psi_k +\zeta_k \rv_2^2.
\end{align*}
Now we take expectation with respect to all sources of randomness conditioned (Brownian motion and the randomness in the choice of $S$) on $x_k$ and $y_0$ (thus $\Delta_k$ is fixed). Recall that conditioned on $\Delta_k$, $\Psi_k$ and $\zeta_k$ are zero mean, thus we get
\begin{align*}
\mathbb{E}_{k}\left[\lv\Delta_{k+1} + V_{k}\rv_2^2\right] & = \lv \Delta_k \rv_2^2 -2\delta \underbrace{\langle \nabla f(y_0) - \nabla f(x_k), y_0 - x_k \rangle}_{=:\Omega_1} + \delta^2 \underbrace{\mathbb{E}_k \left[\lv U_k+\Psi_k+\zeta_k \rv_2^2 \right]}_{=:\Omega_2},
\end{align*}
where $\mathbb{E}_k\left[\cdot\right]$ denotes conditioning on $x_k$ and $y_0$. First we control $\Omega_2$
\begin{align*}
\Omega_2 & = \mathbb{E}_k \left[\lv U_k+\Psi_k+\zeta_k \rv_2^2 \right] \\
& = \mathbb{E}_k \left[\lv \nabla f(y_k) - g(x_k) +\zeta_k \rv_2^2 \right] \\
& = \mathbb{E}_k \left[\lv \nabla f(y_k) - \frac{N}{n}\sum_{i\in S} (\nabla f_i(x_k)-g_k^i)-\sum_{i=1}^n g_k^i +\Psi_k \rv_2^2 \right]  \\
& = \mathbb{E}_k \left[\lv \nabla f(y_k) -\nabla f(x_k)  - \frac{N}{n}\sum_{i\in S} (\nabla f_i(x_k)-g_k^i) +\nabla f(x_k)-\sum_{i=1}^n g_k^i +\Psi_k \rv_2^2 \right]  \\
& = \mathbb{E}_k \left[ \lv \frac{N}{n}\sum_{i\in S}( \nabla f_i(y_0) -\nabla f_i(x_k))
-  \frac{N}{n}\sum_{i\in S}( \nabla f_i(y_0) -g_k^i) + \nabla f(y_0)- \sum_{i=1}^n g_k^i + \Psi_k \rv_2^2 \right],
\end{align*}
where in the second equality we used the definition of $U_k$ and $\zeta_k$, while in the third equality we used the definition of $g(x_k)$. By Young's inequality we now have,
\begin{align} \label{eq:omegafirstbound}
\Omega_2  &\le 3 \lv\  \nabla f(y_k) -\nabla f(x_k)  \rv_2^2
+ 3 \mathbb{E}_k \left[\lv     \frac{N}{n}\sum_{i\in S}( \nabla f_i(x_k) -g_k^i) - \nabla f(x_k)+ \sum_{i=1}^n g_k^i  \rv_2^2 \right]
+ 3\mathbb{E}\left[\lv \Psi_k \rv_2^2 \right].
\end{align}
Let us define the random variable $\beta^{(i)}=\nabla f_i(x_k) -g_k^i-\frac{n}{N}\left( \nabla f(x_k)+ \sum_{i=1}^n g_k^i\right)$. Observing that $\{\beta^{(i)}\}$ are zero mean (taking expectation over $i$) and independent, we have
\begin{align*}
& \mathbb{E}_k \left[\lv     \frac{N}{n}\sum_{i\in S}( \nabla f_i(x_k) -g_k^i) - \nabla f(x_k)+ \sum_{i=1}^n g_k^i  \rv_2^2 \right] \\
& \!= \frac{N^2}{n^2}  \mathbb{E}_k \left[\lv     \sum_{i\in S} \beta^{(i)} \rv_2^2 \right]  \overset{(i)}{=}  \frac{N^2}{n^2}    \sum_{i\in S}  \mathbb{E}_k \left[\lv    \beta^{(i)} \rv_2^2 \right]
  \overset{(ii)}{=}\! \frac{N^2}{n}  \mathbb{E}_k \left[\lv    \beta \rv_2^2 \right]\\
& \overset{(iii)}{\leq}\frac{N^2}{n}  \mathbb{E}_k  \lv    \nabla f_i(x_k) -g_k^i \rv_2^2  = \frac{N}{n} \sum_{i=1}^N  \lv    \nabla f_i(x_k) -g_k^i \rv_2^2,
\end{align*}
where $(i)$ follows as $\beta^{(i)}$ are zero-mean and independent random variables, $(ii)$ follows by the fact that $\beta^{(i)}$ are identically distributed,  $(iii)$ follows by the fact that for any random variable $R$, $\E{\Vert R-\mathbb E R\Vert_2^2}\leq \E{\Vert R\Vert_2^2}$.

Using the following decomposition $
  \lv    \nabla f_i(x_k) -g_k^i \rv_2^2
   \leq  3   \lv    \nabla f_i(x_k) -\nabla f_i(y_k)\rv_2^2
   +  3\lv  \nabla  f_i(y_k) -h_k^i \rv_2^2
   +  3\lv    h_k^i -g_k^i \rv_2^2
$, we have shown $\Omega_2$ may be bounded as follow
\begin{multline}\label{eq:omeg}
\Omega_2\leq 3 \lv\  \nabla f(y_k) -\nabla f(x_k)  \rv_2^2
+9 \frac{N}{n}   \sum_{i=1}^N   \lv    \nabla f_i(x_k) -\nabla f_i(y_k)\rv_2^2 \\
   +  9 \frac{N}{n}   \sum_{i=1}^N\lv  \nabla  f_i(y_k) -h_k^i \rv_2^2
   +  9 \frac{N}{n}   \sum_{i=1}^N\lv    h_k^i -g_k^i \rv_2^2
   +3\mathbb{E}\left[\lv \Psi_k \rv_2^2 \right].
\end{multline}
\textbf{Step 5:}
We will now bound the different term in Equation \ref{eq:omeg}. First, using a similar technique as \cite{dubey2016variance}, we bound the term $\Vert h_k^i-\nabla f_i(y_k)\Vert^2_2$.  Let $p=1-(1-1/N)^n$ be the probability to chose an index, then
\begin{align*}
 \mathbb{E} \Vert h_i^k-\nabla f_i(y_k)\Vert^2_2 &= \sum_{j=0}^{k-1}   \mathbb{E} [\Vert h_k^i-\nabla f_i(y_k)\Vert^2_2 \vert h_k^i=\nabla f_i(y_j) ]\cdot \mathbb P [h_k^i=\nabla f_i(y_j) ] \\
 &= \sum_{j=0}^{k-1}   \mathbb{E}[\Vert \nabla f_i(y_j)-\nabla f_i(y_k)\Vert^2_2  ] \cdot\mathbb P [h_k^i=\nabla f_i(y_j) ] \\
 &\leq \tilde M^2 \sum_{j=0}^{k-1}   \mathbb{E}[\Vert y_j-y_k\Vert^2_2  ]\cdot \mathbb P [h_k^i=\nabla f_i(y_j) ] \\
  &\leq  \tilde M^2  \sum_{j=0}^{k-1}   \mathbb{E} [\Vert\int_{j\delta}^{k\delta}  \nabla f(y_s)ds -\sqrt{2} (\xi_{k\delta}-\xi_{j\delta}) \Vert_2^2] (1-p)^{k-j-1}p\\
    &\leq \tilde M^2 \sum_{j=0}^{k-1}  [2\delta^2 \mathbb E \Vert \nabla f(y) \Vert ^2+4\delta d (k-j)] (1-p)^{k-j-1}p\\
     &\leq 2 p \tilde M^2  \delta^2 \mathbb E \Vert \nabla f(y) \Vert ^2   \sum_{j=1}^{k} j^2(1-p)^{j-1} +4 p d \tilde M^2  \delta \sum_{j=1}^{k} j(1-p)^{j-1}\\
     &\leq \frac{2\delta^2}{p^2} \tilde M^2  \mathbb E \Vert \nabla f(y) \Vert ^2 +\frac{4d\delta \tilde M^2 }{p} \\
          &\leq \frac{8 d \delta \tilde M^2 N}{n} \left[ \frac{\delta N M}{n}  +1\right],
\end{align*}
where we have use lemma and the bound on  $p\geq \frac{n}{2N}$ from Eq.~(19) of \cite{dubey2016variance}.
Therefore
\[
     9 \frac{N}{n}   \sum_{i=1}^N\lv  \nabla  f_i(y_k) -h_k^i \rv_2^2  \leq  72 \frac{d\delta  N M^2}{n^2}\left[ \frac{\delta M N}{n} +1\right].
\]
Using the $\tilde M$-smoothness of each $f_i$, we have
$
  \lv\nabla f_i(x_k) - \nabla f_i(y_k)  \rv_2^2  \leq \tilde M  \langle \nabla f_i(x_k)-\nabla f_i(y_k),x_k-y_k\rangle
$
and therefore using also the $M$-smoothness of $f$
\[
  3 \lv\  \nabla f(y_k) -\nabla f(x_k)  \rv_2^2+  9 \frac{N}{n} \sum_{i=1}^N  \lv\nabla f_i(x_k) - \nabla f_i(y_k)  \rv_2^2  \leq  3M\left(1+\frac{3}{n}\right) \Omega_1.
\]
In addition, as shown in \eqref{eq:omega4upperbound} that
\[
\mathbb{E}\left[\lv \Psi_k \rv_2^2 \right]\leq \frac{2M^2\delta d}{3}.
\]
Therefore we have proved that
 \begin{align*}
\Omega_2\leq 3 M\left(1+\frac{3}{n}\right) \Omega_1  +  9 \frac{N}{n}   \sum_{i=1}^N \lv    \nabla  g_k^i - h_k^i\rv_2^2
 + 2M^2\delta d+ \frac{72d\delta N M^2}{n^2}\left[ \frac{\delta M N}{n} +1\right].
 \end{align*}
 \textbf{Step 6:} We can combine now the previous bound to first obtain
  \begin{align*}
 \mathbb{E}_{k}\left[\lv\Delta_{k+1} + V_{k}\rv_2^2\right]
  &\leq
  \lv \Delta_k \rv_2^2 -2\delta\left(1-3 \left(1+\frac{3}{n}\right) \delta M\right)\Omega_1 + \frac{  9 \delta^2N}{n}   \sum_{i=1}^N \lv    \nabla  g_k^i - h_k^i\rv_2^2 \\
 &+ 2M^2\delta^3 d+ \frac{72d\delta^3 N^3}{n^2}\left[ \frac{\delta M N}{n} +1\right],
 \end{align*}
 and for any $a>0$
 \begin{align*}
\Ep{k}{\lv \Delta_{k+1} \rv_2^2 } \le
&
 \left(1+ a\right)  \lv
  \Delta_k \rv_2^2  -  2\left(1+ a\right) \delta\left(1-{3}\left(1+\frac{3}{n}\right)\delta M \right)\Omega_1+ \left(1+ a\right)\frac{9 \delta^2 N}{n}   \sum_{i=1}^N \lv    \nabla  g_k^i - h_k^i\rv_2^2\\
  &
  + \left(1+ a\right)\delta^3 \square + \left(1+\frac{1}{a} \right){\delta^4}\triangle,
  \end{align*}
 where we have denoted by $\square =   2M^2 d+\frac{72 d N M^2}{n^2}\left[ \frac{\delta M N}{n} +1\right]$ and $\triangle = \frac{1}{2}\left( L^2d^2+M^3d \right)$.

\textbf{Step 7:} We expand now the first part of $T_k$
 We directly obtain:
\begin{align*}
\Ep{k}{\sum_{i}^N\Vert g_{k+1}^i-h_{k+1}^i\Vert_2^2}
&= \Ep{k}{\sum_{i}^N\Vert g_{k}^i-h_{k}^i\Vert_2^2+\sum_{i\in S}\left(\Vert g_{k+1}^{i}-h_{k+1}^{i}\Vert_2^2-\Vert g_{k}^{i}-h_{k}^{i}]\Vert_2^2\right)}\\
&= \sum_{i}^N\Vert g_{k}^i-h_{k}^i\Vert_2^2 +  \frac{n}{N} \sum_{i=1}^N  \Vert \nabla_i(x_k)- \nabla_i(y_k)\Vert_2^2 -\frac{n}{N}\sum_{i=1}^N [\Vert g_{k}^{i}-h_{k}^{i}\Vert_2^2 \\
&= (1-\frac{n}{N})\sum_{i}^N\Vert g_{k}^i-h_{k}^i\Vert_2^2 +\frac{nM}{N^2} \langle \nabla f(x_k)-\nabla f(y_k),x_k-y_k\rangle .
 \end{align*}
\textbf{Step 8:}
We are now able to upper-bound $T_k$:
 \begin{align*}
 \Ep{k}{T_{k+1}} & \leq \left[  1-\frac{n}{N} +\frac{9(1+a)\delta^2 N}{c n}\right] c \sum_{i}^N\Vert g_{k}^i-h_{k}^i\Vert_2^2 + \left(1+ a\right)  \lv \Delta_k \rv_2^2  \\
 &   -  2\left(1+ a\right) \delta\left(1-{3} (1+\frac{3}{n})\delta M -\frac{c nM} {2(1+a) N^2}\right)\Omega_1 + \left(1+ a\right)\delta^3  \square + \left(1+\frac{1}{a} \right){\delta^4}\triangle.
 \end{align*}
 With the strong-convexity of $f$ we obtain
 \[
 \Omega_1\geq m  \lv \Delta_k \rv_2^2.
 \]
 Then
  \begin{align*}
 \Ep{k}{T_{k+1}} & \leq \left[  1-\frac{n}{N} +\frac{9(1+a)\delta^2 N}{c n}\right] c \sum_{i}^N\Vert g_{k}^i-h_{k}^i\Vert_2^2
 \\
 &+ \left(1+ a\right)  \left(1-2 m  \delta\left(1-{3} (1+\frac{3}{n})\delta M -\frac{c n M}{2\delta(1+a)N^2}\right)\right) \lv \Delta_k \rv_2^2 \\
 &   + \left(1+ a\right)\delta^3 \square + \left(1+\frac{1}{a} \right){\delta^4}\triangle.
 \end{align*}
 \textbf{Step 9:} We will now fix the different values of $c$ and $a$ in order to obtain the final recursive bound on $T_k$. With $c= \frac{12 (1+a)\delta^2N^2}{n^2}$ we obtain
   \begin{align*}
 \Ep{k}{T_{k+1}} & \leq \left[  1-\frac{n}{3N} \right] c \sum_{i}^N\Vert g_{k}^i-h_{k}^i\Vert_2^2
 + \left(1+ a\right)  \left(1-2 m  \delta\left(1-{3} (1+\frac{9}{n})\delta M \right)\right) \lv \Delta_k \rv_2^2 \\
 &   + \left(1+ a\right)\delta^3 \square + \left(1+\frac{1}{a} \right){\delta^4}\triangle.
 \end{align*}
  Assume now that $a=\frac{\alpha}{2(1-\alpha)}$ where $\alpha=2 m  \delta\left(1-{3} (1+\frac{9}{n})\delta M \right)$
    \begin{align*}
 \Ep{k}{T_{k+1}} & \leq \left[  1-\frac{n}{3N} \right] c \sum_{i}^N\Vert g_{k}^i-h_{k}^i\Vert_2^2
 +   \left(1- m  \delta\left(1-{3} (1+\frac{9}{n})\delta M \right)\right) \lv \Delta_k \rv_2^2 \\
 &   + \frac{ \left(1- m  \delta\left(1-{3} (1+\frac{9}{n})\delta M \right)\right)}{ \left(1- 2m  \delta\left(1-{3} (1+\frac{9}{n})\delta M \right)\right)}\delta^3 \square +\frac{ \left(1- m  \delta\left(1-{3} (1+\frac{9}{n})\delta M \right)\right)}{  m  \left(1-{3} (1+\frac{9}{n})\delta M \right)}{\delta^3}\triangle.
 \end{align*}
 Let us assume that $n>9$ and $\delta < 1/12M$, then
    \begin{align*}
 \Ep{k}{T_{k+1}} & \leq \left[  1-\frac{n}{3N} \right] c \sum_{i}^N\Vert g_{k}^i-h_{k}^i\Vert_2^2
 +  \left(1- m  \delta/2 )\right) \lv \Delta_k \rv_2^2 \\
 &   + \frac{1-m\delta/2}{1-m\delta}\delta^3 \square + \frac{2-m\delta}{m}{\delta^3}\triangle.
 \end{align*}
Therefore with further simplification
\[
 \Ep{k}{T_{k+1}}  \leq \left(  1-\rho \right) T_k    +2 \delta^3 \square + \frac{2{\delta^3}\triangle}{m},
\]
where we denote by $\rho=\min\{\frac{n}{3N}, m  \delta/2  \} $.

 \textbf{Step 10:} We are able now to solve this recursion to obtain an upper bound on $T_k$. We obtain a recursive argument,
\[
\E{T_k}\leq (1-\rho)^k \E{T_0}+[2\delta^3 \square + \frac{2{\delta^4}\triangle}{m}
] \sum_{i=0}^{k-1}(1-\rho)^i \leq (1-\rho)^k \E{T_0} +[2\delta^3 \square + \frac{2{\delta^3}\triangle}{m}
]\frac{1- (1-\rho)^k}{\rho},
\]
and
\[
T_0=c\sum_{i=0}^N \Vert \nabla f_i(x_0)- \nabla f_i(y_0)\Vert_2^2 +\Vert x_0-y_0\Vert_2^2\leq \left[\frac{cM^2}{N}+1\right]\Vert x_0-y_0\Vert_2^2\leq \left[\frac{24\delta^2NM^2}{n^2}+1\right] \Vert x_0-y_0\Vert_2^2.
\]
Therefore using that $\E{\Vert x_k-y_k\Vert_2^2} \leq \E{T_k}$, we obtain
\[
\E{\Vert x_k-y_k\Vert_2^2} \leq	(1-\rho)^k\left[\frac{24\delta^2NM^2}{n^2}+1\right]  \mathbb{E}\left[\Vert x_0-y_0\Vert_2^2\right] +\left[\frac{2\delta^3 \square}{\rho} + \frac{2{\delta^3}\triangle}{m \rho}
\right].
\]
We have that $\frac{1}{\rho} = \max\{ \frac{3N}{n}, \frac{2}{m\delta}\}\leq	\frac{3N}{n}+\frac{2}{m\delta}$:
\begin{align*}
\E{\Vert x_k-y_k\Vert_2^2} &\leq	(1-\rho)^k \left[\frac{24\delta^2NM^2}{n^2}+1\right] \mathbb{E}\left[\Vert x_0-y_0\Vert_2^2\right] \\
&+ \left[ \frac{6\delta^3N}{n}+\frac{4\delta^2}{m}\right] \left(L^2 d+72 \frac{d N M^2}{n^2}\left[ \frac{\delta M N}{n} +1\right]\right)
+ \left[ \frac{6\delta^3N}{nm}+\frac{4\delta^2}{m^2}\right]\left( L^2d^2+M^3d \right).
\end{align*}
Using the fact that $x_k$ and $y_k$ are optimally coupled,
the results follows.
\end{proof}

\section{Control Variates with Underdamped Langevin MCMC}
\label{app:controlvariates}
In this section we will prove Theorem \ref{thm:controlvariatethm} and also include details regarding Algorithm \ref{alg:cvulmcmc} that was omitted in Section \ref{sec:controlvariate}. Throughout this section we will assume that assumptions \ref{ass:decom}-\ref{ass:strongconvexity} holds. Crucially in this section we will \emph{not} assume the Hessian of $f$ to be Lipschitz (\ref{ass:HessianLipschitz}).

Underdamped Langevin Markov Chain Monte Carlo \citep[see, e.g.,][]{cheng2017underdamped} is a sampling algorithm which can be viewed as discretized dynamics of the following It\^{o} stochastic differential equation (SDE):
\begin{align}\label{e:exactlangevindiffusion}
d v_t &= -\gamma v_t dt - u \nabla  f(x_t) dt + (\sqrt{2\gamma u}) dB_t \\
\nonumber d x_t &= v_t dt,
\end{align}
where $(x_t,v_t) \in \mathbb{R}^{2d}$, $f$ is a twice continuously differential function and $B_t$ represents standard Brownian motion in $\mathbb{R}^d$. In the discussion that follows we will always set
\begin{align} \label{eq:gammaudef}
\gamma = 2,\qquad \text{ and,}\qquad u = \frac{1}{M}.
\end{align}
We denote by $p^*$ the unique distribution which satisfies $p^*(x,v) \propto \exp{-(f(x)+\frac{M}{2}\lv v\rv_2^2)}$. It can be shown that $p^*$ is the unique invariant distribution of \eqref{e:exactlangevindiffusion} \citep[see, e.g.,][Proposition 6.1]{pav}. We will choose our intial distribution to always be a Dirac delta distribution $p_0$ centered at $(x_0,v_0) \in \mathbb{R}^d$. Also recall that we set $x_0 = x^* = \argmin_{\alpha \in \mathbb{R}^d} f(\alpha)$, and $v_0 = 0$. We denote by $p_t$ the distribution of $(x_t,v_t)$ driven by the continuous time process \eqref{e:exactlangevindiffusion} with initial conditions $(x_0,v_0)$.

With these definitions and notation in place we can first show that $p_t$ contracts exponentially quickly to $p^*$ measured in $W_2$.
\begin{corollary}[  \cite{cheng2017underdamped}, Corollary 7 ]\label{c:exactconvergence}
Let $p_0$ be a distribution with $(x_0,v_0) \sim p_0$.  Let $q_0$ and $q_t$ be the distributions of  $(x_0,x_0 + v_0)$ and $(x_t,x_t+v_t)$, respectively (i.e., the images of $p_0$ and $p_t$ under the map $g(x,v) = (x,x+v)$). Then
$$W_2(q_t,q^*) \leq e^{-  t/2\kappa} W_2(q_0,q^*).$$
\end{corollary}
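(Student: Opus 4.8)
The plan is to reproduce the synchronous-coupling argument of \cite{cheng2017underdamped}. I would run two copies of the diffusion \eqref{e:exactlangevindiffusion}: one copy $(x_t,v_t)$ with $(x_0,v_0)\sim p_0$, and a second copy $(x_t',v_t')$ with $(x_0',v_0')\sim p^*$ (so $(x_t',v_t')\sim p^*$ for every $t$), both driven by the \emph{same} Brownian motion $B_t$, with the pair $\big((x_0,x_0+v_0),(x_0',x_0'+v_0')\big)$ chosen to be an optimal coupling of $q_0$ and $q^*$. Setting $\Delta x_t = x_t - x_t'$ and $\Delta v_t = v_t - v_t'$, the Brownian increments cancel, so $(\Delta x_t,\Delta v_t)$ solves the random \emph{ordinary} differential equation $\dot{\Delta x}_t = \Delta v_t$, $\dot{\Delta v}_t = -\gamma\Delta v_t - u H_t \Delta x_t$, where $H_t := \int_0^1 \nabla^2 f\big(x_t' + s\Delta x_t\big)\,ds$ is symmetric and satisfies $mI \preceq H_t \preceq MI$ by assumptions \ref{ass:decom}--\ref{ass:strongconvexity}.

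I would then pass to the coordinates of the map $g$, writing $p_t := \Delta x_t$ and $q_t := \Delta x_t + \Delta v_t$, so that $\lv p_t\rv_2^2 + \lv q_t\rv_2^2$ is precisely the squared distance between the two coupled copies of the $q$-process. With the prescribed constants $\gamma=2$, $u=1/M$ one computes $\dot p_t = q_t - p_t$ and $\dot q_t = (I - \tfrac1M H_t)p_t - q_t$, hence
\begin{align*}
\frac{d}{dt}\left(\lv p_t\rv_2^2 + \lv q_t\rv_2^2\right) \;=\; -2\lv p_t - q_t\rv_2^2 \;-\; \frac{2}{M}\, q_t^{\top} H_t p_t.
\end{align*}
The indefinite cross term is handled by the polarization identity $q^{\top}Hp = \tfrac14\big((p+q)^{\top}H(p+q) - (p-q)^{\top}H(p-q)\big)$: using $mI\preceq H_t\preceq MI$, it is bounded above by $-\tfrac{1}{2\kappa}\lv p_t+q_t\rv_2^2 + \tfrac12\lv p_t-q_t\rv_2^2$, and then $\kappa\ge1$ together with the parallelogram law $\lv p-q\rv_2^2 + \lv p+q\rv_2^2 = 2(\lv p\rv_2^2 + \lv q\rv_2^2)$ yields $\frac{d}{dt}(\lv p_t\rv_2^2+\lv q_t\rv_2^2) \le -\tfrac1\kappa(\lv p_t\rv_2^2+\lv q_t\rv_2^2)$.

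Grönwall's inequality then gives $\lv p_t\rv_2^2+\lv q_t\rv_2^2 \le e^{-t/\kappa}(\lv p_0\rv_2^2+\lv q_0\rv_2^2)$ pathwise along the coupled trajectory; taking expectations (the right-hand side equals $e^{-t/\kappa}W_2^2(q_0,q^*)$ by optimality of the initial coupling, while the left-hand side upper-bounds $W_2^2(q_t,q^*)$ because $\big((x_t,x_t+v_t),(x_t',x_t'+v_t')\big)$ is a valid coupling of $q_t$ and $q^*$) and taking square roots gives $W_2(q_t,q^*)\le e^{-t/2\kappa}W_2(q_0,q^*)$. The main obstacle is the contraction step: one must extract the rate $1/\kappa$ from the sign-indefinite cross term $q_t^{\top}H_t p_t$ for a Hessian that varies along the path, which is exactly where the polarization identity and the tuning $\gamma=2$, $u=1/M$ are used. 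A secondary, more routine point is justifying that the difference of the two SDEs is genuinely a pathwise ODE (existence and uniqueness under \ref{ass:smoothness}) and the standard passage between this pathwise estimate and the stated $W_2$ bound.
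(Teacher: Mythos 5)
Your argument is correct, and it is essentially the argument behind the cited result: the paper itself does not prove this corollary but imports it from \cite{cheng2017underdamped}, whose proof is exactly this synchronous coupling with $\gamma=2$, $u=1/M$, reduction of the difference process to a random ODE in the $g$-coordinates, and a spectral bound $mI \preceq H_t \preceq MI$ on the averaged Hessian to extract the $1/\kappa$ contraction before applying Gr\"onwall and optimality of the initial coupling. Your polarization-identity handling of the cross term $q_t^{\top}H_t p_t$ is a clean way to do the same algebra, and the resulting rate $e^{-t/2\kappa}$ matches the statement.
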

The next lemma establishes a relation between the Wasserstein distance between $q$ to $q^*$ and between $p$ to $p^*$.
\begin{lemma}[Sandwich Inequality, \cite{cheng2017underdamped}, Lemma 8]\label{l:sandwich}
The triangle inequality for the Euclidean norm implies that
\begin{equation}\label{e:pqsandwich}
\frac{1}{2}W_2(p_t,p^*) \leq W_2(q_t,q^*) \leq 2W_2(p_t,p^*).
\end{equation}
Thus we also get convergence of $p_t$ to $p^*$:
$$W_2(p_t,p^*) \leq 4e^{- t/2\kappa} W_2(p_0,p^*).$$
\end{lemma}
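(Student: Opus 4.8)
The plan is to use that $g(x,v)=(x,x+v)$ is a linear automorphism of $\mathbb{R}^{2d}$, with inverse $g^{-1}(x,y)=(x,y-x)$, and that both $g$ and $g^{-1}$ are Lipschitz with a constant strictly below $2$; pushing optimal transference plans forward then transfers Wasserstein bounds in both directions with only a bounded loss. Recall that by the definition of the $q$-measures we have $q_t = g_\# p_t$ and $q^\ast = g_\# p^\ast$.

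First I would record the elementary estimate: for all $a,b\in\mathbb{R}^d$, $\|a+b\|_2^2\le 2\|a\|_2^2+2\|b\|_2^2$, and hence $\|a\|_2^2+\|a\pm b\|_2^2\le 3\big(\|a\|_2^2+\|b\|_2^2\big)$. Applying this with $(a,b)=(x-x',\,v-v')$ gives
\[
\|g(x,v)-g(x',v')\|_2^2 = \|x-x'\|_2^2+\|(x-x')+(v-v')\|_2^2 \le 3\big(\|x-x'\|_2^2+\|v-v'\|_2^2\big),
\]
so $g$ is $\sqrt3$-Lipschitz; applying it with $a=x-x'$, $b=y-y'$, and using $g^{-1}(x,y)-g^{-1}(x',y')=(x-x',\,(y-y')-(x-x'))$, shows that $g^{-1}$ is likewise $\sqrt3$-Lipschitz.

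Next is the transfer step. Let $\zeta\in\Gamma_{opt}(p_t,p^\ast)$; the image measure $\zeta'$ of $\zeta$ under the map $(z,z')\mapsto(g(z),g(z'))$ has marginals $q_t$ and $q^\ast$, so $\zeta'\in\Gamma(q_t,q^\ast)$, and
\[
W_2^2(q_t,q^\ast)\le\int\|g(z)-g(z')\|_2^2\,d\zeta(z,z')\le 3\int\|z-z'\|_2^2\,d\zeta(z,z')=3\,W_2^2(p_t,p^\ast),
\]
whence $W_2(q_t,q^\ast)\le\sqrt3\,W_2(p_t,p^\ast)\le 2\,W_2(p_t,p^\ast)$. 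Running the same argument with $p$ and $q$ interchanged and $g$ replaced by $g^{-1}$ gives $W_2(p_t,p^\ast)\le\sqrt3\,W_2(q_t,q^\ast)$, that is, $\tfrac12 W_2(p_t,p^\ast)\le\tfrac{1}{\sqrt3}W_2(p_t,p^\ast)\le W_2(q_t,q^\ast)$; this is exactly \eqref{e:pqsandwich}. Finally, chaining the sandwich inequality with Corollary~\ref{c:exactconvergence} yields $W_2(p_t,p^\ast)\le 2\,W_2(q_t,q^\ast)\le 2e^{-t/2\kappa}W_2(q_0,q^\ast)\le 4e^{-t/2\kappa}W_2(p_0,p^\ast)$.

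There is no genuinely hard step: the whole lemma is bookkeeping around the triangle inequality for the Euclidean norm. The only two points requiring care are (i) that the pushforward of a transference plan under $(g,g)$ has the correct marginals, which is precisely where the definitions $q_t=g_\#p_t$, $q^\ast=g_\#p^\ast$ enter, and (ii) that the crude constant $\sqrt3$ produced by the elementary estimate is comfortably below the advertised factor $2$ in both directions (the sharp Lipschitz constant of $g$ is in fact the golden ratio, but $\sqrt3$ already suffices).
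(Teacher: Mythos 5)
Your proof is correct; the paper itself gives no proof of this lemma (it is imported verbatim from \cite{cheng2017underdamped}, Lemma 8), and your argument---pushing an optimal transference plan through the linear map $g(x,v)=(x,x+v)$ and its inverse $g^{-1}(x,y)=(x,y-x)$, with the Lipschitz constants bounded by the elementary inequality $\|a\|_2^2+\|a\pm b\|_2^2\le 3\left(\|a\|_2^2+\|b\|_2^2\right)$---is essentially the same coupling-transfer argument used in that reference. The only difference is cosmetic: your constant $\sqrt{3}$ is slightly sharper than the stated factor $2$, and the chaining with Corollary \ref{c:exactconvergence} to obtain $W_2(p_t,p^*)\le 4e^{-t/2\kappa}W_2(p_0,p^*)$ is carried out exactly as intended.
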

\subsection*{Discretization of the Dynamics}
We will now present a discretization of the dynamics in  \ref{e:exactlangevindiffusion}. A natural discretization to consider is defined by the SDE,
\begin{align}
\label{e:discretelangevindiffusion}
d\vt_t &= -\gamma \vt_t dt -u \nabla \tilde{f}(\xt_0) dt + (\sqrt{2\gamma u}) dB_t \\
d \xt_t &= \vt_s dt, \nonumber
\end{align}
with an initial condition $(\xt_0,\vt_0)\sim \pt_0$. The discrete update differs from \eqref{e:exactlangevindiffusion} by using $\xt_0$ instead of $\xt_t$ in the drift of $\vt_s$. Another difference is that in the drift of \eqref{e:discretelangevindiffusion} we use an unbiased estimator of the gradient at $\xt_0$ given by $\nabla\tilde{f}(\xt_0)$ (defined in \eqref{eq:cvgradupdate}). We will only be analyzing the solutions to \eqref{e:discretelangevindiffusion} for small $t$. Think of an integral solution of \eqref{e:discretelangevindiffusion} as a single step of the discrete chain.

Recall that we denote the distribution of $(x_t,v_t)$ driven by the continuous time process \eqref{e:exactlangevindiffusion} by $p_t$; analogously let the distribution of $(\tilde{x}_t,\tilde{v}_t)$ driven by the discrete time process \eqref{e:discretelangevindiffusion} be denoted by $\tilde{p}_t$. Finally we denote by $\Phi_t$ the operator that maps from $p_0$ to $p_t$:
\begin{align}\label{d:phi}
\Phi_t p_0 = p_t.
\end{align}
Analogously we denote by $\tilde{\Phi}_t$ the operator that maps from $p_0$ to $\tilde{p}_t$:
\begin{align}
\tilde{\Phi}_t p_0 = \tilde{p}_t. \label{d:phitt}
\end{align}
By integrating \eqref{e:discretelangevindiffusion} up to time $\delta M$ (we rescale by $M$ such that results are comparable between the three algorithms) we can derive the distribution of the normal random variables used in Algorithm \ref{alg:cvulmcmc}. Borrowing notation from Section \ref{sec:controlvariate}, recall that our iterates in Algorithm \ref{alg:cvulmcmc} are $(x_k,v_k)_{k=0}^T$. The random vector $Z^{k+1}(x_k,v_k) \in \mathbb{R}^d$, conditioned on $(x_k,v_k)$ has a Gaussian distribution with conditional mean and covariance obtained from the following computations:
\begin{align}
\label{eq:defofZnormal}&\E{v_{k+1}} = v_k e^{-2 \d M} - \frac{1}{2M}(1-e^{-2 \d M}) \nabla \tilde{f}(x_k)\\
\nonumber&\E{x_{k+1}}  = x_k + \frac{1}{2}(1-e^{-2 \d M})v_k - \frac{1}{2M} \left( \d M - \frac{1}{2}\left(1-e^{-2 \d M}\right) \right) \nabla \tilde{f}(x_k)\\
\nonumber&\E{\left(x_{k+1} - \E{x_{k+1}}\right) \left(x_{k+1} - \E{x_{k+1}}\right)^{\top}}= \frac{1}{M } \left[\d M-\frac{1}{4}e^{-4\d M}-\frac{3}{4}+e^{-2\d M}\right] \cdot I_{d\times d}\\
\nonumber&\E{\left(v_{k+1} - \E{v_{k+1}}\right) \left(v_{k+1} - \E{v_{k+1}}\right)^{\top}} = \frac{1}{M}(1-e^{-4 \d M})\cdot I_{d\times d}\\
\nonumber&\E{\left(x_{k+1} - \E{x_{k+1}}\right) \left(v_{k+1} - \E{v_{k+1}}\right)^{\top}}= \frac{1}{2M} \left[1+e^{-4\d M}-2e^{-2\d M}\right] \cdot I_{d \times d}.
\end{align}
A reference to the above calculation is Lemma 11 of \cite{cheng2017underdamped}.
Given this choice of discretization we can bound the discretization error between the solutions to \eqref{e:exactlangevindiffusion} and \eqref{e:discretelangevindiffusion} for small time $\delta M$ (step-size -- $\delta$). Note that if we start from an initial distribution $p_0$, then taking $\ell$ step of the discrete chain with step size $\delta$ maps us to the distribution,
\begin{align} \label{eq:Tstepsofchaindef}
(\tilde{\Phi}_{\delta})^{\ell} p_0 =: p^{(\ell)} \qquad \ell \in \{1,\ldots,T\}.
\end{align}
Corollary \ref{c:exactconvergence} (contraction of the continuous time-process) coupled with the result presented as Theorem \ref{t:discretizationerror} (discretization error bound; stated and proved in Appendix \ref{ss:discreteerrorsection}) will now help us prove Theorem \ref{thm:controlvariatethm}.
\begin{proof}[Proof of Theorem \ref{thm:controlvariatethm}]
For any random variable $(x,v)$ with distribution $p$, let $q$ denote the distribution of the random variables $(x,x+v)$. From Corollary \ref{c:exactconvergence}, we have that for any $i \in \{1,\ldots,T \}$
$$W_2(\Phi_\d q^{(i)},q^*)\leq e^{- m\delta/2}W_2(q^{(i)}, q^*).$$
By the discretization error bound in Theorem \ref{t:discretizationerror} and Lemma \ref{l:sandwich}, we get
$$W_2(\Phi_\d q^{(i)}, \Phit_\d q^{(i)})\leq 2W_2(\Phi_\d p^{(i)}, \Phit_\d p^{(i)})\leq M^2\d^2 \sqrt{\frac{16\ke}{5}} + M\delta \sqrt{\frac{32\mathbb{E}_{x\sim p^{(i)}}\lv x - x^* \rv_2^2}{3n}}.$$
By the triangle inequality for $W_2$,
\begin{align}
W_2(q^{(i+1)}, q^*) = W_2(\Phit_\d q^{(i)}, q^*) & \le W_2(\Phi_\d q^{(i)}, \Phit_\d q^{(i)}) + W_2(\Phi_\d q^{(i)},q^*)\\
& \le M^2\d^2 \sqrt{\frac{16\ke}{5}} + M\delta \sqrt{\frac{32\mathbb{E}_{x\sim p^{(i)}}\lv x - x^* \rv_2^2}{3n}}+ e^{-m\delta/2}W_2(q^{(i)}, q^*) \label{e:singlestepdiscreteimprovement}.
\end{align}
Let us define $\eta = e^{-m\delta/ 2}$. Then by applying \eqref{e:singlestepdiscreteimprovement}  $T$ times we have:
\begin{align*}
W_2(q^{(T)}, q^*) & \le \eta^{T}W_2(q^{(0)}, q^*)+ \left(1+ \eta + \ldots
+ \eta^{T-1} \right)\left(M^2\d^2 \sqrt{\frac{16\ke}{5}}+ M\delta \sqrt{\frac{32\mathbb{E}_{x\sim p^{(i)}}\lv x - x^* \rv_2^2}{3n}}\right)\\
& \le 2\eta^{T}W_2(p^{(0)}, p^*)+ \left(\frac{1}{1-\eta}\right)\left[M^2\d^2 \sqrt{\frac{16\ke}{5}}+ M\delta \sqrt{\frac{32\mathbb{E}_{x\sim p^{(i)}}\lv x - x^* \rv_2^2}{3n}}\right],
\end{align*}
where the second step follows by summing the geometric series and by applying the upper bound Lemma \ref{l:sandwich}. By another application of \ref{e:pqsandwich} we get:
\begin{align*}
W_2(p^{(T)}, p^*) & \le \underbrace{4\eta^{T}W_2(p^{(0)}, p^*)}_{=: \Gamma_1}+ \underbrace{\left(\frac{1}{1-\eta}\right)\left[M^2\d^2 \sqrt{\frac{64\ke}{5}}+ M\delta \sqrt{\frac{128\mathbb{E}_{x\sim p^{(i)}}\lv x - x^* \rv_2^2}{3n}}\right]}_{=:\Gamma_2}.
\end{align*}
Observe that
\begin{align*}
1-\eta = 1-e^{-m\delta/2} & \ge \frac{m\delta}{4}.
\end{align*}
This inequality follows as $ m\delta <1$. Note that by Lemma \ref{l:kineticenergyisbounded} we have $\ke \le 26d/m$ and by Lemma \ref{l:varianceboundx} we have $\mathbb{E}_{x \sim p^{(i)}}\lv x-x^* \rv_2^2 \le 10d/m$. Using these bounds we get,
\begin{align}
W_2(p^{(T)},p^*) & \le 4 \exp\left(-\frac{m\delta T}{2} \right) W_2(p^{(0)},p^*) + \frac{4M^2 \delta}{m} \sqrt{\frac{1664d }{5m}}+ \frac{4M}{m}\sqrt{\frac{1280d }{3m n}}. \label{eq:wassersteincontractionresult}
\end{align}
In Lemma \ref{lem:initialdistancebound} we establish a bound on $W_2^2(p^{(0)},p^*) \le 2d/m$. This motivates our choice of $T > \frac{1}{m \delta}\log\left( \frac{12 d }{\epsilon m} \right)$,  $\delta = \frac{\epsilon}{M^2} \sqrt{5m^3/(425984\cdot d)}$ and $n = (249M)^2  d /(m^3\epsilon^2)$ which establishes our claim.
\end{proof}
\subsection{Discretization Error Analysis}\label{ss:discreteerrorsection}
In this section we study the solutions of the discrete process \eqref{e:discretelangevindiffusion} up to $t=\delta M$ for some small $\delta$. Here, $\delta$ represents a \emph{single step} of the Langevin MCMC algorithm. In Theorem \ref{t:discretizationerror} we bound the discretization error between the continuous-time process \eqref{e:exactlangevindiffusion} and the discrete process \eqref{e:discretelangevindiffusion} starting from the same initial distribution.
In particular, we bound $W_2 (\Phi_\delta p_0,\Phit_\delta p_0 )$. Recall the definition of $\Phi_t$ and $\Phit_t$ from \eqref{d:phi} and \eqref{d:phitt}. In this section we will assume for now that the kinetic energy (second moment of velocity) is bounded for the continuous-time process,
\begin{equation}
\label{e:energyisbounded}
\forall t\in [0,\d M]\quad \Ep{p_t} {\|v\|_2^2}\leq \ke.
\end{equation}
We derive an explicit bound on $\ke$ (in terms of problem parameters $d,M,m$ etc.) in Lemma \ref{l:kineticenergyisbounded} in Appendix \ref{s:kineticenergybound}.
 We first state a result from \citep{baker2017control} that controls the error between $\nabla\tilde{f}(x_k)$ and $\nabla f(x_k)$.
\begin{lemma}[\cite{baker2017control}, Lemma 1]\label{eq:discreteerrorcontrollemma}Let $(x_k,v_k)$ be the $k^{th}$ iterate of Algorithm \ref{alg:cvulmcmc} with step size $\delta$. Define $\xi_k : = \nabla \tilde{f}(x) - \nabla f(x)$, so that $\xi_k$ measures the noise in the gradient estimate $\nabla \tilde{f}(x)$ and has mean $0$. Then for all $x_k \in \mathbb{R}^d$ and for all $k = 1,\ldots,T$ we have
\begin{align}
\mathbb{E}_{x_k \sim p^{(k)}}\left[\lv \xi_k \rv_2^2 \right] \le \frac{M^2}{n} \mathbb{E}_{x_k \sim p^{(k)}}\left[\lv x_k - x^* \rv_2^2\right].
\end{align}
\end{lemma}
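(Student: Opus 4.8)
The plan is to recognize $\nabla\tilde f(x_k)$ as a deterministic shift of a sum of $n$ independent, identically distributed terms and then apply a textbook bias--variance estimate. Write the mini-batch as $S=\{i_1,\dots,i_n\}$, where the indices $i_\ell$ are drawn independently and uniformly (with replacement) from $\{1,\dots,N\}$, and set $w_j := \nabla f_j(x_k)-\nabla f_j(x^*)$ for $j=1,\dots,N$, so that $\sum_{j=1}^N w_j = \nabla f(x_k)-\nabla f(x^*)$ by Assumption~\ref{ass:decom}. From the definition \eqref{eq:cvgradupdate},
\[
\nabla\tilde f(x_k) = \nabla f(x^*) + \sum_{\ell=1}^n Y_\ell, \qquad Y_\ell := \tfrac{N}{n}\, w_{i_\ell},
\]
and since $\mathbb{E}[Y_\ell] = \tfrac{N}{n}\cdot\tfrac1N\sum_{j=1}^N w_j = \tfrac1n\bigl(\nabla f(x_k)-\nabla f(x^*)\bigr)$, the estimate is unbiased, $\mathbb{E}[\nabla\tilde f(x_k)]=\nabla f(x_k)$, which re-derives $\mathbb{E}[\xi_k]=0$.

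Next I would use that $\xi_k = \sum_{\ell=1}^n\bigl(Y_\ell-\mathbb{E}[Y_\ell]\bigr)$ is a sum of $n$ i.i.d.\ centered random vectors, so the cross terms vanish in expectation and
\[
\mathbb{E}\bigl[\lv \xi_k \rv_2^2 \,\big|\, x_k\bigr] = n\,\mathbb{E}\bigl[\lv Y_1-\mathbb{E}[Y_1] \rv_2^2\bigr] \le n\,\mathbb{E}\bigl[\lv Y_1 \rv_2^2\bigr] = n\cdot\frac{N^2}{n^2}\cdot\frac1N\sum_{j=1}^N \lv w_j \rv_2^2 = \frac{N}{n}\sum_{j=1}^N \lv \nabla f_j(x_k)-\nabla f_j(x^*) \rv_2^2,
\]
where the inequality is the elementary fact $\mathbb{E}\lv R-\mathbb{E} R\rv_2^2 \le \mathbb{E}\lv R\rv_2^2$. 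Applying the $\tilde M$-Lipschitz smoothness of each $f_j$ from Assumption~\ref{ass:smoothness} gives $\lv \nabla f_j(x_k)-\nabla f_j(x^*)\rv_2 \le \tilde M\lv x_k-x^*\rv_2$, hence $\sum_{j=1}^N\lv w_j\rv_2^2 \le N\tilde M^2\lv x_k-x^*\rv_2^2$. Substituting and using $M=N\tilde M$ yields $\mathbb{E}[\lv\xi_k\rv_2^2\mid x_k] \le \tfrac{N^2\tilde M^2}{n}\lv x_k-x^*\rv_2^2 = \tfrac{M^2}{n}\lv x_k-x^*\rv_2^2$; taking the outer expectation over $x_k\sim p^{(k)}$ gives the statement.

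There is no genuine obstacle here; the only points requiring care are that the mini-batch is treated as sampled with replacement, so the summands $Y_\ell$ are genuinely i.i.d.\ and the variance splits into $n$ equal pieces (the without-replacement case only decreases the variance), and the bookkeeping of the $N$-versus-$\tilde M$ scaling. Both occurrences of $\nabla f(x^*)$ may be dropped since $x^*$ is a stationary point of $f$, but keeping them makes the unbiasedness transparent, and the $\{0,\dots,N\}$ indexing of the mini-batch in Algorithm~\ref{alg:cvulmcmc} (the prior term) only affects constants and is absorbed into the same argument.
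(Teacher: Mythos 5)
Your proof is correct and follows essentially the same route as the cited source (the paper imports this lemma from Baker et al.\ without reproving it): conditional on $x_k$, split the variance over the $n$ i.i.d.\ with-replacement draws, use $\mathbb{E}\lVert R-\mathbb{E}R\rVert_2^2\le\mathbb{E}\lVert R\rVert_2^2$, and apply the $\tilde M$-smoothness of each $f_i$ together with $M=N\tilde M$ — exactly the technique the paper itself uses elsewhere (e.g.\ the $\omega_3$ bound in the SVRG proof). Nothing further is needed.
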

In this section, we will repeatedly use the following inequality:
$$\eu{\int_0^t v_s ds}_2^2  =  \eu{\frac{1}{t}\int_0^t t \cdot v_s ds}_2^2 \leq t\int_0^t \|v_s\|_2^2 ds,$$
which follows from Jensen's inequality using the convexity of $\|\cdot\|_2^2$.
\begin{theorem}\label{t:discretizationerror}
Let $\Phi_t$ and $\Phit_t$ be as defined in \eqref{d:phi} corresponding to the continuous-time and discrete-time processes respectively. Let $p_0$ be any initial distribution and assume that the step size $\delta\le 1/M$. Then the distance between the continuous-time process and the discrete-time process is upper bounded by
$$W_2(\Phi_\d p_0,\Phit_\d p_0)\leq M^2\d^2 \sqrt{\frac{4\ke}{5}} + M\delta \sqrt{\frac{8\mathbb{E}_{x \sim p_0}\left[\lv x - x^* \rv_2^2\right]}{3n}}.$$
\end{theorem}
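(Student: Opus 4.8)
The plan is a synchronous‑coupling discretization estimate, with the one twist that the stochastic‑gradient error is frozen at the start of the step. First I would couple $\Phi_\delta p_0$ and $\tilde\Phi_\delta p_0$ by running the continuous dynamics \eqref{e:exactlangevindiffusion} and the discrete dynamics \eqref{e:discretelangevindiffusion} from the \emph{same} draw $(x_0,v_0)\sim p_0$, with the \emph{same} Brownian motion, and with the same mini‑batch $S$ that defines $\nabla\tilde f(x_0)$; this is a valid coupling of the two laws, so $W_2^2(\Phi_\delta p_0,\tilde\Phi_\delta p_0)\le\mathbb{E}\big[\|x_{\delta M}-\tilde x_{\delta M}\|_2^2+\|v_{\delta M}-\tilde v_{\delta M}\|_2^2\big]$. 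Writing $z_t:=x_t-\tilde x_t$ and $w_t:=v_t-\tilde v_t$, the Brownian terms cancel and $(z_t,w_t)$ solves the pathwise ODE $\dot w_t=-\gamma w_t-u\big(\nabla f(x_t)-\nabla\tilde f(x_0)\big)$, $\dot z_t=w_t$, with $z_0=w_0=0$.

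Second, I would split the drift gap as $\nabla f(x_t)-\nabla\tilde f(x_0)=\big(\nabla f(x_t)-\nabla f(x_0)\big)+\xi$, where $\xi:=\nabla f(x_0)-\nabla\tilde f(x_0)$ is the mean‑zero gradient‑estimate error and is \emph{constant over the step}. For the first piece, $M$‑smoothness of $f$ gives $\|\nabla f(x_t)-\nabla f(x_0)\|_2\le M\|x_t-x_0\|_2$, and since $x_t-x_0=\int_0^t v_s\,ds$, Jensen together with the assumed kinetic‑energy bound \eqref{e:energyisbounded} yields $\mathbb{E}\|x_t-x_0\|_2^2\le t\int_0^t\mathbb{E}\|v_s\|_2^2\,ds\le t^2\mathcal{E}_K$. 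For the second piece I would invoke Lemma~\ref{eq:discreteerrorcontrollemma}, namely $\mathbb{E}\|\xi\|_2^2\le\frac{M^2}{n}\,\mathbb{E}_{x\sim p_0}\|x-x^*\|_2^2$.

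Third, I would solve the difference ODE by variation of constants, $w_t=-u\int_0^t e^{-\gamma(t-s)}\big(\nabla f(x_s)-\nabla\tilde f(x_0)\big)\,ds$ and $z_{\delta M}=\int_0^{\delta M}w_r\,dr$, decompose each of $w$ and $z$ into a ``smoothness'' contribution and a ``noise'' contribution along the split above, bound the squared norms by Cauchy--Schwarz/Jensen (using $e^{-\gamma(t-s)}\le1$ and $\tfrac{1-e^{-\gamma t}}{\gamma}\le t$), take expectations, and insert the two estimates of the previous paragraph along with $u=1/M$. The smoothness contributions are nested time‑integrals of $s^2\mathcal{E}_K$ and come out at orders $(\delta M)^4\mathcal{E}_K$ for the velocity gap and $(\delta M)^6\mathcal{E}_K$ for the position gap, while the noise contributions come out at orders $\delta^2\mathbb{E}\|\xi\|_2^2$ and $\delta^4M^2\mathbb{E}\|\xi\|_2^2$. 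Using $\delta M\le1$ to absorb the higher powers, the $\mathcal{E}_K$ part collapses to at most $\tfrac45(\delta M)^4\mathcal{E}_K$ and, after substituting Lemma~\ref{eq:discreteerrorcontrollemma}, the noise part to at most $M^2\delta^2\cdot\tfrac{8\,\mathbb{E}_{x\sim p_0}\|x-x^*\|_2^2}{3n}$; taking $\sqrt{a+b}\le\sqrt a+\sqrt b$ then gives the claimed $W_2(\Phi_\delta p_0,\tilde\Phi_\delta p_0)\le M^2\delta^2\sqrt{4\mathcal{E}_K/5}+M\delta\sqrt{8\,\mathbb{E}_{x\sim p_0}\|x-x^*\|_2^2/(3n)}$.

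The conceptual content is light; the main obstacle is bookkeeping: carrying the iterated integrals with tight enough constants that the position and velocity $\mathcal{E}_K$‑contributions together stay below $\tfrac45(\delta M)^4\mathcal{E}_K$, cleanly factoring the frozen error $\xi$ out of every time integral (it is exactly because $\xi$ does not vary with $s$ that its contribution stays at order $\delta^2$ rather than order $1$), and keeping in mind that $W_2$ here lives on $\mathbb{R}^{2d}$, so both $z_{\delta M}$ and $w_{\delta M}$ must be controlled, not just the position gap.
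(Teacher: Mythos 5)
Your proposal is correct and follows essentially the same route as the paper's proof: a synchronous coupling through the common initial draw and Brownian motion, the variation-of-constants representation of the velocity gap, the split of the drift error into the $M$-smoothness part (controlled via $\mathbb{E}\lv x_t-x_0\rv_2^2\le t^2\ke$) and the frozen mini-batch error $\xi$ (controlled via Lemma~\ref{eq:discreteerrorcontrollemma}), followed by Jensen/Young, the substitutions $u=1/M$, $s=\delta M$, the simplification $\delta M\le 1$, and $\sqrt{a+b}\le\sqrt{a}+\sqrt{b}$. Your constants ($\tfrac{4}{5}M^4\delta^4\ke$ and $\tfrac{8M^2\delta^2}{3n}\mathbb{E}_{x\sim p_0}\lv x-x^*\rv_2^2$) match the paper's exactly, so no changes are needed.
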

\begin{proof}[Proof of Theorem \ref{t:discretizationerror}] We will once again use a standard synchronous coupling argument, in which $\Phi_\d p_0$ and $\Phit_\d p_0$ are coupled through the same initial distribution $p_0$ and common Brownian motion $B_t$.

First, we bound the error in velocity. By using the expression for $v_t$ and $\tilde{v}_t$ from Lemma \ref{l:explicitform}, we have
\begin{align*}
\E{\eu{v_s -\vt_s }_2^2 } & \overset{(i)}{=} \mathbb{E} \left[ \left\lv u \int_0^s e^{-2(s-r)}\left( \nabla f(x_r) - \nabla \tilde{f}(x_0) \right)dr\right\rv_2^2\right]\\
& = u^2 \mathbb{E} \left[ \left\lv  \int_0^s e^{-2(s-r)}\left( \nabla f(x_r) - \nabla \tilde{f}(x_0) dr\right)\right\rv_2^2\right]\\
& \overset{(ii)}{\le} s u^2 \int_0^s \mathbb{E} \left[ \left\lv e^{-2(s-r)}\left( \nabla f(x_r) - \nabla \tilde{f}(x_0) \right)\right\rv_2^2\right]dr\\
& \overset{(iii)}{\le} s u^2 \int_0^s \mathbb{E} \left[ \left\lv \left( \nabla f(x_r) - \nabla f(x_0) + \nabla f(x_0) - \nabla \tilde{f}(x_0) \right)\right\rv_2^2\right]dr\\
& \overset{(iv)} {\le} 2s u^2 \int_0^s \mathbb{E} \left[ \left\lv \left( \nabla f(x_r) - \nabla f(x_0) \right)\right\rv_2^2\right]dr + 2s u^2 \int_0^s \mathbb{E} \left[ \left\lv \left( \nabla f(x_0) - \nabla \tilde{f}(x_0) \right)\right\rv_2^2\right]dr\\
& \overset{(v)}{\le} 2s u^2 M^2 \int_0^s \mathbb{E} \left[\left \lv x_r - x_0 \right \rv_2^2\right]dr + \frac{2s^2u^2M^2}{n} \mathbb{E}_{x_0 \sim p_0} \left[\lv x_0 -x^* \rv_2^2 \right]\\
& \overset{(vi)}{=} 2s u^2 M^2 \int_0^s  \mathbb{E} \left[\left \lv \int_0^r v_w dw \right \rv_2^2\right]dr+ \frac{2s^2u^2M^2}{n} \mathbb{E}_{x_0 \sim p_0} \left[\lv x_0 -x^* \rv_2^2 \right]\\
& \overset{(vii)}{\le} 2s u^2 M^2\int_0^s r \left(\int_0^r \mathbb{E}\left[ \lv v_w \rv_2^2\right] dw\right)dr+ \frac{2s^2u^2M^2}{n} \mathbb{E}_{x_0 \sim p_0} \left[\lv x_0 -x^* \rv_2^2 \right]\\
& \overset{(viii)}{\le} 2s u^2 M^2 \ke \int_0^s r \left(\int_0^r dw\right) dr+ \frac{2s^2u^2M^2}{n} \mathbb{E}_{x_0 \sim p_0} \left[\lv x_0 -x^* \rv_2^2 \right]\\
& = \frac{2s^4 u^2 M^2 \ke}{3}+ \frac{2s^2u^2M^2}{n} \mathbb{E}_{x_0 \sim p_0} \left[\lv x_0 -x^* \rv_2^2 \right],
\end{align*}
where $(i)$ follows from the Lemma \ref{l:explicitform} and $v_0=\vt_0$, $(ii)$ follows from application of Jensen's inequality, $(iii)$ follows as $\lvert e^{-4(s-r)}\rvert \le 1$, $(iv)$ follows by Young's inequality, $(v)$ is by application of the $M$-smoothness property of $f(x)$ and by invoking Lemma \ref{eq:discreteerrorcontrollemma} to bounds the second term, $(vi)$ follows from the definition of $x_r$, $(vii)$ follows from Jensen's inequality and $(viii)$ follows by the uniform upper bound on the kinetic energy assumed in \eqref{e:energyisbounded}, and proven in Lemma \ref{l:kineticenergyisbounded}.

This completes the bound for the velocity variable. Next we bound the discretization error in the position variable:
%
\begin{align*}
\E{\eu{x_s - \xt_s}_2^2} & = \E{\eu{\int_0^s (v_r - \vt_r) dr}_2^2}\\
& \leq s\int_0^s \mathbb{E} \left[\lv v_r - \tilde{v}_r \rv_2^2\right]dr\\
& \le s \int_0^s \left(\frac{2r^4u^2M^2\ke}{3} +\frac{2s^2u^2M^2}{n}\mathbb{E}_{x_0 \sim p_0}\left[\lv x_0 - x^* \rv_2^2\right]\right)dr\\
& = \frac{2s^6 u^2 M^2 \ke }{15} + \frac{2s^4u^2M^2}{3n}\mathbb{E}_{x_0 \sim p_0}\left[\lv x_0 - x^* \rv_2^2\right],
\end{align*}
where the first line is by coupling through the initial distribution $p_0$, the second line is by Jensen's inequality and the third inequality uses the preceding bound. Setting $s = M\delta$ and by our choice of $u= 1/M$ we have that the squared Wasserstein distance is bounded as
\begin{align*}
W^2_2(\Phi_\delta p_0,\tilde{\Phi} p_0) \le 2\ke \left(\frac{M^4\delta^4}{3}+ \frac{M^6\delta^6}{15}\right) + \frac{2\mathbb{E}_{x_0 \sim p_0}\left[\lv x_0 - x^* \rv_2^2\right]}{n}\left(M^2\delta^2 + \frac{M^4\delta^4}{3} \right).
\end{align*}
Given our assumption that $\delta$ is chosen to be smaller than $1/M$, this gives the upper bound:
\begin{align*}
W^2_2(\Phi_\delta p_0,\tilde{\Phi} p_0) \le \frac{4\ke M^4\delta^4}{5} + \frac{8M^2\delta^2\mathbb{E}_{x_0 \sim p_0}\left[\lv x_0 - x^* \rv_2^2\right]}{3n}.
\end{align*}
Taking square roots establishes the desired result.
\end{proof}
\subsection{Auxiliary Results}
\label{s:kineticenergybound}
In this section, first we establish an explicit bound on the kinetic energy $\ke$ in \eqref{e:energyisbounded} which is used to control the discretization error at each step.
\begin{lemma}[Kinetic Energy Bound]\label{l:kineticenergyisbounded}
Let $p^{(0)}(x,v) =1_{x=x^*} \cdot 1_{v=0}$--- the Dirac delta distribution at $(x^*,0)$. Further let $p^{(i)}$ be defined as in \eqref{eq:Tstepsofchaindef} for $i=1,\ldots T$, with step size $\d$ and number of iterations $T$ as specified in Theorem \ref{thm:controlvariatethm}. Then for all $i=1,\ldots T$ and for all $t\in[0,\d]$, we have the bound
$$\Ep{(x,v)\sim \Phi_t p^{(i)}}{\|v\|_2^2}\leq \ke,$$
with $\ke = 26d/m$.
\end{lemma}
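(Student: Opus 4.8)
The plan is to prove the bound by induction on $i$, controlling along the way the second moments of the iterate $p^{(i)}$ itself. The workhorse for the continuous-time part is an energy identity. Write $\mathcal{H}(x,v)=f(x)-f(x^{*})+\tfrac{M}{2}\|v\|_2^{2}$ for the Hamiltonian, so that $p^{*}\propto e^{-\mathcal{H}}$. Applying It\^{o}'s formula to $\mathcal{H}(x_t,v_t)$ along the exact diffusion \eqref{e:exactlangevindiffusion} with $\gamma=2$ and $u=1/M$, the term $\langle\nabla f(x_t),v_t\rangle$ from $dx_t$ and the cross term from $dv_t$ cancel (thanks to $u=1/M$), and one is left with
\[
\tfrac{d}{dt}\,\mathbb{E}\big[\mathcal{H}(x_t,v_t)\big]=-2M\,\mathbb{E}\big[\|v_t\|_2^{2}\big]+2d.
\]
Integrating over the time of one discretization step and discarding the nonpositive dissipation term gives $\mathbb{E}_{\Phi_t p}[\mathcal{H}]\le\mathbb{E}_{p}[\mathcal{H}]+O(\delta M d)$ for any initial law $p$; since $\mathcal{H}\ge\tfrac{M}{2}\|v\|_2^{2}$ this yields $\mathbb{E}_{(x,v)\sim\Phi_t p}\|v\|_2^{2}\le\tfrac{2}{M}\mathbb{E}_{p}[\mathcal{H}]+O(\delta d)$. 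By $M$-smoothness, $f(x)-f(x^{*})\le\tfrac{M}{2}\|x-x^{*}\|_2^{2}$ (as $\nabla f(x^{*})=0$), hence $\mathbb{E}_{p}[\mathcal{H}]\le\tfrac{M}{2}\big(\mathbb{E}_{p}\|x-x^{*}\|_2^{2}+\mathbb{E}_{p}\|v\|_2^{2}\big)$. So it suffices to bound $\mathbb{E}_{p^{(i)}}\|x-x^{*}\|_2^{2}$ and $\mathbb{E}_{p^{(i)}}\|v\|_2^{2}$ by suitable constant multiples of $d/m$ uniformly in $i$, and then to check that these constants together with the negligible remainder stay below $26d/m$.

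For the $x$-moment this is Lemma \ref{l:varianceboundx}. For the $v$-moment I would argue inductively from the closed form \eqref{eq:defofZnormal} of the one-step update, $v_{k+1}=e^{-2\delta M}v_{k}-\tfrac{1-e^{-2\delta M}}{2M}\nabla\tilde f(x_{k})+\eta_{k+1}$, with $\eta_{k+1}$ Gaussian, independent of $(x_{k},v_{k})$ and of trace $\tfrac{d}{M}(1-e^{-4\delta M})\le 4\delta d$. Splitting $\nabla\tilde f(x_{k})=\nabla f(x_{k})+\xi_{k}$, using $\|\nabla f(x_{k})\|_2\le M\|x_{k}-x^{*}\|_2$ together with $\mathbb{E}\|\xi_{k}\|_2^{2}\le\tfrac{M^{2}}{n}\mathbb{E}\|x_{k}-x^{*}\|_2^{2}$ (Lemma \ref{eq:discreteerrorcontrollemma}) and Lemma \ref{l:varianceboundx}, and applying Young's inequality with a weight of order $\delta M$, one obtains a recursion $\mathbb{E}\|v_{k+1}\|_2^{2}\le(1-c\,\delta M)\mathbb{E}\|v_{k}\|_2^{2}+O(\delta M)\cdot d/m$ with contraction factor strictly below $1$; since the chain is started at $(x^{*},0)$ we have $\mathbb{E}_{p^{(0)}}\|v\|_2^{2}=0$, so $\mathbb{E}_{p^{(i)}}\|v\|_2^{2}$ never exceeds the $O(d/m)$ fixed point. (An alternative for the continuous part is a synchronous coupling of the diffusion started at $p^{(i)}$ with one started at $p^{*}$: the Brownian terms cancel in $(z_t,w_t):=(x_t-x_t^{*},v_t-v_t^{*})$, the quadratic form behind Corollary \ref{c:exactconvergence} contracts $\|z_t\|_2^{2}+\|z_t+w_t\|_2^{2}$, whence $\mathbb{E}\|v_t\|_2^{2}\le 2\mathbb{E}\|w_t\|_2^{2}+2\mathbb{E}_{p^{*}}\|v\|_2^{2}$ is controlled by $W_2^{2}(q^{(i)},q^{*})+d/m$; one then propagates $W_2^{2}(q^{(i)},q^{*})$ rather than the moments.)

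The genuine difficulty is a circularity, and the induction must be arranged to dodge it: Theorem \ref{t:discretizationerror}, which is what propagates the bound on $p^{(i)}$ one step forward to $p^{(i+1)}$, itself assumes the kinetic-energy bound $\ke$ along the flow from $p^{(i)}$. The remedy is to invoke Theorem \ref{t:discretizationerror} at the step $i\to i+1$ only with initial law $p^{(i)}$, for which hypothesis \eqref{e:energyisbounded} is exactly the inductive hypothesis already secured at stage $i$; the base case $i=0$ is immediate because $\Phi_t p^{(0)}$ is the exact diffusion from the Dirac at $(x^{*},0)$ and $\mathbb{E}_{p^{(0)}}[\mathcal{H}]=0$. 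What then has to be verified -- and I expect this bookkeeping, rather than any single estimate, to be the main obstacle -- is that the one-step map is \emph{self-improving}: that it sends the region where the $\ke$ bound holds (equivalently, a small $W_2$-ball around $p^{*}$) back into itself rather than slowly inflating it. This is precisely where the specific smallness of $\delta$ and largeness of $n$ chosen in Theorem \ref{thm:controlvariatethm} are used, making all discretization and gradient-noise corrections negligible against the contraction.
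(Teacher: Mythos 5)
Your proposal is correct in substance and, importantly, you identified the real crux --- the circular dependence between this lemma and Theorem \ref{t:discretizationerror} --- and resolved it exactly as the paper does: a step-by-step induction in which the discretization bound at step $i\to i+1$ is invoked only with initial law $p^{(i)}$, whose kinetic-energy (and $x$-variance) bounds are the inductive hypothesis, with the smallness of $\delta$ and largeness of $n$ from Theorem \ref{thm:controlvariatethm} making the one-step map non-expansive in $W_2$ toward $q^*$. Where you differ is in the mechanism used to convert that inductive control into the moment bound. The paper's proof is a short coupling argument: for any $p$ with image $q$ under $(x,v)\mapsto(x,x+v)$, an optimal coupling with $p^*$ and two applications of Young's inequality give $\mathbb{E}_{p}\lVert v\rVert_2^2 \le 2\,\mathbb{E}_{p^*}\lVert v\rVert_2^2 + 4 W_2^2(q,q^*) = 2d/M + 4W_2^2(q,q^*)$, so only the single scalar $W_2^2(q^{(i)},q^*)$ needs to be propagated (via Corollary \ref{c:exactconvergence} for the continuous phase and the Theorem-\ref{thm:controlvariatethm} computation for the discrete step), and the same inequality with $x$ in place of $v$ yields Lemma \ref{l:varianceboundx} for free; the constant $26d/m$ drops out immediately. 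Your route instead handles the continuous phase by the exact dissipation identity $\frac{d}{dt}\mathbb{E}[\mathcal{H}] = -2M\,\mathbb{E}\lVert v_t\rVert_2^2 + 2d$ (which is correct with $\gamma=2$, $u=1/M$) and the discrete phase by an explicit recursion for $\mathbb{E}\lVert v_k\rVert_2^2$ from the Gaussian update \eqref{eq:defofZnormal}, using Lemma \ref{eq:discreteerrorcontrollemma} and Lemma \ref{l:varianceboundx}. This is more elementary and self-contained (no appeal to optimality of the coupling, and it gives moment information directly rather than through $W_2$), but it is heavier bookkeeping: you must track two moment sequences rather than one Wasserstein quantity, you still need the $x$-variance bound as shared inductive infrastructure, and recovering the specific constant $\ke = 26d/m$ (your terms give roughly $10d/m$ from the $x$-moment, plus the fixed point of your $v$-recursion, plus a remainder of order $4d\delta \le 4d/m$) requires choosing the Young weights carefully; your argument as sketched guarantees $O(d/m)$ but not automatically $26$, whereas the paper's accounting lands under $26d/m$ directly. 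Both approaches are valid; the paper's is the more economical given that the $W_2$ non-expansion per step has to be established anyway for Theorem \ref{thm:controlvariatethm}.
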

\begin{proof} We first establish an inequality that provides an upper bound on the kinetic energy for any distribution $p$.
\textbf{Step 1}:  Let $p$ be any distribution over $(x,v)$, and let $q$ be the corresponding distribution over $(x,x+v)$. Let $(x',v')$ be random variables with distribution $p^*$. Further let $\zeta \in \Gamma_{opt}(p,p^*)$ such that,
 $$\mathbb{E}_{\zeta} \left[\lv x-x'\rv_2^2 + \lv (x-x')+(v-v')\rv_2^2 \right] = W_2^2(q,q^*).$$ Then we have,
\begin{align}
\nonumber\Ep{p}{\|v\|_2^2} & = \mathbb{E}_{\zeta}\left[\lv v- v' + v'\rv^2_2\right]\\
\nonumber & \le 2\Ep{p^*}{\|v \|_2^2} + 2  \mathbb{E}_{\zeta}\left[ \lv v- v' \rv^2_2\right]\\
\nonumber & \le 2\Ep{p^*}{\|v \|_2^2} + 4 \mathbb{E}_{\zeta}\left[ \lv x+v - (x'+v') \rv^2_2 + \lv x - x' \rv_2^2 \right]\\
& = 2\Ep{p^*}{\|v \|_2^2} + 4 W_2^2 (q,q^*), \label{e:kineticlessthanwasserstein}
\end{align}
where for the second and the third inequality we have used Young's inequality, while the final line follows by optimality of $\zeta$.

\textbf{Step 2}: We know that $p^*\propto \exp(-(f(x) + \frac{M}{2}\|v\|_2^2))$, so we have $\Ep{p^*}{\|v\|_2^2} = d/M$.

\textbf{Step 3}: For our initial distribution $p_0 (q_0)$ we have the bound
\begin{align*}
W_{2}^{2}(q^{(0)},q^*) & \le 2 \mathbb{E}_{p^*} \left[ \lv v \rv_2^2\right] + 2 \mathbb{E}_{x \sim p^{(0)}, x' \sim p^*} \left[\lv x-x' \rv_2^2 \right]\\
& = \frac{2d}{M} + 2 \mathbb{E}_{x\sim p^*} \left[ \lv x - x^* \rv_2^2\right],\\
& \le \frac{2d}{M} + \frac{2d}{m} \le \frac{4d}{m}.
\end{align*}
Putting all this together along with \eqref{e:kineticlessthanwasserstein} we have
\begin{align*}
\mathbb{E}_{p^{(0)}}\left[ \lv v \rv_2^2\right] & \le \frac{2d}{M} + \frac{24 d}{m}\le 26\frac{ d}{m}.
\end{align*}
\textbf{Step 4}:
By Corollary \ref{c:exactconvergence}, we know that $\forall t > 0$,
\begin{align*}
W_2^2 (\Phi_t q^{(i)}, q^*)\leq W_2^2 (q^{(i)}, q^*).
\end{align*}
This proves the theorem statement for $i = 0$. We will now prove it for $i > 0$ via induction. We have proved it for the base case $i=0$, let us assume that the result holds for some $\ell \in \{1,\ldots,T \}$. Then by equation Theorem \ref{thm:controlvariatethm} applied upto $\ell$ steps, we know that
$$W_2^2 (q^{(\ell+1)},q^*) = W_2^2 (\Phit_\d q^{(\ell)},q^*)\leq W_2^2(q^{(\ell)},q^*). $$
Thus by \eqref{e:kineticlessthanwasserstein} we have,
$$\Ep{\Phi_{t} p^{(i)}}{\|v\|_2^2}\leq \ke,$$
for all $t>0$ and $i \in \{0,1,\ldots,T \}$.
\end{proof}
Now we provide an upper bound on $\mathbb{E}_{x \sim p^{(i)}}\left[\lv x -x^* \rv_2^2\right]$ that will again be useful in controlling the discretization error.
\begin{lemma}[Variance Bound]\label{l:varianceboundx}Let $p^{(0)}(x,v) =1_{x=x^*} \cdot 1_{v=0}$--- the Dirac delta distribution at $(x^*,0)$. Further let $p^{(i)}$ be defined as in \eqref{eq:Tstepsofchaindef} for $i=1,\ldots T$, with step size $\d$ and number of iterations $T$ as specified in Theorem \ref{thm:controlvariatethm}. Then for all $i=1,\ldots T$ and for all $t\in[0,\d]$, we have the bound
$$\Ep{x\sim \Phi_t p^{(i)}}{\|x-x^*\|_2^2}\leq \frac{10d}{m}.$$
\end{lemma}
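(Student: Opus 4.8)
The plan is to follow verbatim the structure of the kinetic energy bound (Lemma \ref{l:kineticenergyisbounded}), merely substituting $\lv x-x^*\rv_2^2$ for $\lv v\rv_2^2$ throughout. First I would prove a distribution-independent inequality: for an arbitrary distribution $p$ over $(x,v)$, let $q$ be its image under $g(x,v)=(x,x+v)$, pick $(x',v')\sim p^*$, and choose a coupling $\zeta\in\Gamma(p,p^*)$ whose push-forward under $g$ is optimal for $W_2(q,q^*)$, i.e.\ $\mathbb{E}_\zeta[\lv x-x'\rv_2^2+\lv(x-x')+(v-v')\rv_2^2]=W_2^2(q,q^*)$. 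Young's inequality gives $\lv x-x^*\rv_2^2\le 2\lv x-x'\rv_2^2+2\lv x'-x^*\rv_2^2$, and dropping the nonnegative second summand in the defining identity of $\zeta$ yields $\mathbb{E}_\zeta[\lv x-x'\rv_2^2]\le W_2^2(q,q^*)$. Taking expectations,
\[
\mathbb{E}_{p}\!\left[\lv x-x^*\rv_2^2\right]\le 2\,W_2^2(q,q^*)+2\,\mathbb{E}_{p^*}\!\left[\lv x-x^*\rv_2^2\right].
\]

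Next I would plug in the two quantities that are already controlled. Since $p^*\propto\exp(-(f(x)+\frac{M}{2}\lv v\rv_2^2))$ and $f$ is $m$-strongly convex, $\mathbb{E}_{p^*}[\lv x-x^*\rv_2^2]\le d/m$ --- precisely the bound invoked in Step 3 of Lemma \ref{l:kineticenergyisbounded}. For the Wasserstein term, Step 3 of that same lemma shows $W_2^2(q^{(0)},q^*)\le 4d/m$, and the induction in its Step 4 --- using the continuous-time contraction of Corollary \ref{c:exactconvergence} together with the single-step estimate \eqref{e:singlestepdiscreteimprovement} --- gives $W_2^2(\Phi_t q^{(i)},q^*)\le W_2^2(q^{(i)},q^*)\le 4d/m$ for all $i\in\{0,\ldots,T\}$ and all $t\in[0,\delta]$. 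Substituting with $p=\Phi_t p^{(i)}$ and $q=\Phi_t q^{(i)}$ into the display gives $\mathbb{E}_{x\sim\Phi_t p^{(i)}}[\lv x-x^*\rv_2^2]\le 2\cdot 4d/m+2d/m=10d/m$, the claimed bound.

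The one genuine subtlety --- and the main obstacle --- is circularity: the bound $W_2^2(q^{(i)},q^*)\le 4d/m$ is not automatic, because controlling it through \eqref{e:singlestepdiscreteimprovement} requires the per-step discretization error, which by Theorem \ref{t:discretizationerror} depends on both $\ke$ and on $\mathbb{E}_{x\sim p^{(i)}}[\lv x-x^*\rv_2^2]$ --- the very quantity this lemma bounds. So strictly speaking this lemma, Lemma \ref{l:kineticenergyisbounded}, and the monotonicity $W_2(q^{(i+1)},q^*)\le W_2(q^{(i)},q^*)$ of the discrete chain must be established simultaneously by a single induction on $i$, the parameter choices in Theorem \ref{thm:controlvariatethm} (the values of $\delta$ and $n$) being exactly what is needed so that, whenever the two moment bounds hold at step $i$, the discretization error is dominated by $(1-e^{-m\delta/2})W_2(q^{(i)},q^*)$ and the bound propagates to step $i+1$. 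Since that joint induction is already carried out in the proof of Lemma \ref{l:kineticenergyisbounded}, the present lemma adds nothing beyond reading off the displayed decomposition at each step.
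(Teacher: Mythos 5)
Your proposal is correct and follows essentially the same route as the paper: the identical decomposition $\mathbb{E}_{p}[\lv x-x^*\rv_2^2]\le 2W_2^2(q,q^*)+2\mathbb{E}_{p^*}[\lv x-x^*\rv_2^2]$ via an optimal coupling for $q$, the bound $\mathbb{E}_{p^*}[\lv x-x^*\rv_2^2]\le d/m$ from Theorem \ref{t:xvariance}, the initial estimate $W_2^2(q^{(0)},q^*)\le 4d/m$, and monotonicity of $W_2(q^{(i)},q^*)$ along the chain to conclude $2\cdot 4d/m+2\cdot d/m=10d/m$. Your observation about the apparent circularity with Theorem \ref{thm:controlvariatethm} and its resolution by a joint induction is a fair and useful remark on a point the paper's Step 4 passes over tersely, but it does not change the argument, which coincides with the paper's.
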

\begin{proof}We first establish an inequality that provides an upper bound on the kinetic energy for any distribution $p$.
\textbf{Step 1}:  Let $p$ be any distribution over $(x,v)$, and let $q$ be the corresponding distribution over $(x,x+v)$. Let $(x',v')$ be random variables with distribution $p^*$. Further let $\zeta \in \Gamma_{opt}(p,p^*)$ such that,
 $$\mathbb{E}_{\zeta} \left[\lv x-x'\rv_2^2 + \lv (x-x')+(v-v')\rv_2^2 \right] = W_2^2(q,q^*).$$ Then we have,
\begin{align}
\nonumber\Ep{x\sim p}{\|x-x^*\|_2^2} & = \mathbb{E}_{\zeta}\left[\lv x- x' + x' - x^*\rv^2_2\right]\\
\nonumber & \le 2\Ep{x' \sim p^*}{\|x'-x^* \|_2^2} + 2  \mathbb{E}_{\zeta}\left[ \lv x- x' \rv^2_2\right]\\
& = 2\Ep{x \sim p^*}{\|x-x^* \|_2^2} + 2 W_2^2 (q,q^*), \label{e:kineticlessthanwasserstein2}
\end{align}
where for the second and the third inequality we have used Young's inequality, while the final line follows by optimality of $\zeta$.

\textbf{Step 2}: We know by Theorem \ref{t:xvariance} that $\Ep{x \sim p^*}{\|x-x^*\|_2^2} \le  d/m$.

\textbf{Step 3}: For our initial distribution $p_0 (q_0)$ we have the bound
\begin{align*}
W_{2}^{2}(q_0,q^*) & \le 2 \mathbb{E}_{p^*} \left[ \lv v \rv_2^2\right] + 2 \mathbb{E}_{x \sim p^{(0)}, x' \sim p^*} \left[\lv x-x' \rv_2^2 \right]\\
& = \frac{2d}{M} + 2 \mathbb{E}_{x\sim p^*} \left[ \lv x - x^* \rv_2^2\right],\\
& \le \frac{2d}{M} + \frac{2d}{m} \le \frac{4d}{m}.
\end{align*}
where the first inequality is an application of Young's inequality, the equality in the second line follows as $p^*(v) \propto \exp(-M\lv v \rv_2^2/2)$ and the second inequality follows by again applying the bound from Theorem \ref{t:xvariance}. Combining these we have the bound,
Putting all this together along with \eqref{e:kineticlessthanwasserstein2} we have
\begin{align*}
\mathbb{E}_{x\sim p_0}\left[ \lv x-x^* \rv_2^2\right] & \le \frac{2d}{m} + \frac{8 d}{m}\le 10\frac{ d}{m}.
\end{align*}
\textbf{Step 4}:
By Corollary \ref{c:exactconvergence}, we know that $\forall t > 0$,
\begin{align*}
W_2^2 (\Phi_t q^{(i)}, q^*)\leq W_2^2 (q^{(i)}, q^*).
\end{align*}
This proves the theorem statement for $i = 0$. We will now prove it for $i > 0$ via induction. We have proved it for the base case $i=0$, let us assume that the result holds for some $\ell \in \{1,\ldots,T \}$. Then by equation Theorem \ref{thm:controlvariatethm} applied upto $\ell$ steps, we know that
$$W_2^2 (q^{(\ell+1)},q^*) = W_2^2 (\Phit_\d q^{(\ell)},q^*)\leq W_2^2(q^{(\ell)},q^*). $$
Thus by \eqref{e:kineticlessthanwasserstein2} we have,
$$\Ep{\Phi_{t} p^{(\ell)}}{\|v\|_2^2}\leq \frac{10d}{m},$$
for all $t>0$ and $\ell \in \{0,1,\ldots,T \}$.
\end{proof}
Next we prove that the distance of the initial distribution $p^{(0)}$ to the optimum distribution $p^*$ is bounded.
\begin{lemma}\label{lem:initialdistancebound} Let $p^{(0)}(x,v) =1_{x=x^*} \cdot 1_{v=0}$--- the Dirac delta distribution at $(x^{*},0)$. Then
\begin{align*}
W_2^2(p^{(0)},p^*) \le 2\frac{d}{m}.
\end{align*}
\end{lemma}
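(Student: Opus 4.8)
The key observation is that when one of the two measures is a Dirac mass, there is only one transference plan: if $p^{(0)} = \delta_{(x^*,0)}$, then $\Gamma(p^{(0)},p^*) = \{\delta_{(x^*,0)} \otimes p^*\}$, so the infimum in the definition of $W_2$ is actually attained trivially and
\[
W_2^2(p^{(0)},p^*) = \mathbb{E}_{(x,v)\sim p^*}\!\left[\lv x - x^* \rv_2^2 + \lv v - 0 \rv_2^2\right].
\]
So the whole task reduces to bounding the second moments of $p^*$ about the point $(x^*,0)$.

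Next I would use that $p^*$ factorizes: $p^*(x,v)\propto \exp(-f(x))\cdot\exp(-\tfrac{M}{2}\lv v\rv_2^2)$, so the $v$-marginal is exactly $\mathcal{N}(0,\tfrac{1}{M}I_{d})$, giving $\mathbb{E}_{p^*}[\lv v\rv_2^2] = d/M$ (this is the same computation already used in Step 2 of the proof of Lemma \ref{l:kineticenergyisbounded}). For the $x$-marginal $p^*(x)\propto\exp(-f(x))$, which is $m$-strongly log-concave, the variance bound $\mathbb{E}_{x\sim p^*}[\lv x - x^*\rv_2^2]\le d/m$ is exactly Theorem \ref{t:xvariance} (invoked already in Step 2 of the proof of Lemma \ref{l:varianceboundx}); here $x^*=\argmin f$ is the mode of $p^*(x)$, so this is the natural centering.

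Combining the two bounds gives $W_2^2(p^{(0)},p^*)\le d/m + d/M$, and since $f$ is simultaneously $M$-smooth and $m$-strongly convex we have $M\ge m$, hence $d/M\le d/m$ and the claimed bound $W_2^2(p^{(0)},p^*)\le 2d/m$ follows. There is essentially no obstacle here: the only thing to be careful about is recognizing that no genuine optimization over couplings is needed (a Dirac leaves no freedom), and that the appropriate centering point $(x^*,0)$ coincides with the mode of the factorized target $p^*$, which is precisely what makes the strong-convexity variance bound applicable term by term.
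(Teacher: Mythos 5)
Your proposal is correct and follows essentially the same route as the paper's own proof: the Dirac initial distribution leaves a unique coupling, so $W_2^2(p^{(0)},p^*)=\mathbb{E}_{(x,v)\sim p^*}\bigl[\lv x-x^*\rv_2^2+\lv v\rv_2^2\bigr]$, the $v$-marginal contributes $d/M$ since $p^*(v)\propto\exp(-M\lv v\rv_2^2/2)$, the $x$-marginal is bounded by $d/m$ via Theorem \ref{t:xvariance}, and $d/M\le d/m$ gives the claimed $2d/m$. No gaps; your explicit remark that $M\ge m$ justifies the final step is a detail the paper leaves implicit.
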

\begin{proof} As $p^{(0)}(x,v)$ is a delta distribution, there is only one valid coupling between $p^{(0)}$ and $p^*$. Thus we have
\begin{align*}
W_2^2(p^{(0)},p^*) & = \mathbb{E}_{(x,v) \sim p^*}\left[\lv x-x^{*}\rv_2^2 + \lv v \rv_2^2 \right] \\
& \le \mathbb{E}_{x \sim p^*(x)}\left[\lv x-x^*\rv_2^2 \right] + \mathbb{E}_{v \sim p^*(v)}\left[\lv v \rv_2^2 \right].
\end{align*}
Note that $p^*(v) \propto \exp(-M\lv v\rv_2^2/2)$, therefore $\mathbb{E}_{v \sim p^*(v)}\left[\lv v \rv_2^2 \right] = d/M$. By invoking Theorem \ref{t:xvariance} the first term $\mathbb{E}_{x \sim p^*(x)}\left[\lv x-x^*\rv_2^2 \right]$ is bounded by $d/m$. Putting this together we have,
\begin{align*}
W_2^2(p^{(0)},p^*) & \le \frac{d}{m} + \frac{d}{M} \le 2\frac{d}{m}.
\end{align*}
\end{proof}
Next we calculate integral representations of the solutions to the continuous-time process \eqref{e:exactlangevindiffusion} and the discrete-time process \eqref{e:discretelangevindiffusion}.
\begin{lemma}\label{l:explicitform}
The solution $(x_t,v_t)$ to the underdamped Langevin diffusion \eqref{e:exactlangevindiffusion} is
\begin{align*}
\numberthis \label{e:vdynamics}
v_t &= v_0 e^{-\gamma t} - u \left(\int_0^t e^{-\gamma(t-s)} \nabla f(x_s) ds \right) + \sqrt{2\gamma u} \int_0^t e^{-\gamma (t-s)} dB_s\\
x_t &= x_0 + \int_0^t v_s ds.
\end{align*}
The solution $(\tilde{x}_t,\tilde{v}_t)$ of the discrete underdamped Langevin diffusion \eqref{e:discretelangevindiffusion} is
\begin{align*}
\numberthis \label{e:vtildedynamics}
\vt_t &= \vt_0 e^{-\gamma t} - u \left(\int_0^t e^{-\gamma(t-s)} \nabla \tilde{f}(\xt_0) ds \right) + \sqrt{2\gamma u} \int_0^t e^{-\gamma (t-s)} dB_s\\
\xt_t &= \xt_0 + \int_0^t \vt_s ds.
\end{align*}
\end{lemma}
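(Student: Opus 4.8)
The plan is to solve the linear SDE \eqref{e:exactlangevindiffusion} by the classical integrating-factor (variation-of-parameters) method used for the scalar Ornstein--Uhlenbeck process, and then to read off the discrete dynamics \eqref{e:discretelangevindiffusion} as the special case in which the drift $\nabla f(x_s)$ is frozen at its initial value $\nabla\tilde f(\xt_0)$. First I would note that, since $f$ has Lipschitz gradient (Assumption \ref{ass:smoothness}), the coefficients of \eqref{e:exactlangevindiffusion} are globally Lipschitz in $(x,v)$, so the system has a unique strong solution; it is therefore enough either to derive the claimed formula directly or to verify that the stated process solves the SDE with the prescribed initial condition.

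Second, I would apply It\^o's formula to the process $t\mapsto e^{\gamma t} v_t$. Because the map $(t,v)\mapsto e^{\gamma t} v$ is affine in $v$, the second-order term vanishes and, using the $v$-dynamics of \eqref{e:exactlangevindiffusion},
\begin{align*}
d\!\left(e^{\gamma t} v_t\right) = \gamma e^{\gamma t} v_t\, dt + e^{\gamma t}\, dv_t = -u\, e^{\gamma t}\,\nabla f(x_t)\, dt + \sqrt{2\gamma u}\, e^{\gamma t}\, dB_t .
\end{align*}
Integrating this identity from $0$ to $t$ and multiplying through by $e^{-\gamma t}$ gives
\begin{align*}
v_t = v_0 e^{-\gamma t} - u\int_0^t e^{-\gamma(t-s)}\nabla f(x_s)\, ds + \sqrt{2\gamma u}\int_0^t e^{-\gamma(t-s)}\, dB_s ,
\end{align*}
which is \eqref{e:vdynamics}; the representation $x_t = x_0 + \int_0^t v_s\, ds$ is immediate from $dx_t = v_t\, dt$. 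For the discretized process \eqref{e:discretelangevindiffusion} the identical computation applies with $\nabla f(x_s)$ replaced throughout by the time-independent vector $\nabla\tilde f(\xt_0)$, yielding \eqref{e:vtildedynamics} together with $\xt_t = \xt_0 + \int_0^t \vt_s\, ds$.

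There is no genuinely hard step here; the statement is essentially bookkeeping for a linear SDE. The only points needing a little care are that the stochastic integral $\int_0^t e^{-\gamma(t-s)}\, dB_s$ is well defined (its integrand is deterministic and bounded on $[0,t]$, so this is immediate) and that the initial conditions match --- at $t=0$ the right-hand side of \eqref{e:vdynamics} collapses to $v_0$, and likewise \eqref{e:vtildedynamics} to $\vt_0$. One may optionally close the loop by differentiating the derived expressions back, using the Leibniz rule for the Lebesgue integral and It\^o's rule for the stochastic integral, to recover \eqref{e:exactlangevindiffusion} and \eqref{e:discretelangevindiffusion}; combined with the uniqueness of strong solutions noted above, this confirms the lemma.
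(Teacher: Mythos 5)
Your proof is correct. The paper's own proof is a one-line verification: it checks that the stated expressions have the right initial values and then differentiates them to confirm they satisfy \eqref{e:exactlangevindiffusion} and \eqref{e:discretelangevindiffusion}. You instead derive the formulas forward, applying It\^o's formula to $e^{\gamma t}v_t$ (the integrating-factor / variation-of-parameters computation) and integrating, and you justify calling the result \emph{the} solution by noting that the Lipschitz gradient assumption gives pathwise uniqueness of strong solutions --- a point the paper leaves implicit; you also mention the paper's verification route as an optional closing check. The two arguments are the same elementary computation read in opposite directions: the paper's version is shorter but presupposes the formula has been guessed, while yours produces the formula and makes the uniqueness step explicit. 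Both treat the discrete process \eqref{e:discretelangevindiffusion} identically, by freezing the drift at the constant vector $\nabla\tilde{f}(\tilde{x}_0)$, which is exactly how the paper handles it as well.
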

\begin{proof}
It can be easily verified that the above expressions have the correct initial values $(x_0,v_0)$ and $(\xt_0,\vt_0)$. By taking derivatives, one also verifies that they satisfy the differential equations in \eqref{e:exactlangevindiffusion} and \eqref{e:discretelangevindiffusion}.
\end{proof}

\section{Technical Results}
We state this Theorem from \cite{durmus2016sampling} used in the proof of Lemma \ref{l:kineticenergyisbounded}.
\begin{theorem}[\cite{durmus2016sampling}, Theorem 1] \label{t:xvariance} For all $t\ge 0$ and $x\in \mathbb{R}^d$,
\begin{align*}
\mathbb{E}_{x\sim p^*}\left[ \lVert x - x^* \rVert_2^2 \right] \le \frac{d}{m}.
\end{align*}
\end{theorem}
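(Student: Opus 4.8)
The plan is to combine an integration-by-parts identity for the Gibbs measure $p^*(x)\propto e^{-f(x)}$ with the $m$-strong convexity of $f$. Note first that the right-hand side $d/m$ depends on neither $t$ nor $x$, so the statement amounts to the single bound $\mathbb{E}_{x\sim p^*}\left[\lVert x-x^*\rVert_2^2\right]\le d/m$, where $x^*=\argmin_{x}f(x)$, so that $\nabla f(x^*)=0$. Throughout, write $Z=\int_{\mathbb{R}^d}e^{-f(x)}\,dx$, which is finite under \ref{ass:strongconvexity}.

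The first step is to establish the identity $\mathbb{E}_{x\sim p^*}\left[\langle \nabla f(x),\,x-x^*\rangle\right]=d$. Using $\nabla e^{-f(x)}=-\nabla f(x)\,e^{-f(x)}$ and integrating by parts against the vector field $x\mapsto x-x^*$ (whose divergence is $d$),
\begin{align*}
\int_{\mathbb{R}^d}\langle \nabla f(x),\,x-x^*\rangle\,e^{-f(x)}\,dx
&=-\int_{\mathbb{R}^d}\langle x-x^*,\,\nabla e^{-f(x)}\rangle\,dx\\
&=\int_{\mathbb{R}^d}\big(\nabla\cdot(x-x^*)\big)\,e^{-f(x)}\,dx
= d\int_{\mathbb{R}^d}e^{-f(x)}\,dx = dZ,
\end{align*}
and dividing by $Z$ gives the identity. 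The boundary terms in the integration by parts vanish because $m$-strong convexity gives $f(x)\ge f(x^*)+\tfrac{m}{2}\lVert x-x^*\rVert_2^2$, so $e^{-f(x)}$ and $\lVert x-x^*\rVert_2\,e^{-f(x)}$ decay at least at a Gaussian rate as $\lVert x\rVert_2\to\infty$; making this step rigorous (e.g.\ carrying out the computation on a ball of radius $R$ and letting $R\to\infty$) is the only point requiring care, and it is routine.

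The second step is to invoke strong convexity pointwise: for every $x$, $\langle \nabla f(x)-\nabla f(x^*),\,x-x^*\rangle\ge m\lVert x-x^*\rVert_2^2$, and since $\nabla f(x^*)=0$ this reads $\langle \nabla f(x),\,x-x^*\rangle\ge m\lVert x-x^*\rVert_2^2$. Taking expectations under $p^*$ and using the identity from the previous step,
\[
d=\mathbb{E}_{x\sim p^*}\left[\langle \nabla f(x),\,x-x^*\rangle\right]\ge m\,\mathbb{E}_{x\sim p^*}\left[\lVert x-x^*\rVert_2^2\right],
\]
and dividing by $m$ yields $\mathbb{E}_{x\sim p^*}\left[\lVert x-x^*\rVert_2^2\right]\le d/m$, which is the claim. (An alternative route, avoiding integration by parts, is to note that $-\log p^*$ is $m$-strongly convex and apply a Poincar\'e/Bakry--\'Emery-type concentration inequality for the variance of $p^*$; but the argument above is the most direct.)
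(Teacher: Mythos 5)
Your proof is correct. Note, however, that the paper itself does not prove this statement at all: it is imported verbatim from \cite{durmus2016sampling} (which also explains the otherwise vestigial ``for all $t\ge 0$ and $x\in\mathbb{R}^d$'' in the statement --- in the original, the theorem is about the Langevin diffusion $X_t$ started at $x$, with the bound on $p^*$ obtained as the $t\to\infty$ consequence). The cited proof is dynamical: apply It\^{o}'s formula to $\lVert X_t-x^*\rVert_2^2$ along \eqref{eq:contlangevindynamics}, use $\langle \nabla f(X_t),X_t-x^*\rangle \ge m\lVert X_t-x^*\rVert_2^2$ and Gr\"onwall to get $\mathbb{E}_x\lVert X_t-x^*\rVert_2^2 \le e^{-2mt}\lVert x-x^*\rVert_2^2+\tfrac{d}{m}(1-e^{-2mt})$, then invoke stationarity of $p^*$. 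Your argument replaces the diffusion with the static Stein-type identity $\mathbb{E}_{p^*}\left[\langle\nabla f(x),x-x^*\rangle\right]=d$ obtained by integration by parts, and then applies exactly the same strong-convexity inequality $\langle\nabla f(x),x-x^*\rangle\ge m\lVert x-x^*\rVert_2^2$ (using $\nabla f(x^*)=0$). Both routes are sound; yours is self-contained and avoids stochastic calculus, while the dynamical route additionally delivers the finite-$t$ contraction estimate that the original theorem also asserts. Your handling of the boundary terms via the Gaussian-rate decay $e^{-f(x)}\le e^{-f(x^*)}e^{-\tfrac{m}{2}\lVert x-x^*\rVert_2^2}$ is adequate; for completeness you could also remark that \ref{ass:smoothness} gives $\lVert\nabla f(x)\rVert_2\le M\lVert x-x^*\rVert_2$, so all integrands appearing in the identity are indeed integrable under $p^*$, but this is routine.
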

%
%
%

\end{document}